\documentclass{article}
\usepackage{ifthen}
\usepackage{mdwlist}
\usepackage{amsmath,amssymb,amsfonts,amsthm}
\usepackage{bm}
\usepackage{mathtools}
\usepackage{url}
\usepackage{hyperref}
\usepackage{enumitem}
\usepackage{graphicx}
\usepackage{xspace}
\usepackage{verbatim}
\usepackage[margin=1in]{geometry}
\usepackage{color}
\usepackage{latexsym}
\usepackage{epsfig}
\usepackage{bm}
\usepackage[noend]{algpseudocode}
\usepackage{algorithm}
\usepackage{etoolbox}
\usepackage{tikz}
\usepackage{pgfplots}
\pgfplotsset{compat=1.7}
\usepackage{hyperref}
\usepackage{subfigure}



\def\eps{\epsilon}
\def\ve{\varepsilon}

\def\to{\rightarrow}

\newcommand{\prob}[2][]{\text{\bf Pr}\ifthenelse{\not\equal{}{#1}}{_{#1}}{}\!\left[#2\right]}
\newcommand{\expect}[2][]{\text{\bf E}\ifthenelse{\not\equal{}{#1}}{_{#1}}{}\!\left[#2\right]}



\newcommand{\wh}[1]{{\widehat{#1}}}

\newtheorem{theorem}{Theorem}[section]
\newtheorem{lemma}[theorem]{Lemma}

\newtheorem{proposition}[theorem]{Proposition}
\newtheorem{corollary}[theorem]{Corollary}
\newtheorem{claim}[theorem]{Claim}
\newtheorem{definition}[theorem]{Definition}



\newtheorem{fact}[theorem]{Fact}

\newcommand{\ignore}[1]{}

\newcommand{\bg}[1]{\medskip\noindent{\bf #1}}


\definecolor{Red}{rgb}{1,0,0}

\newcommand{\oldbound}[1]{{}}

\newcommand{\Sigmahat}{\widehat{\Sigma}}

\newcommand{\Cov}{\mbox{Cov}}

\renewcommand{\epsilon}{\varepsilon}
\newcommand{\tr}{\mathrm{tr}}

\DeclareMathOperator{\R}{\mathbb{R}}

\DeclareMathOperator*{\Var}{Var}

\DeclareMathOperator*{\E}{\mathbb{E}}

\DeclareMathOperator{\poly}{poly}

\DeclareMathOperator{\rank}{rank}

\DeclareMathOperator{\normal}{\mathcal{N}}

\renewcommand{\[ }{\begin{eqnarray*}}
\renewcommand{\]}{\end{eqnarray*}}


\definecolor{darkpastelred}{rgb}{0.76, 0.23, 0.13}

\algnewcommand\INPUT{\item[{\textbf {input:}}]}
\algnewcommand\OUTPUT{\item[{\textbf{output:}}]}

\def\colorful{1}

\ifnum\colorful=0
\newcommand{\new}[1]{{\color{red} #1}}

\else
\newcommand{\new}[1]{{#1}}

\fi

\def\E{\mathbb{E}}

\newcommand{\eqdef}{\stackrel{{\mathrm {\footnotesize def}}}{=}}

\newcommand{\thres}{\mathrm{Thres}}
\newcommand{\tail}{\mathrm{Tail}}



\usepackage{ifthen}
\usepackage{mdwlist}
\usepackage{amsmath,amssymb,amsfonts,amsthm}
\usepackage{bm}
\usepackage{hyperref}
\usepackage{enumitem}
\usepackage{graphicx}
\usepackage{xspace}
\usepackage{verbatim}
\usepackage{algorithm}
\usepackage[margin=1in]{geometry}
\usepackage{color}
\usepackage{thm-restate}
\usepackage{latexsym}
\usepackage{epsfig}
\usepackage[symbol]{footmisc}

\title{Being Robust (in High Dimensions) Can Be Practical\footnote{A version of this paper appeared in ICML 2017~\cite{DiakonikolasKKLMS17}.}}

\author {
Ilias Diakonikolas\thanks{Supported by NSF  CAREER Award CCF-1652862, a Sloan Research Fellowship, and a Google Faculty Research Award.} \\
CS, USC \\
\tt{diakonik@usc.edu}
\and
Gautam Kamath\thanks{Supported by NSF CCF-1551875, CCF-1617730, CCF-1650733, and ONR N00014-12-1-0999.} \\
EECS \& CSAIL, MIT \\
\tt{g@csail.mit.edu}
\and
Daniel M. Kane\thanks{Supported by NSF  CAREER Award CCF-1553288 and a Sloan Research Fellowship.} \\
CSE \& Math, UCSD \\
\tt{dakane@cs.ucsd.edu}
\and
Jerry Li \thanks{Supported by NSF CAREER Award CCF-1453261, a Google Faculty Research Award, and an NSF Fellowship.}\\
EECS \& CSAIL, MIT \\
\tt{jerryzli@mit.edu}
\and
Ankur Moitra\thanks{Supported by NSF CAREER Award CCF-1453261, a grant from the MIT NEC Corporation, and a Google Faculty Research Award.} \\
Math \& CSAIL, MIT \\
\tt{moitra@mit.edu}
\and
Alistair Stewart\thanks{Research supported by a USC startup grant.}\\
CS, USC \\
\tt{alistais@usc.edu}
}

\begin{document}
\maketitle \footnotetext{Authors are in alphabetical order.}
\footnotetext{Code of our implementation is available at \url{https://github.com/hoonose/robust-filter}.}

\begin{abstract} 
Robust estimation is much more challenging in high dimensions than it is in one dimension: 
Most techniques either lead to intractable optimization problems or estimators 
that can tolerate only a tiny fraction of errors. Recent work in theoretical computer science has shown that, 
in appropriate distributional models, it is possible to robustly estimate the mean and covariance with polynomial time 
algorithms that can tolerate a constant fraction of corruptions, independent of the dimension. 
However, the sample and time complexity of these algorithms is prohibitively large for high-dimensional applications. 
In this work, we address both of these issues by establishing sample complexity bounds that are optimal, up to logarithmic factors, 
as well as giving various refinements that allow the algorithms to tolerate a much larger fraction of corruptions. 
Finally, we show on both synthetic and real data that our algorithms have state-of-the-art performance and suddenly make 
high-dimensional robust estimation a realistic possibility.  
\end{abstract} 

\section{Introduction}


Robust statistics was founded in the seminal works of \cite{tukey1960} and \cite{huber1964}.
The overarching motto is that any model (especially a parametric one) is only approximately valid, 
and that any estimator designed for a particular distribution that is to be used in practice 
must also be stable in the presence of model misspecification. The standard setup is to assume 
that the samples we are given come from a nice distribution, but that an adversary has the 
power to arbitrarily corrupt a constant fraction of the observed data. After several decades of work, 
the robust statistics community has discovered a myriad of estimators that are provably robust. 
An important feature of this line of work is that it can tolerate a constant fraction of 
corruptions {\em independent of the dimension} 
and that there are estimators for both the location (e.g., the mean) and scale (e.g., the covariance). 
See \cite{Huber09} and \cite{HampelEtalBook86} for further background. 

It turns out that there are vast gaps in our understanding of robustness, 
when computational considerations are taken into account. In one dimension, robustness and computational efficiency 
are in perfect harmony. The empirical mean and empirical variance are not robust, because a single corruption 
can arbitrarily bias these estimates, but alternatives such as the median and the interquartile range 
are straightforward to compute and are provably robust.

But in high dimensions, there is a striking tension between robustness and computational efficiency. 
Let us consider estimators for location. The Tukey median \cite{tukey1960} is a natural generalization 
of the one-dimensional median to high-dimensions. It is known that it behaves well (i.e., it needs few samples) 
when estimating the mean for various symmetric distributions \cite{Donoho92, CGR15b}. However, it is hard 
to compute in general \cite{JP:78, AmaldiKann:95} and the many heuristics for computing it degrade 
badly in the quality of their approximation as the dimension scales \cite{Clarkson93, Chan04, MillerS10}. 
The same issues plague estimators for scale. The minimum volume ellipsoid~\cite{Rous85} is a 
natural generalization of the one-dimensional interquartile range and is provably robust in high-dimensions, 
but is also hard to compute. And once again, heuristics for computing it \cite{WICS09, Rousseeuw98} work poorly in high dimensions.

The fact that robustness in high dimensions seems to come at such a steep price has long been a point of 
consternation within robust statistics. In a 1997 retrospective on the development of robust statistics~\cite{Huber97}, Huber laments:

\begin{quote}
``It is one thing to design a theoretical algorithm whose purpose is to prove [large fractions of corruptions can be tolerated] and quite another thing 
to design a practical version that can be used not merely on small, 
but also on medium sized regression problems, with a $2000$ by $50$ matrix or so. 
This last requirement would seem to exclude all of the recently proposed [techniques]."
\end{quote}

\noindent The goal of this paper is to answer Huber's call to action and design estimators 
for both the mean and covariance that are highly practical, provably robust, and work in high-dimensions. 
Such estimators make the promise of robust statistics \--- estimators that work in high-dimensions 
and guarantee that their output has not been heavily biased by some small set of noisy samples \--- much closer to a reality. 

First, we make some remarks to dispel some common misconceptions. There has been a considerable amount 
of recent work on robust principal component analysis, much of it making use of semidefinite programming. 
Some of these works can tolerate a constant fraction of corruptions \cite{CLMW11}, however require that the locations 
of the corruptions are evenly spread throughout the dataset 
so that no individual sample is entirely corrupted. In contrast, the usual models in robust statistics 
are quite rigid in what they require and they do this for good reason. A common scenario that is used 
to motivate robust statistical methods is if two studies are mixed together, and one subpopulation does not fit the model. 
Then one wants estimators that work without assuming anything at all about these outliers. 

There have also been semidefinite programming methods proposed for robust principal component analysis with outliers \cite{xu2010robust}. 
These methods assume that the uncorrupted matrix is rank $r$ and that the fraction of outliers is at most $1/r$, 
which again degrades badly as the rank of the matrix increases. Moreover, any method that uses semidefinite programming 
will have difficulty scaling to the sizes of the problems we consider here. For sake of comparison \--- even with state-of-the-art interior point methods \--- 
it is not currently feasible to solve the types of semidefinite programs that have been proposed when the matrices have dimension larger than a hundred. 

\subsection{Robustness in a Generative Model}

Recent works in theoretical computer science have sought to circumvent 
the usual difficulties of designing efficient and robust algorithms by instead working in a generative model. 
The starting point for our paper is the work of \cite{DKKLMS} who gave an efficient algorithm 
for the problem of {\em agnostically learning a Gaussian}:

\begin{quote}
Given a polynomial number of samples from a high-dimensional Gaussian $\mathcal{N}(\mu, \Sigma)$, 
where an adversary has arbitrarily corrupted an $\eps$-fraction, find a set of parameters $\mathcal{N}'(\wh{\mu}, \wh{\Sigma})$ 
that satisfy $d_{TV}(\mathcal{N}, \mathcal{N}') \leq \widetilde{O}(\eps)$\footnote{We use the notation $\tilde O(\cdot)$ to hide factors which are polylogarithmic in the argument -- in particular, we note that this bound does not depend on the dimension.}. 
\end{quote}

Total variation distance is the natural metric to use to measure closeness of the parameters, 
since a $(1-\epsilon)$-fraction of the observed samples came from a Gaussian. 
\cite{DKKLMS} gave an algorithm for the above problem (note that the guarantees are dimension independent), 
whose running time and sample complexity are polynomial in the dimension $d$ and $1/\eps$. 
\cite{LaiRV16} independently gave an algorithm for the unknown mean case 
that achieves $d_{TV}(\mathcal{N}, \mathcal{N}') \leq \widetilde{O}(\epsilon \sqrt{\log d})$, 
and in the unknown covariance case achieves guarantees in a weaker metric that is not affine invariant. 
A crucial feature is that both algorithms work even when the moments of the underlying distribution 
satisfy certain conditions, and thus are not necessarily brittle to the modeling assumption 
that the inliers come from a Gaussian distribution. 

A more conceptual way to view such work is as a proof-of-concept that the Tukey median 
and minimum volume ellipsoid can be computed efficiently {\em in a natural family of distributional models}. 
This follows because not only would these be good estimates for the mean and covariance in the above model, 
but in fact any estimates that are good must also be close to them. 
Thus, these works fit into the emerging research direction of circumventing worst-case lower bounds 
by going {\em beyond worst-case analysis}. 

Since the dissemination of the aforementioned works \cite{DKKLMS, LaiRV16}, 
there has been a flurry of research activity on computationally efficient robust estimation 
in a variety of high-dimensional settings~\cite{DiakonikolasKS16b, DiakonikolasKS16c, CharikarSV16, DiakonikolasKKLMS17, Li17, DBS17, BalakrishnanDLS17, SteinhardtCV17, DiakonikolasKKLMS18}, 
including studying graphical distributional models~\cite{DiakonikolasKS16b}, 
understanding the computation-robustness tradeoff for statistical query algorithms~\cite{DiakonikolasKS16c}, tolerating much more noise 
by allowing the algorithm to output a list of candidate hypotheses~\cite{CharikarSV16}, and developing robust algorithms 
under sparsity assumptions~\cite{Li17, DBS17, BalakrishnanDLS17}, where the number of samples is sublinear in the dimension.

\subsection{Our Results} \label{ssec:our-results}

Our goal in this work is to show that high-dimensional robust estimation can be highly practical. 
However, there are two major obstacles to achieving this. First, the sample complexity and running time 
of the algorithms in \cite{DKKLMS} is prohibitively large for high-dimensional applications. 
We just would not be able to store as many samples as we would need, in order to compute accurate estimates, 
in high-dimensional applications. 

Our first main contribution is to show essentially tight bounds on the sample complexity 
of the filtering based algorithm of \cite{DKKLMS}. Roughly speaking, we accomplish this with a new definition 
of the {\em good set} which plugs into the existing analysis in a straightforward manner 
and shows that it is possible to estimate the mean with $\widetilde{O}(d/\epsilon^2)$ samples (when the covariance is known) 
and the covariance with $\widetilde{O}(d^2/\epsilon^2)$ samples. 
Both of these bounds are information-theoretically optimal, up to logarithmic factors. 

Our second main contribution is to vastly improve the fraction of adversarial corruptions that can be tolerated in applications. 
The fraction of errors that the algorithms of \cite{DKKLMS} can tolerate is indeed a constant that is independent of the dimension, 
but it is very small both in theory and in practice. This is due to the fact that many of the steps in the algorithm are overly conservative. 
In fact, we found that a naive implementation of the algorithm did not remove {\em any} outliers in many realistic scenarios. 
We combat this by giving new ways to empirically tune the threshold for where to remove points from the sample set. 
These optimizations dramatically improve the empirical performance. 

Finally, we show that the same bounds on the error guarantee continue to work even when the underlying distribution is sub-Gaussian. 
This theoretically confirms that the robustness guarantees of such algorithms are in fact not overly brittle to the distributional assumptions. 
In fact, the filtering algorithm of \cite{DKKLMS} is easily shown to be robust under much weaker distributional assumptions, while retaining 
near-optimal sample and error guarantees. As an example, we show that it yields a near sample-optimal efficient estimator
for robustly estimating the mean of a distribution, under the assumption that its covariance is bounded. Even in this regime,
the filtering algorithm guarantees optimal error, up to a constant factor.
Furthermore we empirically corroborate this finding by showing that the algorithm works well on real world data, as we describe below. 

Now we come to the task of testing out our algorithms. To the best of our knowledge, there have been 
no experimental evaluations of the performance of the myriad of approaches to robust estimation. 
It remains mostly a mystery which ones perform well in high-dimensions, and which do not. 
To test out our algorithms, we design a synthetic experiment where a $(1-\epsilon)$-fraction of the samples 
come from a Gaussian and the rest are noise and sampled from another distribution (in many cases, Bernoulli). 
This gives us a baseline to compare how well various algorithms recover $\mu$ and $\Sigma$, 
and how their performance degrades based on the dimension. Our plots show a predictable and yet striking phenomenon: 
All earlier approaches have error rates that scale polynomially with the dimension and ours is a constant 
that is almost indistinguishable from the error that comes from sample noise alone. 
Moreover, our algorithms are able to scale to hundreds of dimensions.

But are algorithms for agnostically learning a Gaussian unduly sensitive to the distributional assumptions they make? 
We are able to give an intriguing visual demonstration of our techniques on real data. 
The famous study of \cite{novembre2008genes} showed that performing principal component analysis 
on a matrix of genetic data recovers a map of Europe. More precisely, the top two singular vectors 
define a projection into the plane and when the groups of individuals are color-coded with where they are from, 
we recover familiar country boundaries that corresponds to the map of Europe. The conclusion from their study 
was that {\em genes mirror geography}. Given that one of the most important applications of robust estimation 
ought to be in exploratory data analysis, we ask: To what extent can we recover the map of Europe in the presence of noise? 
We show that when a small number of corrupted samples are added to the dataset, the picture becomes entirely distorted 
(and this continues to hold even for many other methods that have been proposed). In contrast, when we run our algorithm, 
we are able to once again recover the map of Europe. Thus, even when some fraction of the data has been corrupted 
(e.g., medical studies were pooled together even though the subpopulations studied were different), 
it is still possible to perform principal component analysis 
and recover qualitatively similar conclusions as if there were no noise at all!

\section{Formal Framework} \label{sec:prelims}
 
\noindent {\bf Notation.} For a vector $v$, we will let $\| v \|_2$ denote its Euclidean norm.
If $M$ is a matrix, we will let $\| M \|_2$ denote its spectral norm and $\| M \|_F$ denote its Frobenius norm.
We will write $X \in_u S$ to denote that $X$ is drawn from the empirical distribution defined by $S$.

\medskip
 
\noindent {\bf Robust Estimation.} 
We consider the following powerful model of robust estimation
that generalizes many other existing models, including Huber's contamination model:
\begin{definition} \label{def:adv}
Given $\eps > 0$ and a distribution family $\mathcal{D}$, 
the \emph{adversary} operates as follows: The algorithm specifies some number of samples $m$.
The adversary generates $m$ samples $X_1, X_2, \ldots, X_m$ from some (unknown) $D \in \mathcal{D}$.
It then draws $m'$ from an appropriate distribution.
This distribution is allowed to depend on $X_1, X_2, \ldots, X_m$,
but when marginalized over the $m$ samples satisfies $m' \sim \mbox{Bin} (\eps, m)$.
The adversary is allowed to inspect the samples, removes $m'$ of them, 
and replaces them with arbitrary points. The set of $m$ points is then given to the algorithm.
\end{definition}

In summary, the adversary is allowed to inspect the samples before corrupting them,
both by adding corrupted points and deleting uncorrupted points. In contrast, in Huber's model
the adversary is oblivious to the samples and is only allowed to add corrupted points.

We remark that there are no computational restrictions on the adversary.
The goal is to return the parameters of a distribution $\widehat{D}$ in $\mathcal{D}$
that are close to the true parameters in an appropriate metric.
For the case of the mean, our metric will be the Euclidean distance.
For the covariance, we will use the Mahalanobis distance, i.e., $\| \Sigma^{-1/2} \wh{\Sigma} \Sigma^{-1/2} -I \|_F$.
This is a strong affine invariant distance that implies corresponding bounds in total variation distance.

We will use the following terminology:
\begin{definition}
We say that a set of samples is $\eps$-corrupted 
if it is generated by the process described in Definition~\ref{def:adv}.
\end{definition}

\section{Nearly Sample-Optimal Efficient Robust Learning} \label{sec:optimal-sample}
In this section, we present near sample-optimal efficient robust estimators for the mean and the covariance
of high-dimensional distributions under various structural assumptions of varying strength.
Our estimators rely on the {\em filtering technique} introduced in~\cite{DKKLMS}. 

We note that \cite{DKKLMS} gave two algorithmic techniques:
the first one was a spectral technique to iteratively remove outliers from the dataset (filtering), 
and the second one was a soft-outlier removal method relying on convex programming. 
The filtering technique seemed amenable to practical implementation (as it only uses simple eigenvalue computations),
but the corresponding sample complexity bounds given in \cite{DKKLMS} 
are polynomially worse than the information-theoretic minimum. 
On the other hand, the convex programming technique of \cite{DKKLMS}  achieved better
sample complexity bounds (e.g., near sample-optimal for robust mean estimation), 
but relied on the ellipsoid method, which seemed to preclude a practically efficient implementation. 

In this work, we achieve the best of both worlds: 
we provide a more careful analysis of the filter technique that yields sample-optimal bounds (up to logarithmic factors) 
for  both the mean and the covariance. Moreover, we show that the filtering technique easily extends 
to much weaker distributional assumptions (e.g., under bounded second moments). 
Roughly speaking, the filtering technique follows a general iterative recipe: 
(1) via spectral methods, find some univariate test which is violated by the corrupted points, 
(2) find some concrete tail bound violated by the corrupted set of points, 
and (3) throw away all points which violate this tail bound.

\medskip

We start with sub-gaussian distributions. Recall that
if $P$ is sub-gaussian on $\R^d$ with mean vector $\mu$ and parameter $\nu>0$, 
then for any unit vector $v \in \R^d$ we have that 
$\Pr_{X \sim P}\left[|v \cdot (X-\mu)| \geq t \right] \leq \exp(-t^2/2\nu)$.


\begin{theorem} \label{thm:filter-gaussian-mean}
Let $G$ be a sub-gaussian distribution on $\R^d$ with parameter $\nu=\Theta(1)$, 
mean $\mu^G$, covariance matrix $I$, and $\eps > 0$.
Let $S$ be an $\eps$-corrupted set of samples from $G$ of size
\new{$\Omega((d/\eps^2) \poly\log(d/\eps))$}. 
There exists an efficient algorithm that, on input $S$ and $\eps>0$, returns a mean vector $\wh{\mu}$
so that with probability at least $9/10$ we have $\|\wh{\mu}-\mu^{G}\|_2 = O(\eps\sqrt{\log(1/\eps)}).$
\end{theorem}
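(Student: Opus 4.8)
The plan is to follow the standard ``deterministic regularity condition $+$ iterative filter'' template used in \cite{DKKLMS}, the new ingredient being a \emph{tighter} set of deterministic conditions that a clean (uncorrupted) sample of size $\widetilde O(d/\eps^2)$ already satisfies. I would first isolate a deterministic condition on a set $S'$ of points (later instantiated as an i.i.d.\ sample from $G$): (i) the empirical mean of $S'$ is within $O(\eps\sqrt{\log(1/\eps)})$ of $\mu^G$ in $\ell_2$; (ii) $\|\frac{1}{|S'|}\sum_{x\in S'}(x-\mu^G)(x-\mu^G)^\top - I\|_2 = O(\eps\log(1/\eps))$; and, crucially, (iii) a uniform ``no small tail-heavy cluster'' bound: for every unit vector $v$ and every $L\subseteq S'$ with $|L|\le 2\eps|S'|$, one has $\frac{1}{|S'|}\sum_{x\in L}(v\cdot(x-\mu^G))^2 = O(\eps\log(1/\eps))$ and $\|\frac{1}{|S'|}\sum_{x\in L}(x-\mu^G)\|_2 = O(\eps\sqrt{\log(1/\eps)})$. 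Call $S'$ \emph{good} if it satisfies (i)--(iii). By Definition~\ref{def:adv}, an $\eps$-corrupted set $S$ differs from such an $S'$ in at most a $2\eps$-fraction of its points (with failure probability $\exp(-\Omega(\eps|S|))$ from Binomial concentration), so $S = (S'\setminus R)\cup A$ with $|R|=|A|\le 2\eps|S'|$.

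Second, I would show that an i.i.d.\ sample of size $N = \Omega((d/\eps^2)\poly\log(d/\eps))$ is good with probability $\ge 19/20$. Conditions (i) and (ii) are standard sub-gaussian empirical-process / matrix-concentration bounds giving error $\widetilde O(\sqrt{d/N})=\widetilde O(\eps)$. Condition (iii) is the technical heart. I would split, for each direction $v$, the coordinate $|v\cdot(x-\mu^G)|$ at a level $t_0=\Theta(\sqrt{\log(1/\eps)})$: the ``bulk'' part $|v\cdot(x-\mu^G)|\le t_0$ contributes at most $t_0^2\cdot\frac{|L|}{|S'|} = O(\eps\log(1/\eps))$ deterministically for any $L$ of size $\le 2\eps|S'|$; the ``tail'' part is handled by a dyadic decomposition into shells $\{2^jt_0\le |v\cdot(x-\mu^G)|<2^{j+1}t_0\}$, bounding the empirical mass of each shell uniformly over a $\tfrac12$-net of the sphere (of cardinality $2^{O(d)}$, which is where the $d/\eps^2$ in the sample complexity comes from) via a Bernstein bound against the sub-gaussian prediction $\exp(-\Omega(4^j\log(1/\eps)))$, then transferring from the net to all $v$ by a Lipschitz argument. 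Summing the resulting geometric series gives $O(\eps\log(1/\eps))$; the first-moment statement is analogous and easier.

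Third, given a good $S'$, I would analyze the filter run on $S$: compute the empirical mean $\wh\mu$ and the second-moment matrix $M=\frac{1}{|S|}\sum_{x\in S}(x-\wh\mu)(x-\wh\mu)^\top$ of the current set; if $\|M-I\|_2\le C\eps\log(1/\eps)$, output $\wh\mu$; otherwise take a top eigenvector $v$ of $M-I$, look at the one-dimensional projections $|v\cdot(x-\wh\mu)|$, and locate a threshold $T=\Omega(\sqrt{\log(1/\eps)})$ at which the empirical tail $\Pr_{x\in_u S}[\,|v\cdot(x-\wh\mu)|>T\,]$ exceeds the sub-gaussian prediction (say $3\exp(-T^2/2\nu)$) by a quantitative margin, then discard all points exceeding $T$. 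Such a $T$ must exist: using (ii)+(iii), the clean surviving points alone have directional second moment $1+O(\eps\log(1/\eps))$, so if $\|M-I\|_2$ is large the remaining bad points carry directional second moment $\Omega(\log(1/\eps))\gg\nu$, which forces their tail to exceed a $3\exp(-\cdot/2\nu)$ profile past some threshold. Condition (iii) (applied with $L$ the set of clean points above $T$) then shows each round removes strictly more corrupted than clean points; since we begin with $\le 2\eps|S|$ corrupted points and each round removes at least one point, the process stops after $O(|S|)$ eigenvalue computations, having removed only an $O(\eps)$-fraction of clean points.

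Finally, I would bound the output error at termination. Write the surviving set as $S''\cup B''$ with $S''\subseteq S'$ clean and $|B''|=O(\eps)|S''|$. By (i) plus the first-moment part of (iii) applied to the $O(\eps)$-fraction of $S'$ that was removed, $\mathrm{mean}(S'')$ is within $O(\eps\sqrt{\log(1/\eps)})$ of $\mu^G$. The bad points shift the empirical mean by $\frac{|B''|}{|S''\cup B''|}\cdot(\mathrm{mean}(B'')-\wh\mu)$; since $\|M-I\|_2=O(\eps\log(1/\eps))$ at termination, the directional second moment of $B''$ is $O(\log(1/\eps))$, so Cauchy--Schwarz bounds this shift by $O(\sqrt{\eps}\cdot\sqrt{\eps\log(1/\eps)})=O(\eps\sqrt{\log(1/\eps)})$. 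Combining, $\|\wh\mu-\mu^G\|_2=O(\eps\sqrt{\log(1/\eps)})$, which is the claim. The main obstacle is squarely step two: establishing the uniform tail-subset bound (iii) at the near-optimal rate $\widetilde O(d/\eps^2)$ requires the dyadic-shell-plus-net argument above rather than a naive union bound over all small subsets (which would blow up the sample complexity), and keeping track of all the $\poly\log(d/\eps)$ factors there is the most delicate part of the proof.
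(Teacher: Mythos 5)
Your high-level plan is the same as the paper's: replace the over-strong uniform tail condition of \cite{DKKLMS} by a weaker deterministic ``good set'' condition achievable at $\widetilde O(d/\eps^2)$ samples, then re-run the spectral filter and show each round removes more corrupted than clean points. Your proposed proof of the regularity condition (dyadic shells, a $\tfrac12$-net of the sphere, Bernstein per shell, transfer from the net to all directions) is essentially the same concentration argument the paper carries out in Lemma~\ref{lem:random-good-gaussian-mean}. The final error bound via Cauchy--Schwarz (small $\|\Sigma-I\|_2$ forces $(|E|/|S'|)\|M_E\|_2 = O(\eps\log(1/\eps))$, hence $(|E|/|S'|)\|\mu^E-\mu^G\|_2 = O(\eps\sqrt{\log(1/\eps)})$) also matches the paper's Corollary~\ref{MApproxCor-G} and Lemma~\ref{meansBoundLem}.

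There is, however, a genuine mismatch between the regularity condition you state and the filter step you propose. Your condition (iii) is a \emph{resilience}-type bound: for every $L\subseteq S'$ with $|L|\le 2\eps|S'|$ and every unit $v$, $\frac{1}{|S'|}\sum_{x\in L}(v\cdot(x-\mu^G))^2=O(\eps\log(1/\eps))$. Applied to $L$ the set of clean points with $|v\cdot(x-\mu^G)|>T$, this only gives $|L|/|S'|\le O(\eps\log(1/\eps))/T^2$, a \emph{polynomially} decaying bound. But your filter discards at a threshold $T$ where the empirical tail exceeds the sub-gaussian profile $3\exp(-T^2/2\nu)$. For $T$ a large constant multiple of $\sqrt{\log(1/\eps)}$, $\exp(-T^2/2\nu)$ can be far smaller than $\eps\log(1/\eps)/T^2$, so your condition (iii) does not rule out the scenario where almost all points discarded at such a $T$ are clean. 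The paper avoids this precisely by phrasing its good-set condition as a \emph{pointwise} uniform tail bound (Definition~\ref{def:good-set}, item (ii): $|\Pr_{S}[\cdot>T]-\Pr_G[\cdot>T]|\le \eps/(T^2\log(d\log(d/\eps\tau)))$ for every $v$ and every $T$), and then matching the filter threshold to it: the filter fires when $\Pr_{S'}[\cdot>T+\delta] > 8\exp(-T^2/2\nu) + 8\eps/(T^2\log(\cdot))$, which is $4$ times the clean-tail budget $2\exp(-T^2/2\nu)+\eps/(T^2\log(\cdot))$; the subtraction in Claim~\ref{claim:filter} then yields net progress. So to repair your step 3 you should keep the intermediate lemma you prove (the uniform shell-mass/tail bound) as the regularity condition itself and design the threshold test against \emph{both} the exponential and the $\eps/T^2$ terms, rather than passing through the integrated resilience statement. (Minor but related: the paper first runs \textsc{NaivePrune} so that all surviving points lie within $O(\sqrt{d\log(d/\eps\tau)})$ of $\mu^G$; this bounds the range of $T$ and is used to turn the $\eps/(T^2\log(\cdot))$ term into the $\eps/\alpha$ progress guarantee of Proposition~\ref{prop:filter-gaussian-mean}. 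Your write-up omits this step.)
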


\cite{DKKLMS}  gave algorithms for robustly estimating the mean of a Gaussian distribution with known 
covariance and for robustly estimating the mean of a binary product distribution. 
The main motivation for considering these specific distribution families
is that robustly estimating the mean within Euclidean distance immediately implies total variation distance bounds
for these families. The above theorem establishes that these guarantees hold in 
a more general setting with near sample-optimal bounds.
Under a bounded second moment assumption, we show:

\begin{theorem} \label{thm:second-moment}
Let $P$ be a distribution on $\R^d$ with unknown mean vector $\mu^{P}$ and unknown covariance matrix 
$\Sigma_P \preceq \sigma^2 I$. Let $S$ be an $\eps$-corrupted set of samples 
from $P$ of size $\Theta((d/\eps) \log d)$. 
There exists an efficient algorithm that, on input $S$ and $\eps>0$, with probability $9/10$ 
outputs $\wh{\mu}$ with $\|\wh{\mu} - \mu^{P}\|_2 \leq O(\sqrt{\eps} \sigma)$.
\end{theorem}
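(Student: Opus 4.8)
The plan is to run the filtering algorithm of \cite{DKKLMS} essentially unchanged, reducing the analysis to a purely deterministic \emph{stability} property of the uncorrupted sample, exactly as for the Gaussian case of Theorem~\ref{thm:filter-gaussian-mean}. Rescale so that $\sigma=1$, and write the $\eps$-corrupted set as $S=(S_0\setminus R)\cup A$, where $S_0$ is the multiset of $n=\Theta((d/\eps)\log d)$ i.i.d.\ draws from $P$, $R\subseteq S_0$ is deleted, and $A$ is inserted; a Binomial tail bound gives $|R|,|A|=O(\eps n)$ with probability $0.99$. The target is a ``good set'' $S^\ast\subseteq S_0$ with $|S^\ast|\ge(1-O(\eps))n$ that is \emph{$O(\eps)$-stable around $\mu^P$}: for every $T\subseteq S^\ast$ with $|T|\ge(1-O(\eps))|S^\ast|$ one has $\|\Sigma_T\|_2=O(1)$ and $\|\mu_T-\mu^P\|_2=O(\sqrt\eps)$. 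Granting this, $S$ is an $O(\eps)$-corruption of the stable set $S^\ast$ (the $O(\eps n)$ points of $S_0\setminus S^\ast$ that survive deletion are simply re-labelled as corruptions), so the DKKLMS filter run on $S$ returns $\wh\mu$ with $\|\wh\mu-\mu^P\|_2=O(\sqrt\eps)$; undoing the rescaling gives the theorem.

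Establishing the stability property is the crux, and it is where the new good-set definition and the $\Theta((d/\eps)\log d)$ sample bound come from. The difficulty is that under only bounded covariance the raw empirical second-moment matrix of $S_0$ need not be $O(1)$ --- one heavy-tailed point contributes $\approx\|x-\mu^P\|_2^2/n\approx d/n$ --- so one cannot take $S^\ast=S_0$. The fix, done purely in the analysis, is to truncate: set $R_0=\Theta(\sqrt{d/\eps})$ and let $S^\ast=\{x\in S_0:\|x-\mu^P\|_2\le R_0\}$, and let $P'$ be the law of $X\sim P$ conditioned on $\|X-\mu^P\|_2\le R_0$. By Markov, $\Pr_P[\|X-\mu^P\|_2>R_0]\le d/R_0^2=O(\eps)$, so $|S^\ast|\ge(1-O(\eps))n$ with high probability. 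Since the summands $(x-\mu^P)(x-\mu^P)^\top$ for $x\in S^\ast$ have operator norm $\le R_0^2$ and matrix variance proxy $\preceq R_0^2\Sigma_P\preceq R_0^2 I$, a matrix Bernstein bound gives $\tfrac1{|S^\ast|}\sum_{x\in S^\ast}(x-\mu^P)(x-\mu^P)^\top\preceq\tfrac32 I$ as soon as $n\gtrsim R_0^2\log d=(d/\eps)\log d$ --- this is the step forcing the $\log d$ --- and monotonicity of this PSD sum under passing to subsets upgrades it to $\|\Sigma_T\|_2=O(1)$ for all dense $T\subseteq S^\ast$, deterministically. For the mean, the point is that truncation costs only $O(\sqrt\eps)$ bias: for any unit $v$, by Cauchy--Schwarz and $\Sigma_P\preceq I$, $|\E_P[(v\cdot(X-\mu^P))\,\mathbf{1}\{\|X-\mu^P\|_2>R_0\}]|\le\sqrt{v^\top\Sigma_P v}\cdot\sqrt{\Pr_P[\|X-\mu^P\|_2>R_0]}=O(\sqrt\eps)$, so $\|\mu_{P'}-\mu^P\|_2=O(\sqrt\eps)$; a Markov bound on $\|\mu_{S^\ast}-\mu_{P'}\|_2^2$, whose expectation is $\tr(\Sigma_{P'})/|S^\ast|=O(d/n)=O(\eps/\log d)$, then gives $\|\mu_{S^\ast}-\mu^P\|_2=O(\sqrt\eps)$, and the elementary fact that deleting an $O(\eps)$-fraction from a set with empirical variance $O(1)$ in every direction moves its mean by $O(\sqrt\eps)$ promotes this to all dense $T\subseteq S^\ast$. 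A union bound over these $O(1)$ events gives probability $\ge 9/10$.

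The remaining step, the filter analysis, I would import from \cite{DKKLMS} after feeding in this good set. Write the current iterate as $G\cup B$ with $G\subseteq S^\ast$, and maintain the invariant that the number of points of $S^\ast$ discarded so far never exceeds the number of non-$S^\ast$ points discarded, so that $|G|\ge(1-O(\eps))|S^\ast|$ and hence $\|\Sigma_G\|_2,\|\mu_G-\mu^P\|_2$ stay controlled. If $\|\Sigma_S\|_2\le C$ for a suitable constant, output $\mu_S$: with $\beta=|B|/|S|=O(\eps)$ and $v$ the unit vector along $\mu_B-\mu_G$, the identity $v\cdot(\mu_B-\mu_S)=(1-\beta)\,v\cdot(\mu_B-\mu_G)$ and the inequality $\beta\,(v\cdot(\mu_B-\mu_S))^2\le v^\top\Sigma_S v\le C$ give $\|\mu_S-\mu_G\|_2=\beta\,|v\cdot(\mu_B-\mu_G)|=O(\sqrt\eps)$, hence $\|\mu_S-\mu^P\|_2=O(\sqrt\eps)$. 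Otherwise $\|\Sigma_S\|_2\gg1$; let $v$ be a top unit eigenvector of $\Sigma_S$, so $\tfrac1{|S|}\sum_{i\in S}(v\cdot(X_i-\mu_S))^2=\|\Sigma_S\|_2$ while $\tfrac1{|S|}\sum_{i\in G}(v\cdot(X_i-\mu_G))^2=O(1)$ by stability, forcing the corrupted points to carry a constant fraction of the variance. Discarding each $X_i$ with probability proportional to $(v\cdot(X_i-\mu_S))^2$ --- the soft/randomized filter, which avoids the logarithmic loss a naive Chebyshev-threshold filter would incur --- then removes, in expectation, at least as many corruptions as good points, preserving the invariant (up to a negligible failure probability handled by a standard martingale argument). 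Each round deletes at least one point in expectation and there are only $O(\eps n)$ corruptions, so the process halts after $O(\eps n)$ iterations, each a single eigenvector computation; undoing the rescaling, the output satisfies $\|\wh\mu-\mu^P\|_2=O(\sqrt\eps\,\sigma)$.
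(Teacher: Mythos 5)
Your proposal is correct and follows essentially the same route as the paper's proof in Appendix~B.2: truncate the uncorrupted sample at radius $\Theta(\sqrt{d/\eps})$ and reclassify the $O(\eps)$-fraction of truncated points as corruptions (the paper's Lemmas~\ref{lem:samples-good} and~\ref{lem:naive-prune}, using Markov for the fraction, Cauchy--Schwarz for the truncation bias, and a matrix Chernoff/Bernstein bound---the step that forces $n\gtrsim(d/\eps)\log d$---for the empirical covariance), then run a randomized filter whose marginal removal probability for each point is proportional to its squared deviation along the top eigenvector, and control the total number of good points discarded via a supermartingale argument (the paper's Lemma~\ref{lem:expected-good} and the $\E_Z[|E'|+2|L'|]\le|E|+2|L|$ invariant). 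The only substantive cosmetic difference is that the paper implements the soft filter as a single random hard threshold $T=Z\cdot\max_i|v^\ast\cdot(x_i-\mu^S)|$ with $Z$ having density $2x$ on $[0,1]$ rather than independent per-point coin flips, but this yields exactly the same marginals, so the expectation-based analysis is unchanged.
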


A similar result on mean estimation under bounded second moments was concurrently shown in \cite{SteinhardtCV17}.
The sample size above is optimal, up to a logarithmic factor, and the error guarantee 
is easily seen to be the best possible up to a constant factor. 
The main difference between the filtering algorithm establishing the above theorem 
and the filtering algorithm for the sub-gaussian case is how we choose the threshold for the filter. 
Instead of looking for a violation of a concentration inequality, here we will choose a threshold {\em at random}.
In this case, randomly choosing a threshold weighted towards higher thresholds suffices to throw 
out more corrupted samples than uncorrupted samples {\em in expectation}. 
Although it is possible to reject many good samples this way, we show that
the algorithm still only rejects a total of $O(\eps)$ samples with high probability.

Finally, for robustly estimating the covariance of a Gaussian distribution, we have:

\begin{theorem}\label{unknownCovarianceTheorem}
Let $G \sim \normal(0, \Sigma)$ be a Gaussian in $d$ dimensions, and let $\eps>0$. 
Let $S$ be an $\eps$-corrupted set of samples from $G$ of size \new{$\Omega((d^2/\eps^2) \poly\log(d/\eps))$}.
There exists an efficient algorithm that, given $S$ and $\eps$, 
returns the parameters of a Gaussian distribution $G'  \sim \normal(0, \wh{\Sigma})$ 
so that with probability at least $9/10$, 
it holds $\|I  - \Sigma^{-1/2} \wh{\Sigma} \Sigma^{-1/2} \|_F  = O(\eps\log(1/\eps)).$
\end{theorem}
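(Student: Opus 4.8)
I would run the filtering framework of \cite{DKKLMS} for robust covariance estimation, the only essential change being a tighter notion of ``good set'' so that $\Otilde(d^2/\epsilon^2)$ samples suffice; everything else plugs into their analysis. Since the target error $\|I-\Sigma^{-1/2}\wh\Sigma\Sigma^{-1/2}\|_F$ is affine invariant, I would first obtain a crude robust estimate $\Sigma_0$ with $\tfrac12\Sigma\preceq\Sigma_0\preceq 2\Sigma$ (cheap --- a coarse preliminary filter, or a trimmed empirical second moment, suffices), transform the data by $\Sigma_0^{-1/2}$, and thereby reduce to the case where the inliers are drawn from $\normal(0,\Sigma')$ with $\|\Sigma'-I\|_2=O(1)$ and we must output $\wh\Sigma$ with $\|\wh\Sigma-\Sigma'\|_F=O(\epsilon\log(1/\epsilon))$. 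Identify symmetric $d\times d$ matrices with vectors in $\R^{d(d+1)/2}$ via a flattening $M\mapsto M^\flat$ normalized so that $\langle A^\flat,B^\flat\rangle=\tr(AB)$, and for a sample $X$ set $Y=(XX^\top-I)^\flat$. The governing algebraic fact is the Gaussian fourth-moment identity: for $X\sim\normal(0,\Sigma')$, $\Cov[(XX^\top)^\flat]$ is the operator $v\mapsto 2\Sigma' v\Sigma'$ on the symmetric subspace, which for $\Sigma'=I$ is $2\,\mathrm{Id}$ and has operator norm $O(1)$.

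The heart of the argument is the good-set definition. I would call a set $S$ of $n=\Omega((d^2/\epsilon^2)\poly\log(d/\epsilon))$ i.i.d.\ draws from $\normal(0,\Sigma')$ \emph{good} if a short list of deterministic conditions holds, stated in terms of the degree-$2$ polynomials $p_v(X)=v\cdot XX^\top-\tr(v\Sigma')$ indexed by unit symmetric matrices $v$: (i) for every such $v$, the empirical mean of $p_v$ over $S$ is $O(\epsilon)$ and the empirical mean of $p_v^2$ over $S$ is within $O(\epsilon)$ of its population value $2\tr(v\Sigma' v\Sigma')$; (ii) the subset-stable versions --- for every $T\subseteq S$ with $|T|\ge(1-2\epsilon)|S|$, the same two averages over $T$ lie within $O(\epsilon\log(1/\epsilon))$ and $O(\epsilon\log^2(1/\epsilon))$ of the population values; and (iii) a pointwise tail bound $|\{X\in S:|p_v(X)-\E p_v|>t\}|\le 2|S|\exp(-\Omega(t))$ for $t\gtrsim 1$, matching the sub-exponential (Hanson--Wright) tail of a degree-$2$ Gaussian polynomial. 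The claim is that these hold simultaneously with probability $1-\exp(-\Omega(d^2))$, and this is where the near-optimal sample bound enters: for a fixed $v$, $p_v(X)^2$ is sub-exponential of $O(1)$ scale, so truncating it at level $\Theta(\log^2(1/\epsilon))$ (which discards only $O(\epsilon)$ expected mass) gives variables with $\poly\log$ range and $O(1)$ variance, whence a Bernstein bound yields additive error $O(\epsilon)$ from $n=\Otilde(d^2/\epsilon^2)$ samples; a union bound over an $\epsilon$-net of the $\binom{d+1}{2}$-dimensional sphere of symmetric matrices (size $e^{O(d^2\log(1/\epsilon))}$) upgrades this to all $v$; and deleting a $2\epsilon$-fraction from a set whose truncated second moments are controlled moves the untruncated second moment by at most $O(\epsilon\log^2(1/\epsilon))$, giving (ii). Applying matrix Bernstein directly to $\tfrac1n\sum Y_iY_i^\top$ would instead pay for $\|Y_i\|^2\approx d^2$ and lose polynomial factors in $d$; truncating at the level of the scalar tests $p_v$ is exactly what avoids this.

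Given the good set, the algorithm is the iterative filter. Maintain a subset $T$ (initially all points); in each round form the operator $\mathbf M_T=\tfrac1{|T|}\sum_{X\in T}Y_iY_i^\top$ and compare it, on the symmetric subspace, with $2\wh\Sigma_T\otimes\wh\Sigma_T$ where $\wh\Sigma_T=\tfrac1{|T|}\sum_{X\in T}X_iX_i^\top$. If the largest eigenvalue of the discrepancy is below a threshold $\Theta(\epsilon\log^2(1/\epsilon))$, halt and output $\wh\Sigma_T$ (composed back with the initial transformation). Otherwise let $v^*$ be the top eigenvector; by the good-set conditions $\tfrac1{|T\cap S|}\sum p_{v^*}(X_i)^2$ is close to its population value, so the excess is charged to the at most $2\epsilon|T|$ corrupted points in $T$, and since the good points obey the sub-exponential tail (iii), the one-dimensional filtering lemma of \cite{DKKLMS} applies to the scalars $z_i=p_{v^*}(X_i)$: there is a threshold at which the empirical fraction of $T$ with $|z_i|$ exceeding it is more than twice the Gaussian bound, and deleting those points removes --- in expectation, and via an Azuma-type bound over the rounds also with high probability in total --- more corrupted than uncorrupted points. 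Hence the loop runs for at most $\epsilon n$ rounds and deletes $O(\epsilon n)$ good points. At termination, $\mathbf M_T$ is (one-sidedly) spectrally $O(\epsilon\log^2(1/\epsilon))$-close to $2\wh\Sigma_T\otimes\wh\Sigma_T$, while the good-set conditions force $\mathbf M_T$ to be close to the inlier fourth moment $[v\mapsto 2\Sigma' v\Sigma']+(\Sigma'-I)^\flat((\Sigma'-I)^\flat)^\top$ up to $O(\epsilon\log^2(1/\epsilon))$ and the controlled corruption, and $\wh\Sigma_T$ is spectrally close to $\Sigma'$; feeding these into the Gaussian fourth-moment identity (testing along the direction $(\Sigma'-\wh\Sigma_T)^\flat$ and bounding the cubic/quartic corrections) gives $\|\wh\Sigma_T-\Sigma'\|_F=O(\epsilon\log(1/\epsilon))$, and un-whitening yields the claimed Mahalanobis bound, hence the corresponding total-variation bound.

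I expect two steps to carry the real weight. First, the good-set concentration with only $\Otilde(d^2/\epsilon^2)$ samples: pushing the truncation/net argument through while keeping the subset-stability slack at $O(\epsilon\log^2(1/\epsilon))$ --- good enough for the filter --- and no worse. Second, and more delicate, the passage from the spectral stopping condition to a genuinely \emph{Frobenius} error bound on $\wh\Sigma_T$: operator-norm closeness of the fourth-moment operators does not by itself imply Frobenius closeness of the covariances (the quartic form degenerates in exactly the direction one would like to test), so one must exploit the precise structure of the test --- and the fact that the filter only ever acted along directions of genuine fourth-moment excess --- to keep the final error at $O(\epsilon\log(1/\epsilon))$ rather than something like $O(\sqrt\epsilon\log(1/\epsilon))$ or a $\sqrt d$-dependent quantity. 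This is the step where I would lean most heavily on the existing analysis of \cite{DKKLMS}.
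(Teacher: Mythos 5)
Your high-level plan is the right one and matches the paper's strategy: keep the \textsc{Filter-Gaussian-Unknown-Covariance} machinery of \cite{DKKLMS} essentially intact, and improve only the definition of the good set and the concentration argument showing that $\widetilde{O}(d^2/\eps^2)$ samples yield a good set. But there is a genuine gap in how you propose to establish the tail-bound condition, and it is exactly the step where the $\Omega(d^3)$-vs-$\widetilde{O}(d^2)$ distinction lives.

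Your condition (iii) is the \emph{original} pointwise sub-exponential tail bound $\Pr_{x\in_u S}(|p_v(x)-\E p_v|>t)\lesssim\exp(-\Omega(t))$ from \cite{DKKLMS}, uniformly over a net of unit symmetric matrices. This is precisely what the paper argues cannot be achieved with $\widetilde{O}(d^2/\eps^2)$ samples. The issue is that the net in the $\binom{d+1}{2}$-dimensional space has size $2^{\Theta(d^2)}$, and for a fixed $v$ the best a priori bound on $\Pr(|p_v(G)|>T)$ is of order $\exp(-T)$; to beat a $2^{\Theta(d^2)}$ union bound for $T$ up to $\Theta(d)$ (the relevant range, since points with $\|x\|_2\gg\sqrt{d}$ are pruned early but $p_v(x)$ can reach $\Theta(d)$ for surviving points) forces $\Omega(d^3)$ samples. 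Your truncation-plus-Bernstein argument handles the \emph{second moment} conditions (i)--(ii) at the $\widetilde{O}(d^2/\eps^2)$ rate, but it does not touch the tail bound: truncating $p_v^2$ at level $\mathrm{poly}\log(1/\eps)$ to control the mean gives you nothing about the empirical frequency of $|p_v|>T$ when $T$ is polynomially large in $d$, which is what the filter's termination guarantee consumes.

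The paper makes two moves you are missing. First, it weakens the tail condition itself to $\Pr_{x\in_u S}(|p(x)|>T)\leq\eps/(T^2\log^2 T)$ for $T>10\log(1/\eps)$; this still suffices for the filter (the integral against $T\,dT$ of this bound is $O(\eps)$) but is much cheaper to certify. Second, and crucially, it establishes this weakened bound via a rank decomposition of the degree-2 polynomial: the associated matrix $A$ (with $\|A\|_F=1$) is written as a dyadic sum $A = 2(A_1 + (A_2-A_1) + (A_4-A_2) + \cdots)$, producing pieces $p_k\in\mathcal{P}_k$ whose associated matrices have rank at most $k$ and spectral norm at most $1/\sqrt{k}$. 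For such $p_k$, Hanson--Wright gives the much sharper tail $\exp(-\Omega(\min(T^2,T\sqrt{k})))$, and the cover of $\mathcal{P}_k$ needs only $2^{O(dk\log(d/\eps))}$ points (not $2^{\Theta(d^2)}$). These two gains scale together in $k$, so that the union bound closes at $\widetilde{O}(d^2/\eps^2)$ samples for every scale simultaneously, and the tail of a general $p$ is then controlled by a union over the $O(\log d)$ dyadic pieces. Without this decomposition your argument does not reach the claimed sample complexity.

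Two secondary points. Your subset-stability condition (ii) belongs to a different analysis framework and is not how \cite{DKKLMS} (or this paper) closes the filter loop; the paper uses the explicit tail bound at each filter step to show that the threshold removing $>\!\tail(T)$ mass must remove mostly corrupted points, deterministically. Relatedly, the Azuma/martingale argument you invoke for the filter's progress is used in the paper only for the bounded-second-moment case (where the threshold is chosen at random); for the Gaussian covariance filter the progress bound is deterministic given the good set, so no martingale is needed.
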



We now provide a high-level description of the main ingredient 
which yields these improved sample complexity bounds.
The initial analysis of \cite{DKKLMS} established sample complexity bounds which were sub-optimal by polynomial factors 
because it insisted that the set of good samples (i.e., before the corruption) 
satisfied very tight tail bounds. To some degree such bounds are necessary, 
as when we perform our filtering procedure, we need to ensure that not too 
many good samples are thrown away. However, the old analysis required that fairly
strong tail bounds hold {\em uniformly}. The idea for the improvement is as
follows: If the errors are sufficient to cause the variance of some
polynomial $p$ (linear in the unknown mean case or quadratic in the
unknown covariance case) to increase by more than $\eps$, it must be
the case that for some $T$, roughly an $\eps/T^2$ fraction of samples are
error points with $|p(x)| > T$. As long as we can ensure that less than
an $\eps/T^2$ fraction of our good sample points have $|p(x)| > T$, this
will suffice for our filtering procedure to work. For small values
of $T$, these are much weaker tail bounds than were needed previously 
and can be achieved with a smaller number of samples.
For large values of $T$, these tail bounds are comparable to those used
in previous work \cite{DKKLMS} , but in such cases we can take advantage of the
fact that $|p(G)| > T$ only with very small probability, again allowing
us to reduce the sample complexity. The details are deferred to Appendix~\ref{sec:app-theory}.

\section{Filtering}
We now describe the filtering technique more rigorously. We also describe some additional heuristics we found useful in practice.

\subsection{Robust Mean Estimation}
We first consider mean estimation.
The algorithms which achieve Theorems \ref{thm:filter-gaussian-mean} and \ref{thm:second-moment} both follow the general recipe in Algorithm \ref{alg:filter-Gaussian-template}.
We must specify three parameter functions:
\begin{itemize}
\item $\thres(\eps)$ is a threshold function---we terminate if the covariance has spectral norm bounded by $\thres(\eps)$.
\item $\tail (T, d, \eps, \delta, \tau)$ is an univariate tail bound, which would only be violated by a $\tau$ fraction of points if they were uncorrupted, but is violated by many more of the current set of points.
\item $\delta(\eps, s)$ is a slack function, which we require for technical reasons.
\end{itemize}
Given these objects, our filter is fairly easy to state: first, we compute  the empirical covariance.
Then, we check if the spectral norm of the empirical covariance exceeds $\thres(\eps)$.
If it does not, we output the empirical mean with the current set of data points.
Otherwise, we project onto the top eigenvector of the empirical covariance, and throw away all points which violate $\tail(T, d, \eps, \delta, \tau)$, for some choice of slack function $\delta$.

\begin{algorithm}
\begin{algorithmic}[1]
\State \textbf{Input:} An $\epsilon$-corrupted set of samples $S$, $\thres(\eps), \tail (T, d, \eps, \delta, \tau), \delta(\eps, s)$
\State Compute the sample mean $\mu^{S'}=\E_{X\in_u S'}[X]$ 
\State Compute the sample covariance matrix $\Sigma$ 
\State  Compute approximations for the largest absolute eigenvalue of $\Sigma$, $\lambda^{\ast} := \|\Sigma\|_2,$
and the associated unit eigenvector $v^{\ast}.$

\If{$\|\Sigma\|_2 \leq \thres(\eps)$}
 \State \textbf{return} $\mu^{S'}.$ 
\EndIf
\State Let $\delta = \delta (\eps, \| \Sigma \|_2)$.
\State Find $T>0$ such that
$$
\Pr_{X\in_u S'}\left[|v^{\ast} \cdot (X-\mu^{S'})|>T+\delta \right] > \tail(T, d, \eps, \delta, \tau).
$$
\State \Return $\{x\in S': |v^{\ast} \cdot (x-\mu^{S'}) | \leq T+\delta\}$.

\end{algorithmic}
\caption{Filter-based algorithm template for robust mean estimation}
\label{alg:filter-Gaussian-template}
\end{algorithm}

\paragraph{Sub-gaussian case} 
To concretely instantiate this algorithm for the subgaussian case, we take $\thres (\eps) = O(\eps \log 1 / \eps)$, $\delta(\ve, s) = 3 \sqrt{\eps (s - 1)}$, and
\[
\tail (T, d, \eps, \delta, \tau) = 8 \exp (-T^2 / 2 \nu) + 8 \frac{\eps}{T^2 \log (d \log (d / \eps \tau))} \; ,
\]
where $\nu$ is the subgaussian parameter.
See Section~\ref{sec:filter-subgaussian} for details.

\paragraph{Second moment case}
To concretely instantiate this algorithm for the second moment case, 
we take $\thres (\eps) = 9$, $\delta = 0$, and we take $\tail$ to be a \emph{random rescaling} 
of the largest deviation in the data set, in the direction $v^\ast$.
See Section~\ref{ssec:filter-2mom} for details.

\subsection{Robust Covariance Estimation}
Our algorithm for robust covariance follows the exact recipe outlined above, 
with one key difference---we check for deviations in the empirical \emph{fourth} moment tensor.
Intuitively, just as in the robust mean setting, we used degree-$2$ information to detect outliers for the mean 
(the degree-$1$ moment), here we use degree-$4$ information to detect outliers for the covariance 
(the degree-$2$ moment).

More concretely, this corresponds to finding a normalized degree-$2$ polynomial whose empirical variance is too large.
By then filtering along this polynomial, with an appropriate choice of $\thres (\eps), \delta (\eps, s),$ and $\tail$, we achieve the desired bounds.
See Section~\ref{ssec:cov} for the formal pseudocode and more details.

\subsection{Better Univariate Tests}
In the algorithms described above for robust mean estimation, after projecting onto one dimension, we center the points at the empirical mean along this direction.
This is theoretically sufficient, however, introduces additional constant factors since the empirical mean along this direction may be corrupted.
Instead, one can use a robust estimate for the mean in one direction.
Namely, it is well known that the median is a provably robust estimator for the mean for symmetric distributions~\cite{Huber09, HampelEtalBook86}, and under certain models it is in fact optimal in terms of its resilience to noise~\cite{DKW56,Massart90,Chen98,DK14,DiakonikolasKKLMS17}.
By centering the points at the median instead of the mean, we are able to achieve better error in practice.

\subsection{Adaptive Tail Bounding}
In our empirical evaluation, we found that it was important to find an appropriate choice of $\tail$, to achieve good error rates, especially for robust covariance estimation.
Concretely, in this setting, our tail bound is given by
\[
\tail (T, d, \eps, \delta, \tau) = C_1 \exp(-C_2 T) + \tail_2 (T, d, \eps, \delta, \tau) \;,
\]
for some function $\tail_2$, and constants $C_1, C_2$.
We found that for reasonable settings, the term that dominated was
always the first term on the RHS, and that $\tail_2$
is less significant. Thus, we focused on optimizing the first term.

We found that depending on the setting, it was useful to change the constant $C_2$.
In particular, in low dimensions, we could be more stringent, and enforce 
a stronger tail bound (which corresponds to a higher $C_2$), but in higher dimensions, we must be more lax with the tail bound.
To do this in a principled manner, we introduced a heuristic we call \emph{adaptive tail bounding}.
Our goal is to find a choice of $C_2$ which throws away roughly an $\eps$-fraction of points.
The heuristic is fairly simple: we start with some initial guess for $C_2$.
We then run our filter with this $C_2$.
If we throw away too many data points, we increase our $C_2$, and retry.
If we throw away too few, then we decrease our $C_2$ and retry.
Since increasing $C_2$ strictly decreases the number of points thrown away, and vice versa, 
we binary search over our choice of $C_2$ until we reach something close to our target accuracy.
In our current implementation, we stop when the fraction of points we throw away is between $\eps / 2$ and $3 \eps / 2$, 
or if we've binary searched for too long.
We found that this heuristic drastically improves our accuracy, 
and allows our algorithm to scale fairly smoothly from low to high dimension.


\pgfplotsset{ignore legend/.style={every axis legend/.code={\renewcommand\addlegendentry[2][]{}}}}

\pgfplotsset{resultplot/.style={%
  scale only axis,
  tick label style={font=\scriptsize},
  yticklabel style={
        /pgf/number format/fixed,
        /pgf/number format/precision=5
},
scaled y ticks=false,
  yticklabel shift={-2pt},
  enlarge x limits=false,
  xlabel shift=-3pt,
  ylabel shift=-5pt,
  label style={font=\scriptsize},
  title style={font=\scriptsize,yshift=-4pt},
  grid=major,
  grid style={dotted,gray,thin},
  legend cell align=left,
  legend transposed=false,
  legend columns=2,
  every axis plot/.append style={line width=1pt,mark size=2pt},
  cycle list={{black,mark=+},{blue,mark=o},{red,mark=asterisk},{olive,mark=diamond*},{brown,mark=otimes*}, {gray,mark=diamond*}}}}
\pgfplotsset{resultplot1/.style={%
  resultplot,
  height=\plotheight,
  width=\plotwidth}}

\newlength{\plotwidth}
\setlength{\plotwidth}{3cm}
\newlength{\plotheight}
\setlength{\plotheight}{2.6cm}
\newlength{\plotxspacing}
\setlength{\plotxspacing}{1.3cm}
\newlength{\plotyspacing}
\setlength{\plotyspacing}{1.3cm}

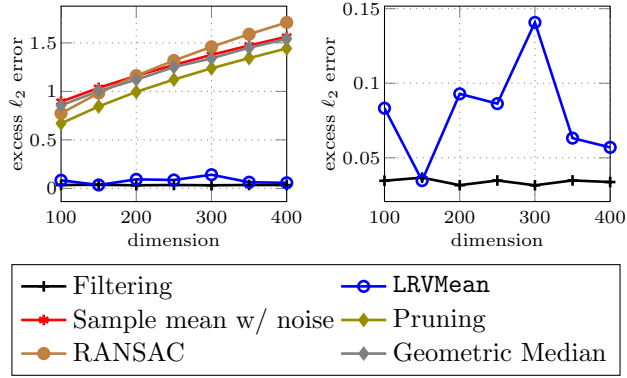
\begin{figure}[!t]
\centering

\pgfplotsset{mseplot/.style={%
  resultplot1,
  xlabel=$n$,
  ylabel=MSE}}
\pgfplotsset{mseratioplot/.style={%
  resultplot1,
  xlabel=$n$,
  ylabel=Relative MSE ratio}}
\pgfplotsset{errplot/.style={%
  resultplot1,
  xlabel=dimension,
  ylabel=excess $\ell_2$ error}}
\pgfplotsset{timeratioplot/.style={%
  resultplot1,
  xlabel=$n$,
  ylabel=Speed-up}}

\begin{tikzpicture}

\begin{axis}[errplot,name=mean, legend style={anchor=north west, xshift=-1.2 \plotwidth, yshift=-1.3\plotheight}]
\addplot table[x=d, y=err] {plot_data/excess-mean-filter-err.txt};
\addplot table[x=d, y=err] {plot_data/excess-mean-LRV-err.txt};
\addplot table[x=d, y=err] {plot_data/excess-mean-noisy-samp-err.txt};
\addplot table[x=d, y=err] {plot_data/excess-mean-pruned-err.txt};
\addplot table[x=d, y=err] {plot_data/excess-mean-median-err.txt};
\addplot table[x=d, y=err] {plot_data/excess-mean-ransac-err.txt};
\legend{Filtering ,\texttt{LRVMean}, Sample mean w/ noise, Pruning, RANSAC, Geometric Median}
\end{axis}

\begin{axis}[errplot,name=mean2, at=(mean.north east),anchor=north west, xshift=\plotxspacing,ignore legend]
\addplot table[x=d, y=err] {plot_data/excess-mean-filter-err.txt};
\addplot table[x=d, y=err] {plot_data/excess-mean-LRV-err.txt};\end{axis}
\end{tikzpicture}
\caption{Experiments with synthetic data for robust mean estimation: error is reported against dimension (lower is better).
The error is excess $\ell_2$ error over the sample mean without noise (the benchmark).
We plot performance of our algorithm, \texttt{LRVMean}, empirical mean with noise, pruning, RANSAC, and geometric median. On the left we report the errors achieved by all algorithms; however the latter four have much larger error than our algorithm or \texttt{LRVMean}. On the right, we restrict our attention to only our algorithm and \texttt{LRVMean}. Our algorithm has better error than all other algorithms.}
\label{fig:synthetic-mean}
\end{figure}

\begin{figure}[!t]
\centering

\pgfplotsset{mseplot/.style={%
  resultplot1,
  xlabel=$n$,
  ylabel=MSE}}
\pgfplotsset{mseratioplot/.style={%
  resultplot1,
  xlabel=$n$,
  ylabel=Relative MSE ratio}}
\pgfplotsset{errplot/.style={%
  resultplot1,
  xlabel=dimension,
  ylabel=excess Mahalanobis error}}
\pgfplotsset{timeratioplot/.style={%
  resultplot1,
  xlabel=$n$,
  ylabel=Speed-up}}

\begin{tikzpicture}

\begin{axis}[errplot,name=cov-isotropic, ignore legend]
\addplot table[x=d, y=err] {plot_data/excess-cov-isotropic-filter-err.txt};
\addplot table[x=d, y=err] {plot_data/excess-cov-isotropic-LRV-err.txt};
\addplot table[x=d, y=err] {plot_data/excess-cov-isotropic-noisy-emp-err.txt};
\addplot table[x=d, y=err] {plot_data/excess-cov-isotropic-prune-err.txt};
\addplot table[x=d, y=err] {plot_data/excess-cov-isotropic-ransac-err.txt};
\end{axis}

\begin{axis}[errplot,name=cov-isotropic2,at=(cov-isotropic.south west), anchor = north west, yshift=-\plotyspacing, ignore legend]
\addplot table[x=d, y=err] {plot_data/excess-cov-isotropic-filter-err.txt};
\addplot table[x=d, y=err] {plot_data/excess-cov-isotropic-LRV-err.txt};
\addplot table[x=d, y=err] {plot_data/excess-cov-isotropic-noisy-emp-err.txt};
\addplot table[x=d, y=err] {plot_data/excess-cov-isotropic-prune-err.txt};
\end{axis}

\begin{axis}[errplot,name=cov-skew,at=(cov-isotropic.north east),anchor=north west, xshift=\plotxspacing, legend style={at=(cov-isotropic2.south west),anchor=north west, xshift=-1.65 \plotwidth, yshift=-\plotyspacing}]
\addplot table[x=d, y=err] {plot_data/excess-cov-filter-err2.txt};
\addplot table[x=d, y=err] {plot_data/excess-cov-LRV-err2.txt};
\addplot table[x=d, y=err] {plot_data/excess-cov-noisy-samp-err2.txt};
\addplot table[x=d, y=err] {plot_data/excess-cov-prune-err2.txt};
\addplot table[x=d, y=err] {plot_data/excess-cov-ransac-err2.txt};
\legend{Filtering ,\texttt{LRVCov}, Sample covariance w/ noise, Pruning, RANSAC}
\end{axis}

\begin{axis}[errplot,name=cov-skew2,at=(cov-skew.south west),anchor=north west, yshift=-\plotyspacing]
\addplot table[x=d, y=err] {plot_data/excess-cov-filter-err2.txt};
\addplot table[x=d, y=err] {plot_data/excess-cov-LRV-err2.txt};
\end{axis}

\node [at=(cov-isotropic.north east),anchor=south west,xshift=-2cm,yshift=.2cm] {Isotropic};
\node [at=(cov-skew.north east),anchor=south west,xshift=-2cm,yshift=.2cm] {Skewed};

\end{tikzpicture}
\caption{Experiments with synthetic data for robust covariance estimation: error is reported against dimension (lower is better).
The error is excess Mahalanobis error over the sample covariance without noise (the benchmark).
We plot performance of our algorithm, \texttt{LRVCov}, empirical covariance with noise, pruning, and RANSAC. We report two settings: one where the true covariance is isotropic (left column), and one where the true covariance is very skewed (right column). In both, the latter three algorithms have substantially larger error than ours or \texttt{LRVCov}. On the bottom, we restrict our attention to our algorithm and \texttt{LRVCov}. The error achieved by \texttt{LRVCov} is quite good, but ours is better. In particular, our excess error is $4$ orders of magnitude smaller than \texttt{LRVCov}'s in high dimensions.}
\label{fig:synthetic-cov}
\end{figure}
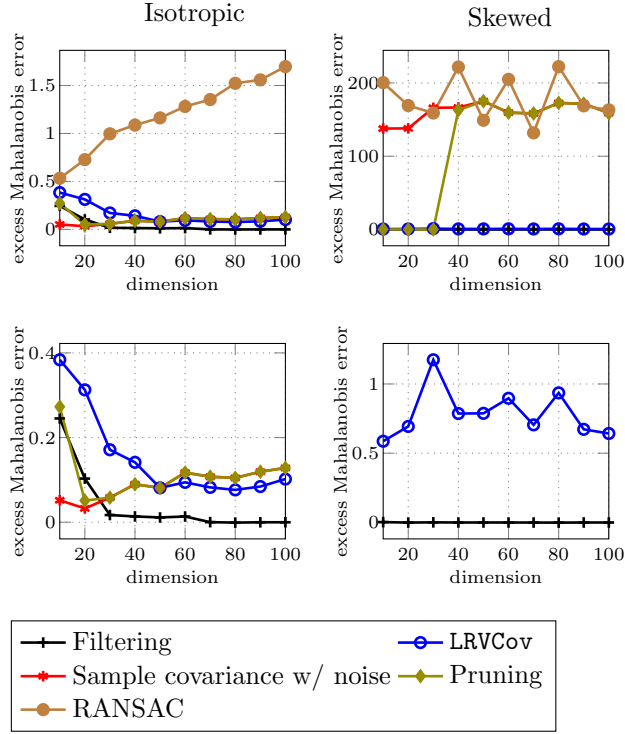

\section{Experiments}

We performed an empirical evaluation of the above algorithms on synthetic and real data sets with and without synthetic noise.
All experiments were done on a laptop computer with a 2.7 GHz Intel Core i5 CPU and 8 GB of RAM.
The focus of this evaluation was on statistical accuracy, not time efficiency.
In this measure, our algorithm performs the best of all algorithms we tried.
In all synthetic trials, our algorithm consistently had the smallest error.
In fact, in some of the synthetic benchmarks, our error was orders of magnitude better than any other algorithms.
In the semi-synthetic benchmark, our algorithm also (arguably) performs the best, though there is no way to tell for sure, since there is no ground truth. 
We also note that despite not optimizing our code for runtime, the runtime of our algorithm is always comparable, and in many cases, better than the alternatives which provided comparable error.
Code of our implementation is available at \url{https://github.com/hoonose/robust-filter}.

\subsection{Synthetic Data}
Experiments with synthetic data allow us to verify the error guarantees and the sample complexity rates proven in Section \ref{sec:optimal-sample} for unknown mean and unknown covariance.
In both cases, the experiments validate the accuracy and usefulness of our algorithm, almost exactly matching the best rate without noise.

\paragraph{Unknown mean} The results of our synthetic mean experiment are shown in Figure \ref{fig:synthetic-mean}.
In the synthetic mean experiment, we set $\varepsilon = 0.1$, and for dimension $d = [100, 150, \ldots, 400]$, we generate $n = \frac{10 d}{\varepsilon^2}$ samples, where a $(1 - \varepsilon)$-fraction come from $\normal (\mu, I)$, and an $\varepsilon$ fraction come from a noise distribution.
Our goal is to produce an estimator which minimizes the $\ell_2$ error the estimator has to the truth.
As a baseline, we compute the error that is achieved by only the uncorrupted sample points.
This error will be used as the gold standard for comparison, since in the presence of error, this is roughly the best one could do even if all the noise points were identified exactly.\footnote{We note that it is possible that an estimator may achieve slightly better error than this baseline.}

On this data, we compared the performance of our Filter algorithm to that of (1) the empirical mean of all the points, (2) a trivial pruning procedure, (3) the geometric median of the data, (4) a RANSAC-based mean estimation algorithm, and (5) a recently proposed robust estimator for the mean due to \cite{LaiRV16}, which we will call \texttt{LRVMean}.
For (5), we use the implementation available in their Github.\footnote{\footnotesize{\url{https://github.com/kal2000/AgnosticMean\\AndCovarianceCode}}}
In Figure \ref{fig:synthetic-mean}, the x-axis indicates the dimension of the experiment, 
and the y-axis measures the $\ell_2$ error of our estimated mean minus the $\ell_2$ error of the empirical mean 
of the true samples from the Gaussian, i.e., the excess error induced over the sampling error.

We tried various noise distributions, and found that the same qualitative pattern arose for all of them.
In the reported experiment, our noise distribution was a mixture of two binary product distributions, 
where one had a couple of large coordinates (see Section~\ref{sec:exp-setup} for a detailed description).
For all (nontrivial) error distributions we tried, we observed that indeed the empirical mean, pruning, geometric median, and RANSAC 
all have error which diverges as $d$ grows, as the theory predicts.
On the other hand, both our algorithm and \texttt{LRVMean} have markedly smaller error as a function of dimension.
Indeed, our algorithm's error is almost identical to that of the empirical mean of the uncorrupted sample points.


\paragraph{Unknown covariance}
The results of our synthetic covariance experiment are shown in Figure \ref{fig:synthetic-cov}.
Our setup is similar to that for the synthetic mean.
Since both our algorithm and \texttt{LRVCov} require access to fourth moment objects, we ran into issues with limited memory on machines.
Thus, we could not perform experiments at as high a dimension as for the unknown mean setting, and we could not use as many samples.
We set $\eps = 0.05$, and for dimension $d = [10, 20, \ldots, 100]$, we generate $n = \frac{0.5 d}{\varepsilon^2}$ samples, 
where a $(1 - \varepsilon)$-fraction come from $\normal (0, \Sigma)$, and an $\varepsilon$ fraction come from a noise distribution.
We measure distance in the natural affine invariant way, namely, the Mahalanobis distance induced by $\Sigma$ to the identity: 
$\mathrm{err} (\Sigmahat) = \| \Sigma^{-1/2} \Sigmahat \Sigma^{-1/2} - I \|_F$.
As explained above, this is the right affine-invariant metric for this problem.
As before, we use the empirical error of only the uncorrupted data points as a benchmark.

On this corrupted data, we compared the performance of our Filter algorithm to that of 
(1) the empirical covariance of all the points, (2) a trivial pruning procedure, (3) a RANSAC-based minimal volume ellipsoid (MVE) algorithm, 
and (5) a recently proposed robust estimator for the covariance due to \cite{LaiRV16}, which we will call \texttt{LRVCov}.
For (5), we again obtained the implementation from their Github repository.

We tried various choices of $\Sigma$ and noise distribution.
Figure \ref{fig:synthetic-cov} shows two choices of $\Sigma$ and noise.
Again, the x-axis indicates the dimension of the experiment and the y-axis indicates the estimator's excess Mahalanobis error over the sampling error.
In the left figure, we set $\Sigma = I$, and our noise points are simply all located at the all-zeros vector.
In the right figure, we set $\Sigma = I + 10 e_1 e_1^T$, where $e_1$ is the first basis vector, and our noise distribution is a somewhat more complicated distribution, which is similarly spiked, but in a different, random, direction. We formally define this distribution in Section~\ref{sec:exp-setup}.
For all choices of $\Sigma$ and noise we tried, the qualitative behavior of our algorithm and \texttt{LRVCov} was unchanged.
Namely, we seem to match the empirical error without noise up to a very small slack, for all dimensions.
On the other hand, the performance of empirical mean, pruning, and RANSAC varies widely with the noise distribution.
The performance of all these algorithms degrades substantially with dimension, and their error gets worse as we increase the skew of the underlying data.
The performance of \texttt{LRVCov} is the most similar to ours, but again is worse by a large constant factor.
In particular, our excess risk was on the order of $10^{-4}$ for large $d$, for both experiments, whereas the excess risk achieved by \texttt{LRVCov} was in all cases a constant between $0.1$ and $2$.

\paragraph{Discussion} These experiments demonstrate that our statistical guarantees are in fact quite strong.
In particular, since our excess error is almost zero (and orders of magnitude smaller than other approaches), this suggests that our sample complexity is indeed close to optimal, since we match the rate without noise, and that the constants and logarithmic factors in the theoretical recovery guarantee are often small or non-existent.

\begin{centering}
\begin{figure*}[h!]
\begin{tikzpicture}[
block/.style={
  fill=white, 
  text width=0.7*\columnwidth, 
  anchor=west,
  minimum height=1cm,
  rounded corners 
  }, 
font=\small
]

\node[inner sep=0pt] (original) at (.2\textwidth,0.03\textheight)
    {\includegraphics[width=.45\textwidth]{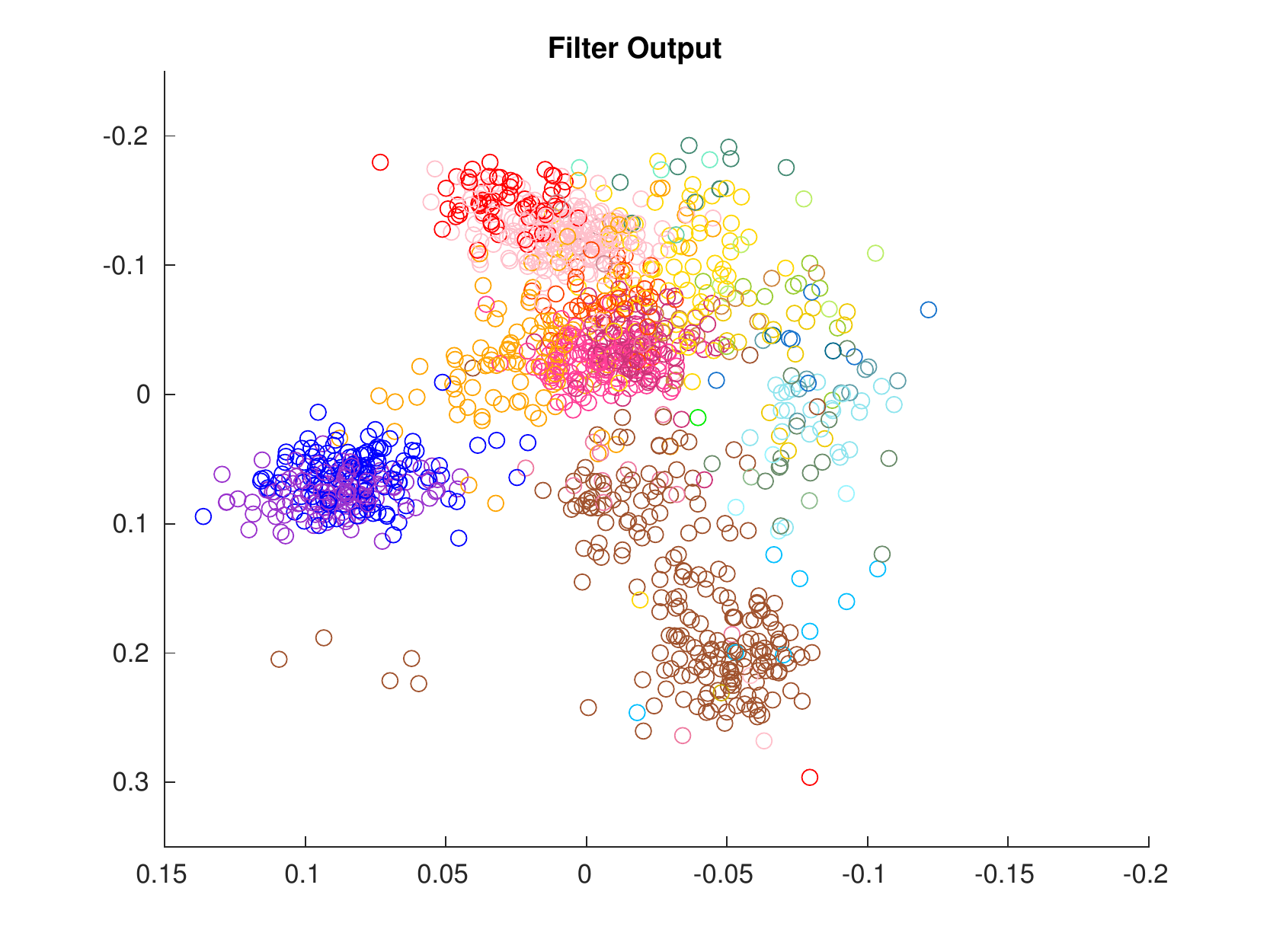}};
    \node[block,align=left] (originalcap) at ( .03 \textwidth, -.075\textheight)
  (step1)
  {The filtered set of points projected onto the \\ top two directions returned by the filter};
\node[inner sep=0pt] (pruning) at (.75\textwidth, 0.03\textheight)
    {\includegraphics[width=.45\textwidth]{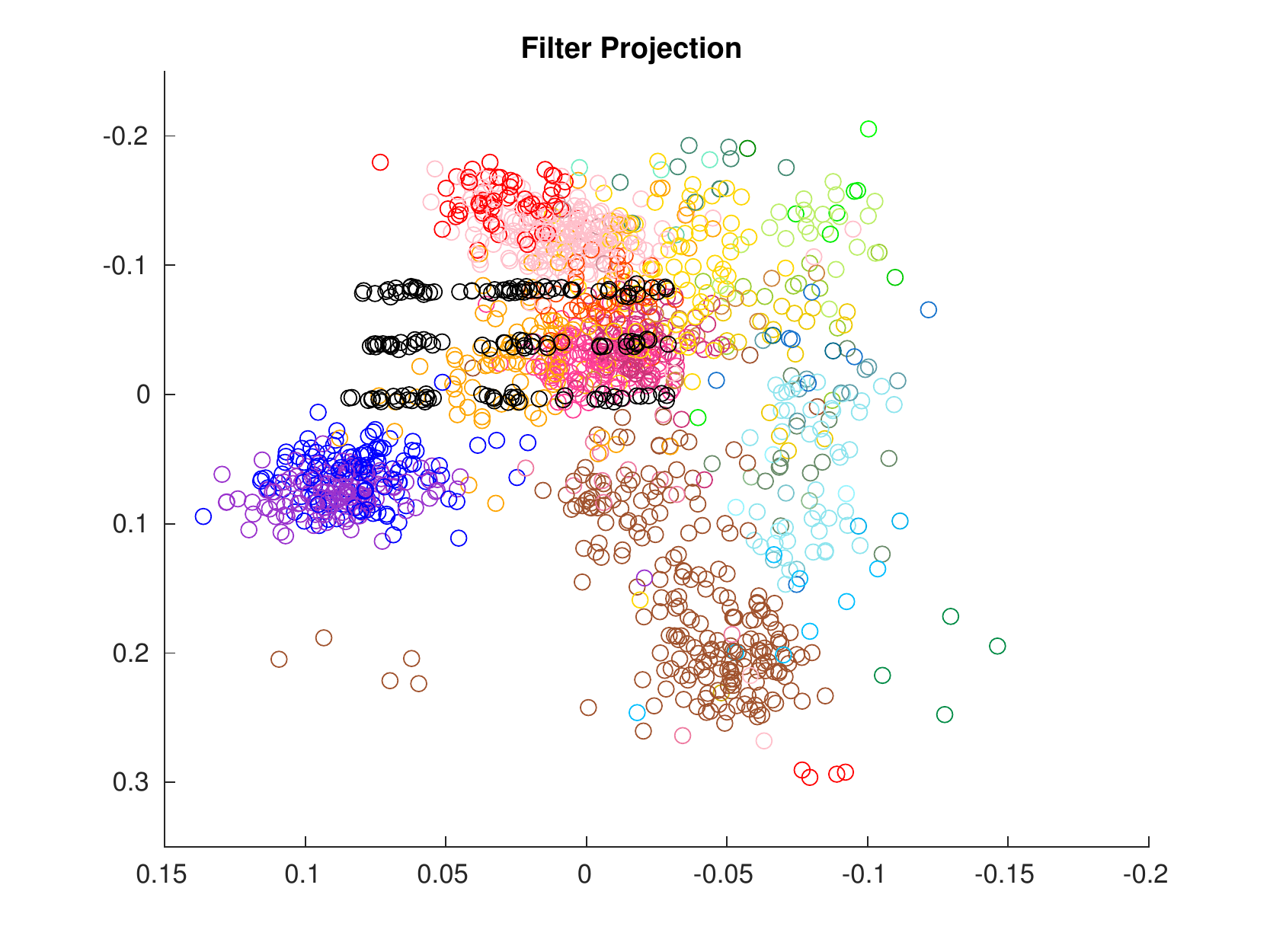}};
   \node[block,align=left] (originalcap) at ( .57 \textwidth, -.075\textheight)
  (step1)
  {The data projected onto the top two \\ directions returned by the filter};
\node[inner sep=0pt] (pruning) at (.2\textwidth, .32\textheight)
    {\includegraphics[width=.45\textwidth]{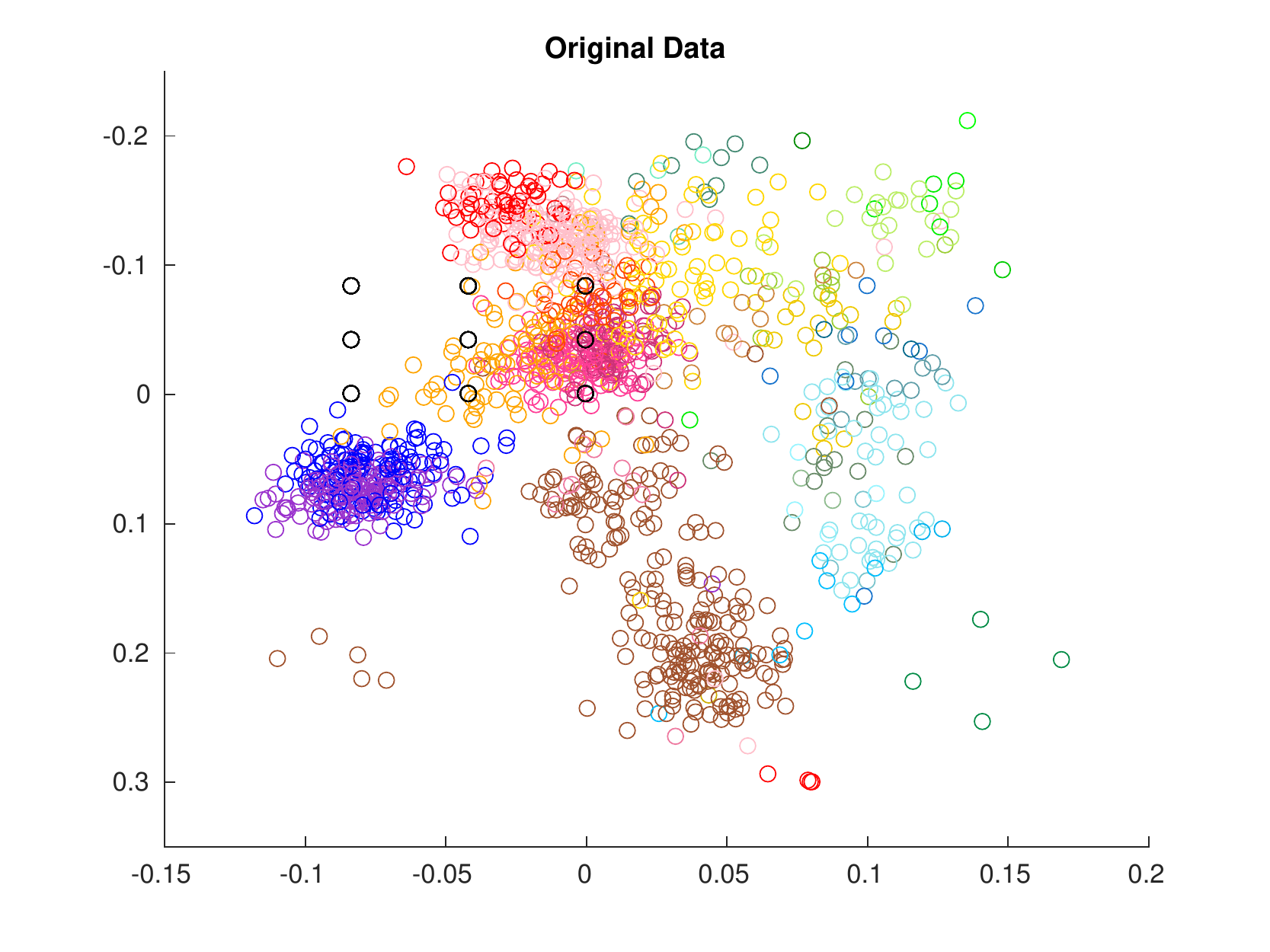}};
    \node[block,align=left] (originalcap) at ( .03 \textwidth, .18\textheight)
  (step1)
  {The data projected onto the top two \\ directions of the original data set \\ without noise};
\node[inner sep=0pt] (pruning) at (.75 \textwidth, .32\textheight)
    {\includegraphics[width=.45\textwidth]{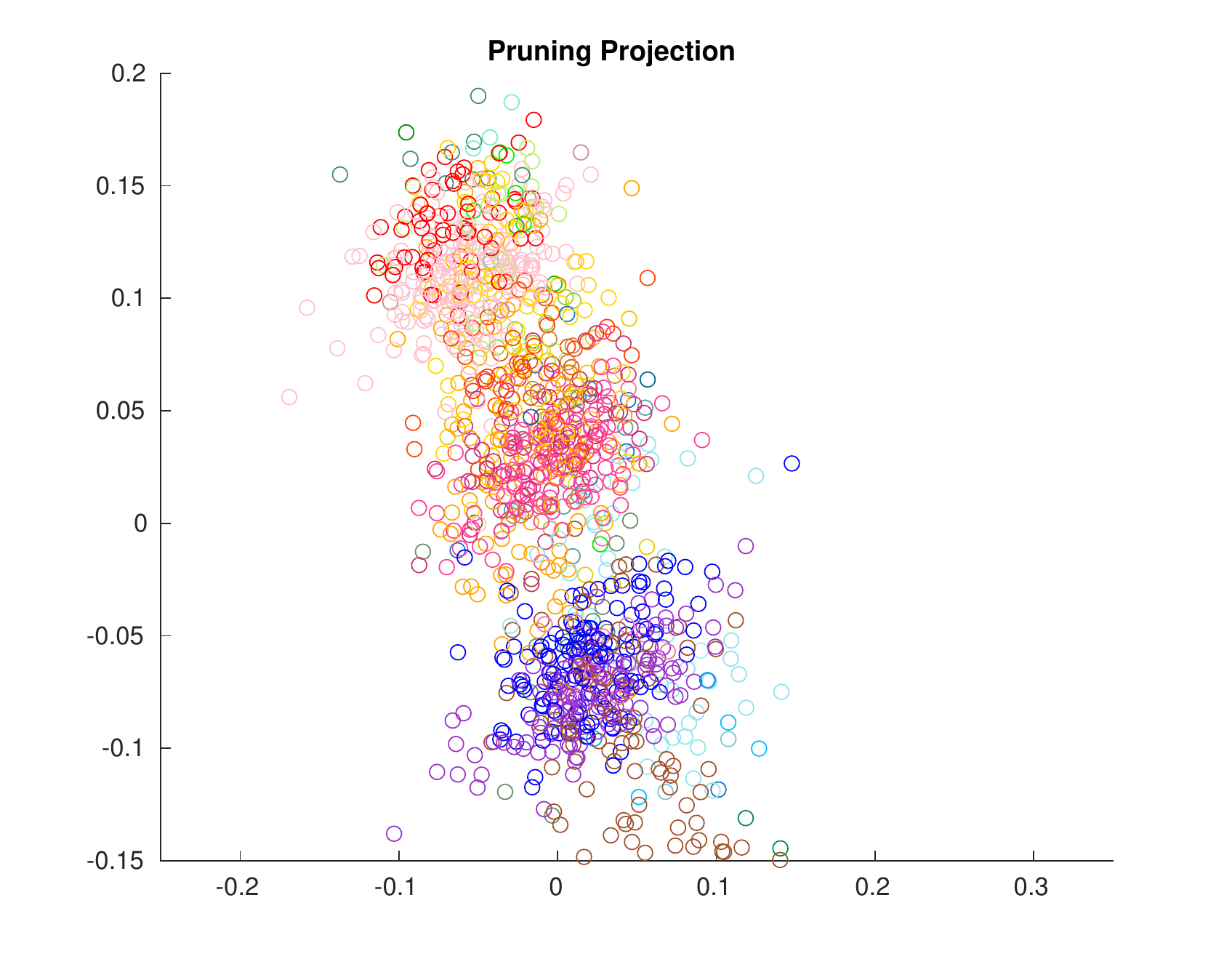}};
   \node[block,align=left] (originalcap) at ( .57 \textwidth, .18\textheight)
  (step1)
  {The data projected onto the top two directions \\ of the noisy data set after pruning};
    
\node[inner sep=0pt] (pruning) at (.47 \textwidth, .15\textheight)
   {\includegraphics[scale=0.5]{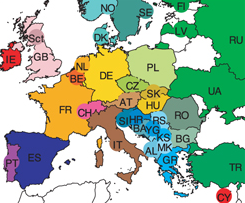}};
\end{tikzpicture}
\vspace{-0.8cm}
\caption{Experiments with semi-synthetic data: given the real genetic data from  \cite{novembre2008genes}, projected down to 20-dimensions, and with added noise. The colors indicate the country of origin of the person, and match the colors of the countries in the map of Europe in the center. Black points are added noise. The top left plot is the original plot from \cite{novembre2008genes}. We (mostly) recover Europe in the presence of noise whereas naive methods do not.}
\label{fig:europe}
\end{figure*}
\end{centering}

\subsection{Semi-synthetic Data}
To demonstrate the efficacy of our method on real data, we revisit the famous study of \cite{novembre2008genes}.
In this study, the authors investigated data collected as part of the Population Reference Sample (POPRES) project.
This dataset consists of the genotyping of thousands of individuals using the Affymetrix 500K single nucleotide polymorphism (SNP) chip.
The authors pruned the dataset to obtain the genetic data of over 1387 European individuals, annotated by their country of origin.
Using principal components analysis, they produce a two-dimensional summary of the genetic variation, which bears a striking resemblance to the map of Europe.

Our experimental setup is as follows.
While the original dataset is very high dimensional, we use a 20 dimensional version of the dataset as found in the authors' GitHub\footnote{\url{https://github.com/NovembreLab/Novembre_etal_2008_misc}}.
We first randomly rotate the data, as then 20 dimensional data was diagonalized, and the high dimensional data does not follow such structure.
We then add an additional $\frac{\epsilon}{1 - \epsilon}$ fraction of points (so that they make up an $\epsilon$-fraction of the final points).
These added points were discrete points, following a simple product distribution (see Section~\ref{sec:exp-setup} for full details).
We used a number of methods to obtain a covariance matrix for this dataset, and we projected the data onto the top two singular vectors of this matrix.
In Figure \ref{fig:europe}, we show the results when we compare our techniques to pruning.
In particular, our output was able to more or less reproduce the map of Europe, whereas pruning fails to.
In Section~\ref{sec:exp-others}, we also compare our result with a number of other techniques, including those we tested against in the unknown covariance experiments, and other robust PCA techniques.
The only alternative algorithm which was able to produce meaningful output was \texttt{LRVCov}, which produced output that was similar to ours, but which produced a map which was somewhat more skewed.
We believe that our algorithm produces the best picture.

In Figure \ref{fig:europe}, we also display the actual points which were output by our algorithm's Filter.
While it manages to remove most of the noise points, it also seems to remove some of the true data points, particularly those from Eastern Europe and Turkey.
We attribute this to a lack of samples from these regions, and thus one could consider them as outliers to a dataset consisting of Western European individuals.
For instance, Turkey had 4 data points, so it seems quite reasonable that any robust algorithm would naturally consider these points outliers.

\paragraph{Discussion}
We view our experiments as a proof of concept demonstration that our techniques can be useful in real world exploratory data analysis tasks, particularly those in high-dimensions. 
Our experiments reveal that a minimal amount of noise can completely disrupt a data analyst's ability to notice an interesting phenomenon, thus limiting us to only very well-curated data sets.
But with robust methods, this noise does not interfere with scientific discovery, and we can still recover interesting patterns which otherwise would have been obscured by noise.

\section*{Acknowledgments}
We would like to thank Simon Du and Lili Su for helpful comments on a previous version of this work.
\bibliographystyle{alpha}
\bibliography{biblio}
\appendix

\section{Omitted Details from Section~3} \label{sec:app-theory}

\subsection{Robust Mean Estimation for Sub-Gaussian Distributions} \label{sec:filter-subgaussian}

In this section, we use our filter technique to give a {\em near sample-optimal} computationally efficient algorithm 
to robustly estimate the mean of a sub-gaussian density with a known covariance matrix, thus proving Theorem~\ref{thm:filter-gaussian-mean}.

We emphasize that the algorithm and its analysis is essentially identical to the filtering algorithm given in Section~8.1 of~\cite{DKKLMS} 
for the case of a Gaussian $\mathcal{N}(\mu, I)$.
The only difference is a weaker definition of the ``good set of samples'' (Definition~\ref{def:good-set}) and a simple
concentration argument (Lemma~ \ref{lem:random-good-gaussian-mean}) showing that a random set of uncorrupted 
samples of the appropriate size is good with high probability. Given these, the analysis of this subsection follows straightforwardly
from the analysis in Section~8.1 of~\cite{DKKLMS} by plugging in the modified parameters. For the sake of completeness,
we provide the details below.

We start by formally defining sub-gaussian distributions:

\begin{definition} \label{def:subg}
A distribution $P$ on $\R$ with mean $\mu$, is sub-gaussian with parameter $\nu>0$  if 
$$\E_{X \sim P} \left[\exp(\lambda (X-\mu))\right] \leq \exp(\nu\lambda^2/2)$$ for all $\lambda \in \R$. 
A distribution $P$ on $\R^d$ with mean vector $\mu$ is sub-gaussian with parameter $\nu>0$, if  for all unit vectors $v$, 
the one-dimensional random variable $v \cdot X$, $X \sim P$, is sub-gaussian with parameter $\nu$.
\end{definition}

\noindent We will use the following simple fact about the concentration of sub-gaussian random variables:

\begin{fact} \label{fact:tail-bound} 
If $P$ is sub-gaussian on $\R^d$ with mean vector $\mu$ and parameter $\nu>0$, 
then for any unit vector $v \in \R^d$ we have that $\Pr_{X \sim P}\left[|v \cdot (X-\mu)| \geq T \right] \leq \exp(-T^2/2\nu)$.
\end{fact}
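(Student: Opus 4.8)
\textbf{Proof proposal for Fact~\ref{fact:tail-bound}.}
The plan is to reduce to one dimension via Definition~\ref{def:subg} and then apply the standard Chernoff/Cram\'er moment-generating-function bound. Fix a unit vector $v\in\R^d$ and set $Z = v\cdot(X-\mu)$. By Definition~\ref{def:subg}, the one-dimensional random variable $v\cdot X$ is sub-gaussian with parameter $\nu$, and since $P$ has mean $\mu$ we have $\E[v\cdot X] = v\cdot\mu$; hence $\E[\exp(\lambda Z)] \le \exp(\nu\lambda^2/2)$ for every $\lambda\in\R$.

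For the upper tail, take any $\lambda>0$ and apply Markov's inequality to the nonnegative random variable $\exp(\lambda Z)$: $\Pr[Z\ge T] \le \exp(-\lambda T)\,\E[\exp(\lambda Z)] \le \exp(\nu\lambda^2/2 - \lambda T)$. The exponent is a convex quadratic in $\lambda$ minimized at $\lambda = T/\nu$, and substituting this value gives $\Pr[Z\ge T] \le \exp(-T^2/(2\nu))$. The lower tail is handled symmetrically: the moment-generating-function bound holds for all real $\lambda$ (equivalently, apply the same argument with $-v$ in place of $v$), so $\Pr[Z\le -T] \le \exp(-T^2/(2\nu))$ as well. A union bound over these two events yields $\Pr[|v\cdot(X-\mu)|\ge T] \le 2\exp(-T^2/(2\nu))$.

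The only genuine step is the optimization over $\lambda$; there is no real obstacle. The stated bound drops the leading factor of $2$, which is harmless: in every subsequent use the tail estimate enters the filter's threshold function $\tail(\cdot)$ only up to absolute constants, so one may either absorb the $2$ into the constants there or, if an exact statement is preferred, note that it follows by a negligible strengthening of the sub-gaussian parameter. I would present the argument as above and simply remark on this constant.
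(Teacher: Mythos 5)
Your proof is correct and is exactly the standard Chernoff/moment-generating-function argument that the paper implicitly relies on; the paper itself gives no proof of this Fact, treating it as a textbook statement, so there is nothing to compare against at the level of technique. What you point out about the missing factor of $2$ is a genuine (if harmless) slip in the paper's statement: the one-sided Chernoff bound gives $\Pr[Z\ge T]\le\exp(-T^2/2\nu)$, and the two-sided bound needs the union-bound factor of $2$. The paper evidently knows this, since the very next place the bound is used (Fact~\ref{fact:conc}) states $\Pr_{X\sim G}\left[|w\cdot(X-\mu^G)|>T\right]\le 2\exp(-T^2/2\nu)$ and its proof attributes this to Fact~\ref{fact:tail-bound} — so the intended reading of the Fact is with the factor of $2$, and the statement as printed is a minor typo. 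Your remark that the constant is absorbed harmlessly into the filter's threshold is accurate: downstream the tail function already carries explicit constants such as $8\exp(-T^2/2\nu)$, so the factor of $2$ never matters.
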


The following theorem is a high probability version of Theorem~\ref{thm:filter-gaussian-mean}:

\begin{theorem} \label{thm:filter-gaussian-mean-app}
Let $G$ be a sub-gaussian distribution on $\R^d$ with parameter $\nu=\Theta(1)$, 
mean $\mu^G$, covariance matrix $I$, and $\eps, \tau > 0$.
Let $S'$ be an $\eps$-corrupted set of samples from $G$ of size
\new{$\Omega((d/\eps^2) \poly\log(d/\eps\tau))$}. 
There exists an efficient algorithm that, on input $S'$ and $\eps>0$, returns a mean vector $\wh{\mu}$
so that with probability at least $1-\tau$ we have $\|\wh{\mu}-\mu^{G}\|_2 = O(\eps\sqrt{\log(1/\eps)}).$
\end{theorem}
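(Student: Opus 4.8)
The plan is to follow the two-part strategy indicated before the theorem statement: first isolate a \emph{deterministic} ``good set'' condition on the uncorrupted samples that is simultaneously (i) strong enough that the filtering analysis of Section~8.1 of~\cite{DKKLMS} goes through verbatim once the sub-gaussian parameters are substituted, and (ii) weak enough to hold, with probability $1-\tau$, for a random sample of size only $\Omega((d/\eps^2)\poly\log(d/\eps\tau))$; then run Algorithm~\ref{alg:filter-Gaussian-template} with the sub-gaussian instantiation of $\thres,\delta,\tail$ and analyze it assuming the good-set condition. Concretely, I would define a set $S\subseteq\R^d$ to be \emph{good} (Definition~\ref{def:good-set}) if $\|\E_{X\in_u S}[X]-\mu^G\|_2=O(\eps)$, if $\bigl\|\E_{X\in_u S}[(X-\mu^G)(X-\mu^G)^\top]-I\bigr\|_2=O(\eps\log(1/\eps))$, and if for every unit vector $v$ and every $T>0$,
$$\Pr_{X\in_u S}\bigl[\,|v\cdot(X-\mu^G)|>T\,\bigr]\ \le\ 2\exp(-T^2/2\nu)+\frac{\eps}{T^2\log\bigl(d\log(d/\eps\tau)\bigr)}\,.$$
The last condition is the crucial relaxation: for small $T$ it is far weaker than the uniform sub-gaussian tail demanded in~\cite{DKKLMS} (we grant additive slack $\eps/T^2$), while for large $T$ the dominant term $\exp(-T^2/2\nu)$ is already negligible, so in both regimes fewer samples suffice.

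\textbf{Main lemma, and main obstacle: a random sample is good.} The heart of the proof is Lemma~\ref{lem:random-good-gaussian-mean}, asserting that $m=\Omega((d/\eps^2)\poly\log(d/\eps\tau))$ i.i.d.\ draws from $G$ are good with probability $\ge 1-\tau$. The two moment conditions follow from standard sub-gaussian mean- and covariance-estimation bounds (a covering argument over directions together with Bernstein). For the tail condition I would discretize: fix a $1/\poly(d/\eps\tau)$-net $\mathcal{C}$ over the unit sphere, of size $\exp(O(d\log(d/\eps\tau)))$, and a geometric grid of thresholds $T_1<T_2<\cdots$. For a fixed pair $(v,T_j)$, the empirical tail $\Pr_{X\in_u S}[|v\cdot(X-\mu^G)|>T_j]$ is an average of i.i.d.\ Bernoullis with true mean $p_j\le\exp(-T_j^2/2\nu)$, so a Chernoff/Bernstein bound shows the probability it exceeds $p_j+\eps/(T_j^2\log(\cdots))$ is at most $\exp(-\Omega(m\,p_j))$ when $p_j$ dominates the slack and at most $\exp(-\Omega(m\eps^2/(T_j^4\poly\log)))$ when the slack $\eps/T_j^2$ dominates; in either case $m=\widetilde\Omega(d/\eps^2)$ beats the union-bound cost $\log|\mathcal{C}|=\widetilde O(d)$, with the $\poly\log$ factors to spare. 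Summing the geometrically decaying failure probabilities over $j$, union-bounding over $\mathcal{C}$, and finally transferring from the net back to all $(v,T)$ by a Lipschitz/rounding argument completes the lemma. Getting the $T$-dependence of the deviation estimate to line up so that this union bound closes with only $\widetilde O(d/\eps^2)$ samples (rather than the larger counts of~\cite{DKKLMS}) is the one genuinely delicate point.

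\textbf{Running and analyzing the filter.} Condition on the uncorrupted set $S$ being good; then the input $S'$ satisfies $|S\setminus S'|\le\eps|S|$ and $|S'\setminus S|\le\eps|S|$. Feed $S'$ to Algorithm~\ref{alg:filter-Gaussian-template} with $\thres(\eps)=O(\eps\log(1/\eps))$, $\delta(\eps,s)=3\sqrt{\eps(s-1)}$, and $\tail$ as instantiated above. There are two cases, both identical to~\cite{DKKLMS} after substituting the parameters. If $\|\Sigma\|_2\le\thres(\eps)$, then since the current set agrees with a good set outside an $O(\eps)$-fraction and has spectral norm $O(\eps\log(1/\eps))$, a direct calculation bounds the displacement of the empirical mean from $\mu^G$ by $O\bigl(\sqrt{\eps\cdot\thres(\eps)}\bigr)+O(\eps)=O(\eps\sqrt{\log(1/\eps)})$, which is the claimed output guarantee. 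If $\|\Sigma\|_2>\thres(\eps)$, one shows that the excess variance $(v^\ast)^\top\Sigma v^\ast-1$ must be ``caused'' by the corrupted points, so that for a suitable $T$ the number of remaining error points $x$ with $|v^\ast\cdot(x-\mu^{S'})|>T$ is at least $\Omega(\eps|S'|/T^2)$; this is exactly what forces the step that finds $T$ to succeed, and the good-set tail bound simultaneously guarantees that only an $O(\eps/T^2)$-fraction of the remaining good points lie beyond $T+\delta$. Hence each filter step deletes strictly more bad points than good points, so the number of remaining corrupted points strictly decreases; after at most $|S'|$ iterations we fall into the first case, the symmetric difference with a good set has stayed $O(\eps)$ throughout, and the first-case bound applies. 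Absorbing the finitely many constant factors incurred along the way into $\tau$ yields the stated $1-\tau$ success probability.
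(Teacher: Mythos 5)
Your proposal follows essentially the same route as the paper: you define the same relaxed good-set notion with an $\eps/T^2$ additive slack in the tail bound, prove via a sphere-net plus Chernoff argument that $\widetilde O(d/\eps^2)$ samples are good, and then re-run the filtering analysis of \cite{DKKLMS} conditioned on goodness. The one concrete omission is the paper's condition (i) of Definition~\ref{def:good-set} ($\|x-\mu^G\|_2\le O(\sqrt{d\log(|S|/\tau)})$ for all $x\in S$), together with the preliminary \textsc{NaivePrune} step: without these, the integrals $\int \Pr[\,|v\cdot(X-\mu^G)|>T\,]\,T\,dT$ that control $\|M_L\|_2$ and $\|M_E\|_2$ pick up the $\eps/T^2$ slack term over an unbounded range and diverge logarithmically, so the bounded diameter is needed to truncate them at $O(\sqrt{d\log(d/\eps\tau)})$.
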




\medskip

\noindent {\bf Notation.} We will denote $\mu^S = \frac{1}{|S|}\sum_{X\in S} X$ and 
$M_S=\frac{1}{|S|}\sum_{X\in S} (X-\mu^G)(X-\mu^G)^T$ for the sample mean and 
modified sample covariance matrix of the set $S$.

\medskip

We start by defining our modified notion of good sample, i.e, 
a set of conditions on the uncorrupted set of samples under which our algorithm will succeed.

\begin{definition} \label{def:good-set}
Let $G$ be an identity covariance sub-gaussian in $d$ dimensions with mean $\mu^G$ and covariance matrix $I$ and $\eps,\tau >0$.
We say that a multiset $S$ of elements in $\R^d$ is {\em $(\eps,\tau)$-good with respect to $G$}
if the following conditions are satisfied:
\begin{itemize}
\item[(i)] For all $x \in S$ we have $\|x-\mu^G\|_2 \leq O(\sqrt{d \log (|S|/\tau)})$.

\item[(ii)]  For every affine function $L:\R^d \to \R$ \new{such that $L(x) = v \cdot (x-\mu^G)-T$, $\|v\|_2=1$,}
we have that 
$\left|\Pr_{X \in_u S}[L(X) \ge 0] - \Pr_{X\sim G}[L(X) \ge 0] \right| \leq \frac{\eps}{\new{T^2\log\left(d \log(\frac{d}{\eps\tau})\right)}} \;.$

\item[(iii)]  We have that $\|\mu^S - \mu^G \|_2\leq \eps.$

\item[(iv)]  We have that $\left\|M_S- I \right\|_2 \leq \new{\eps}.$
\end{itemize}
\end{definition}

We show in the following subsection that a sufficiently large set of independent samples from $G$
is $(\eps,\tau)$-good (with respect to $G$) with high probability. Specifically, we prove:

\begin{lemma} \label{lem:random-good-gaussian-mean}
Let $G$ be sub-gaussian distribution with parameter $\nu=\Theta(1)$ and with identity covariance, and $\eps, \tau>0.$
If the multiset $S$ is obtained by taking \new{$\Omega( (d/\eps^2) \poly\log(d/\eps\tau))$} independent samples from $G,$
it is $(\eps,\tau)$-good with respect to $G$ with probability at least $1-\tau .$
\end{lemma}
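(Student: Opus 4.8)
\noindent\emph{Proof plan.} The plan is to establish the four conditions (i)--(iv) of Definition~\ref{def:good-set} separately for a random $S$ of the stated size and then union bound, allotting failure probability $\tau/4$ to each. Conditions (i), (iii), and (iv) are routine concentration facts; the substance is the weak tail condition (ii), which is exactly where the sample complexity improvement over~\cite{DKKLMS} comes from.

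For condition (i): a single draw $X\sim G$ obeys $\|X-\mu^G\|_2 = O(\sqrt{d}+\sqrt{\nu\log(1/\tau')})$ with probability $1-\tau'$ by the standard concentration of the norm of a sub-gaussian vector; taking $\tau' = \tau/(4|S|)$ and a union bound over $S$ gives $\|X-\mu^G\|_2 = O(\sqrt{d\log(|S|/\tau)})$ for all $X\in S$, since $\nu=\Theta(1)$. For condition (iii): $\mu^S-\mu^G = \frac1{|S|}\sum_{X\in S}(X-\mu^G)$ is an average of i.i.d.\ mean-zero sub-gaussian vectors, whose Euclidean norm concentrates around $\sqrt{d/|S|}$, so $|S|=\widetilde{\Omega}(d/\eps^2)$ forces $\|\mu^S-\mu^G\|_2\le\eps$ with probability $1-\tau/4$. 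For condition (iv): $\E[M_S]=I$ because we center at the true mean $\mu^G$ and $G$ has covariance $I$; truncating each sample to the radius bound from (i) (which changes nothing, by (i)) and applying a matrix Bernstein inequality, or the standard bound on the empirical covariance of a sub-gaussian distribution, gives $\|M_S-I\|_2\le\eps$ once $|S|=\widetilde{\Omega}(d/\eps^2)$. All three use $\widetilde{O}(d/\eps^2)$ samples, within budget.

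The heart is condition (ii). First, replacing $(v,T)$ by $(-v,-T)$ sends the event $\{v\cdot(x-\mu^G)\ge T\}$ to the complement of $\{(-v)\cdot(x-\mu^G)\ge -T\}$, changes neither $|S|$ nor the target accuracy $\eps/(T^2\log(d\log(d/\eps\tau)))$, and only affects a boundary set of measure zero under $G$; so it suffices to handle $T\ge 0$. I would then split on the size of $T$, writing $s_T := \eps/(T^2\log(d\log(d/\eps\tau)))$ for the target slack and $R = O(\sqrt{d\log(|S|/\tau)})$ for the radius from (i). For small $T$, say $T\le T_1$ with $T_1 = \Theta(\sqrt{\log(d/(\eps\tau))})$ chosen so that $\exp(-T_1^2/2\nu)$ lies below every relevant $s_T$: the required accuracy $s_T$ is $\widetilde{\Omega}(\eps)$ throughout this range, so the classical Vapnik--Chervonenkis inequality for the halfspaces $\{x:v\cdot(x-\mu^G)\ge T\}$ (a class of VC dimension $d+1$), with uniform additive error $\widetilde{O}(\eps)$ and $\widetilde{O}(d/\eps^2)$ samples, gives (ii) for all unit $v$ and all $T\le T_1$. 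For large $T$, namely $T>R$: by (i) every sample $X\in S$ has $v\cdot(X-\mu^G)\le\|X-\mu^G\|_2\le R<T$, so the empirical probability is identically $0$ in every direction, while the true probability is at most $\exp(-T^2/2\nu)$ by Fact~\ref{fact:tail-bound}, which is super-polynomially smaller than $s_T$. For the remaining moderate range $T_1\le T\le R$, only the upper tail of (ii) needs proof (the lower tail is automatic since $\Pr_{X\sim G}[v\cdot(X-\mu^G)\ge T]\le\exp(-T^2/2\nu)\le s_T$ there). Here I would cover the unit sphere by a $\gamma$-net $\mathcal N$ of size $(3/\gamma)^d$ and the interval $[T_1,R]$ by a geometric grid of $O(\log(d/\eps\tau))$ thresholds; for each pair $(v',T')$ in $\mathcal N\times\mathrm{grid}$ the empirical count $|\{X\in S: v'\cdot(X-\mu^G)\ge T'\}|$ is $\mathrm{Bin}(|S|,p)$ with $p\le\exp(-T'^2/2\nu)$ small (Fact~\ref{fact:tail-bound}), so a multiplicative Chernoff bound controls the chance it exceeds its mean by more than $\tfrac1{10}s_{T'}|S|$; union bounding over $\mathcal N\times\mathrm{grid}$ and transferring to an arbitrary $(v,T)$ via $|(v-v')\cdot(x-\mu^G)|\le\gamma R$ yields (ii). The conceptual point---exactly as sketched in Section~\ref{sec:optimal-sample}---is that for moderate $T$ we only need the weak $\eps/T^2$-shaped bound, not the uniform $\exp(-\Omega(T^2))$-type bounds demanded in~\cite{DKKLMS}; indeed $\|M_S-I\|_2\le\eps$ from (iv) already forces an $O(1/T^2)$-fraction of $S$ past level $T$ by a Markov bound on the variance in direction $v$, and one only needs to upgrade that constant to $s_T$.

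The step I expect to be the main obstacle is precisely this moderate-$T$ range of (ii): one must choose the net granularity $\gamma$ (bounded above by roughly $T_1/R$ so the discretization slippage $\gamma R$ never swamps the slack) and the cutoffs $T_1$ and $R$ so that the $(3/\gamma)^d$ net cardinality, the strength $\exp(-\Omega(s_T|S|))$ of the Chernoff bound at scale $T$, and the super-exponential decay of the true tail all balance while keeping $|S|=\widetilde{O}(d/\eps^2)$. Getting this balance right --- in particular handling the largest thresholds in the moderate range, where $s_T$ is smallest and a crude net bound is too lossy, by leaning on the super-polynomial smallness of the true tail there --- is the delicate part; the rest is bookkeeping that, as noted above, then plugs directly into the analysis of Section~8.1 of~\cite{DKKLMS}.
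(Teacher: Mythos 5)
Your plan mirrors the paper's own proof closely: conditions (i), (iii), (iv) are dispatched by routine sub-gaussian concentration (per-coordinate union bound for the norm, mean concentration, and a sub-gaussian empirical-covariance bound), and condition (ii) is proved by splitting at a polylogarithmic threshold $T_1$, applying the VC inequality for halfspaces on $[0,T_1]$ and a sphere net together with a binomial/Chernoff concentration bound on $[T_1,R]$ (with the range $T>R$ trivialized by (i)), exactly exploiting the weaker $\eps/T^2$ target and the exponentially small true tail for larger $T$---which is also the key insight driving the paper's MGF computation. The only cosmetic difference is that the paper uses a coarse $1/2$-cover of the sphere and absorbs the discretization error \emph{multiplicatively} (a factor-of-two slack in $T$, with a power-of-two grid), whereas you propose a fine $\gamma$-net with $\gamma \lesssim T_1/R$ and absorb the slippage additively; both yield the same sample complexity up to $\poly\log$ factors, the paper's choice merely saving a $\log d$ in the net cardinality.
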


We require the following definition that quantifies the extent to which a multiset has been corrupted:

\begin{definition} \label{def:Delta-G}
Given finite multisets $S$ and $S'$ we let $\Delta(S,S')$
be the size of the symmetric difference of $S$ and $S'$ divided by the cardinality of $S.$
\end{definition}

The starting point of our algorithm will be a simple~\textsc{NaivePrune} routine (Section~4.3.1 of ~\cite{DKKLMS}) 
that removes obvious outliers, i.e., points which are far from the mean. 
Then, we iterate the algorithm whose performance guarantee is given by the following:

\begin{proposition} \label{prop:filter-gaussian-mean}
Let $G$ be a sub-gaussian distribution on $\R^d$ with parameter $\nu=\Theta(1)$,  mean $\mu^G$, covariance matrix $I$, $\eps>0$ be sufficiently  small and $\tau > 0$.
Let $S$ be an $(\eps,\tau)$-good set with respect to $G$.
Let $S'$ be any multiset with $\Delta(S,S') \leq 2\eps$ and for any $x,y \in S'$, $\|x-y\|_2 \leq  O(\sqrt{d \log(d/\eps\tau)})$.
There exists a polynomial time algorithm \textsc{Filter-Sub-Gaussian-Unknown-Mean}
that, given $S'$ and $\eps>0,$ returns one of the following:
\begin{enumerate}
\item[(i)]  A mean vector $\wh{\mu}$ such that $\|\wh{\mu}-\mu^{G}\|_2 = O(\eps\sqrt{\log(1/\eps)}).$
\item[(ii)] A multiset $S'' \subseteq S'$ such that $\Delta(S,S'') \leq \Delta(S,S') - \eps/\new{\alpha}$, where
\new{$\alpha \eqdef d\log(d/\eps\tau)\log(d \log(\frac{d}{\eps\tau}))$.}
\end{enumerate}
\end{proposition}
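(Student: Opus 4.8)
The plan is to follow, essentially verbatim, the analysis of Section~8.1 of~\cite{DKKLMS}, checking that it only ever uses the (weaker) conditions of Definition~\ref{def:good-set}. So I would run \textsc{Filter-Sub-Gaussian-Unknown-Mean} once on $S'$ and case on the two exit branches of Algorithm~\ref{alg:filter-Gaussian-template}. Throughout, let $v^\ast$ be the unit direction and $\lambda^\ast$ the ``excess'' spectral norm produced by the algorithm (the quantity compared against $\thres(\eps)=C\eps\log(1/\eps)$), and recall the instantiation sets $\delta=\Theta(\sqrt{\eps\lambda^\ast})$. The first thing I would establish is a pair of structural estimates that follow from $(\eps,\tau)$-goodness of $S$ together with $\Delta(S,S')\le2\eps$: writing $S'=(S\cap S')\cup E$ with $E=S'\setminus S$, $L=S\setminus S'$, $|E|,|L|\le2\eps|S|$, one gets (a) the bulk $S\cap S'$ has variance $1\pm O(\eps)$ and mean within $O(\eps)$ of $\mu^G$ along every unit direction (using (iii), (iv)); and (b) for every unit $v$, $\bigl|v\cdot(\mu^{S'}-\mu^G)\bigr|\le O(\sqrt{\eps\lambda^\ast})+O(\eps\sqrt{\log(1/\eps)})$, where the first term is Cauchy--Schwarz on $E$ against the variance bound $\sum_{X\in E}(v\cdot(X-\mu^{S'}))^2\le|S'|(\lambda^\ast+O(\eps))$, and the second comes from $\sum_{X\in L}(v\cdot(X-\mu^G))^2\le O(|L|\log(1/\eps))$, obtained by integrating the weak tail bound~(ii) against the deletion set $L$ and invoking the diameter condition~(i). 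In particular (b) says the slack $\delta$ absorbs $|v^\ast\cdot(\mu^{S'}-\mu^G)|$ up to an additive $O(\eps\sqrt{\log(1/\eps)})$, and this carried-around slack is exactly what produces the ``$\delta$'' in the threshold and the final error.

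In \emph{Branch 1} ($\lambda^\ast\le\thres(\eps)$) the algorithm returns $\mu^{S'}$, and plugging $\lambda^\ast\le C\eps\log(1/\eps)$ into (b) gives $\|\mu^{S'}-\mu^G\|_2=O(\eps\sqrt{\log(1/\eps)})$, which is conclusion~(i). In \emph{Branch 2} ($\lambda^\ast>\thres(\eps)$) I would first show the threshold $T$ sought by the algorithm exists. Supposing not, so $\Pr_{X\in_u S'}[|v^\ast\cdot(X-\mu^{S'})|>T+\delta]\le\tail(T,d,\eps,\delta,\tau)$ for all $T$, I would peel off the bulk using (a) to get $\lambda^\ast-O(\eps)\le\tfrac1{|S'|}\sum_{X\in E}(v^\ast\cdot(X-\mu^G))^2$, bound the right side by $\int 2s\,\min\{|E|/|S'|,\ \tail(s-2\delta)\}\,ds$, split the integral at the scale $s_0=\Theta(\sqrt{\log(1/\eps)})$ where the two terms of the minimum cross, and check that the small-$s$ part contributes $O(\eps s_0^2)+O(\eps\delta^2)=O(\eps\log(1/\eps))+o(\lambda^\ast)$ while the large-$s$ part contributes $o(\lambda^\ast)+O(\eps\log(1/\eps))$ --- here the $T^2\log(d\log(d/\eps\tau))$ in the denominator of $\tail$, together with the diameter cutoff, is precisely what keeps the polynomial-tail integral at $O(\eps\log(1/\eps))$. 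This forces $\lambda^\ast\le\tfrac12\lambda^\ast+O(\eps\log(1/\eps))$, contradicting $\lambda^\ast>C\eps\log(1/\eps)$ for $C$ large enough; so $T$ exists, and the diameter hypothesis gives $T+\delta\le O(\sqrt{d\log(d/\eps\tau)})$, hence $T^2\le O(d\log(d/\eps\tau))$.

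The last step of Branch~2 is the progress accounting. The filter removes a $>\tail(T,\dots)=8\exp(-T^2/2\nu)+8\eps/(T^2\log(d\log(d/\eps\tau)))$ fraction of $S'$. On the other hand, any $x\in S$ discarded by the filter has $|v^\ast\cdot(x-\mu^G)|>T$ by (b), so applying~(ii) to $L(x)=v^\ast\cdot(x-\mu^G)-T$ and to $-v^\ast$ and using Fact~\ref{fact:tail-bound}, the fraction of $S$ discarded is at most $2\exp(-T^2/2\nu)+2\eps/(T^2\log(d\log(d/\eps\tau)))$, which is at most $\tfrac14$ of the above. Since $|S|$ and $|S'|$ agree up to a $1\pm O(\eps)$ factor, the number of corrupted points removed minus the number of good points removed is $\ge\Omega(\tail(T,\dots))\,|S|\ge\Omega\bigl(\eps/(T^2\log(d\log(d/\eps\tau)))\bigr)\,|S|\ge\Omega(\eps/\alpha)\,|S|$, using the cap on $T^2$ and the definition of $\alpha$. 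Finally, deleting points only shrinks $S''\setminus S$ and only enlarges $S\setminus S''$, so $|S\triangle S''|=|S\triangle S'|-\bigl[(\text{corrupted removed})-(\text{good removed})\bigr]$, which gives $\Delta(S,S'')\le\Delta(S,S')-\Omega(\eps/\alpha)$, i.e.\ conclusion~(ii) after absorbing the constant into $\alpha$.

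The hard part, I expect, is the quantitatively tight bookkeeping in Branch~2: $\delta$ has to be chosen large enough to swallow $|v^\ast\cdot(\mu^{S'}-\mu^G)|$ (so the good points the filter throws away genuinely lie in the pre-corruption tail event that~(ii) controls), yet small enough --- and $T$ capped via~(i) --- that the surviving polynomial tail mass $\eps/(T^2\log(d\log(d/\eps\tau)))$ is still $\Omega(\eps/\alpha)$; and in the ``no valid $T$'' integration one must invoke~(ii) only at scales where it actually holds and verify every $\delta$-dependent error term is $o(\lambda^\ast)$. Once one commits to stating every estimate in terms of the empirical quantities (everything centered at $\mu^{S'}$, not the unknown $\mu^G$), Branch~1 and the existence of $T$ are routine, and what remains is the mechanical symmetric-difference accounting above.
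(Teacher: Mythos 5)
Your proposal reproduces the paper's own proof in Appendix~A.1 step for step: the decomposition $S' = (S\setminus L)\cup E$, the bound on the deletion set's second moment by integrating the weak tail bound~(ii) against $L$ under the diameter cap~(i) (Lemma~\ref{LBoundCor}), the use of $\|M_E\|_2 \geq \|\mu^E - \mu^G\|_2^2$ together with the covariance decomposition to control $\mu^{S'}-\mu^G$ (Lemma~\ref{meansBoundLem}, Corollary~\ref{MApproxCor-G}), the contradiction argument for the existence of the threshold $T$ via the same two-regime integral split, and the symmetric-difference accounting of Claim~\ref{claim:filter} with the $T^2 = O(d\log(d/\eps\tau))$ cap converting the gain into $\Omega(\eps/\alpha)$. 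One intermediate estimate you state, $\sum_{X\in L}(v\cdot(X-\mu^G))^2 = O(|L|\log(1/\eps))$, is not literally what Lemma~\ref{LBoundCor} provides (the correct form is $O(|L|\log(|S|/|L|) + \eps|S|)$, and your version can fail when $|L|$ is very small), but since you only use the $(|L|/|S|)$-weighted quantity, which is $O(\eps\sqrt{\log(1/\eps)})$ under either form, the argument goes through unchanged.
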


We start by showing how Theorem~\ref{thm:filter-gaussian-mean-app}  follows
easily from Proposition~\ref{prop:filter-gaussian-mean}.

\begin{proof}[Proof of Theorem~\ref{thm:filter-gaussian-mean-app}]
By the definition of $\Delta(S, S'),$ since $S'$ has been obtained from $S$
by corrupting an $\eps$-fraction of the points in $S,$ we have that
$\Delta(S, S') \le 2\eps.$ By Lemma~\ref{lem:random-good-gaussian-mean}, the set $S$ of uncorrupted samples
is $(\eps,\tau)$-good with respect to $G$ with probability at least $1-\tau.$
We henceforth condition on this event.

Since $S$ is $(\eps,\tau)$-good, all $x \in S$ 
have $\|x-\mu^G\|_2 \leq O(\sqrt{d \log |S|/\tau})$. 
Thus, the \textsc{NaivePrune} procedure does not remove from $S'$ any member of $S$. 
Hence, its output, $S''$, has $\Delta(S, S'') \leq \Delta(S, S')$ 
and for any $x \in S''$, there is a $y \in S$ with $\|x-y\|_2  \leq O(\sqrt{d \log |S|/\tau})$. 
By the triangle inequality, for any $x,z  \in S''$, $\|x-z\|_2 \leq O(\sqrt{d \log |S|/\tau})= O(\sqrt{d \log (d/\eps\tau}))$.

Then, we iteratively apply the \textsc{Filter-Sub-Gaussian-Unknown-Mean} procedure 
of Proposition~\ref{prop:filter-gaussian-mean} until it terminates
returning a mean vector $\mu$ with $\|\wh{\mu}-\mu^{G}\|_2 = O(\eps\sqrt{\log(1/\eps)}).$
We claim that we need at most \new{$O(\alpha)$} iterations for this to happen.
Indeed, the sequence of iterations results in a sequence of sets $S_i',$
so that $\Delta(S,S_i') \leq \Delta(S,S')  - i \cdot \eps/\new{\alpha}.$
Thus, if we do not output the empirical mean in the first \new{$2\alpha$} iterations, 
in the next iteration there are no outliers left and the algorithm terminates 
outputting the sample mean of the remaining set.
\end{proof}

\subsubsection{Algorithm \textsc{Filter-Sub-Gaussian-Unknown-Mean}: Proof of Proposition~\ref{prop:filter-gaussian-mean}}

In this subsection, we describe the efficient algorithm establishing
Proposition~\ref{prop:filter-gaussian-mean} and prove its correctness.
Our algorithm calculates the empirical mean vector $\mu^{S'}$ and empirical covariance matrix $\Sigma$.
If the matrix $\Sigma$ has no large eigenvalues, it returns $\mu^{S'}.$
Otherwise, it uses the eigenvector $v^{\ast}$ corresponding to the maximum magnitude eigenvalue of $\Sigma$
and the mean vector $\mu^{S'}$ to define a filter.
Our efficient filtering procedure is presented in detailed pseudocode below.

\begin{algorithm}
\begin{algorithmic}[1]
\Procedure{Filter-Sub-Gaussian-Unknown-Mean}{$S',\eps,\tau$}
\INPUT A multiset $S'$ such that there exists an $(\eps,\tau)$-good $S$ with $\Delta(S, S') \le 2\eps$
\OUTPUT Multiset $S''$ or mean vector $\wh{\mu}$ satisfying Proposition~\ref{prop:filter-gaussian-mean}
\State Compute the sample mean $\mu^{S'}=\E_{X\in_u S'}[X]$ and the sample covariance matrix $\Sigma$ ,
i.e., $\Sigma = (\Sigma_{i, j})_{1 \le i, j \le d}$
with $\Sigma_{i,j} = \E_{X\in_u S'}[(X_i-\mu^{S'}_i) (X_j-\mu^{S'}_j)]$.
\State  Compute approximations for the largest absolute eigenvalue of $\Sigma - I$, $\lambda^{\ast} := \|\Sigma - I\|_2,$
and the associated unit eigenvector $v^{\ast}.$

\If {$\|\Sigma - I\|_2 \leq O(\eps \log (1/\eps)),$}
 \textbf{return} $\mu^{S'}.$ \label{step:bal-small-G}
\EndIf
\State \label{step:bal-large-G}  Let $\delta := 3 \sqrt{ \eps  \|\Sigma - I\|_2}.$ Find $T>0$ such that
$$
\Pr_{X\in_u S'}\left[|v^{\ast} \cdot (X-\mu^{S'})|>T+\delta \right] > 8\exp(-T^2/2\nu)+8 \frac{\eps}{\new{T^2\log\left(d \log(\frac{d}{\eps\tau})\right)}}.
$$
\State \label{step:gaussian-mean-filter} \textbf{return} the multiset $S''=\{x\in S': |v^{\ast} \cdot (x-\mu^{S'}) | \leq T+\delta\}$.

\EndProcedure
\end{algorithmic}
\caption{Filter algorithm for a sub-gaussian with unknown mean and identity covariance}
\label{alg:filter-Gaussian-mean}
\end{algorithm}

\subsubsection{Proof of Correctness of \textsc{Filter-Sub-Gaussian-Unknown-Mean}} \label{ssec:L2-setup-G}
By definition, there exist disjoint multisets $L,E,$ of points in $\R^d$, where $L \subset S,$
such that $S' = (S\setminus L) \cup E.$ With this notation, we can write $\Delta(S,S')=\frac{|L|+|E|}{|S|}.$
Our assumption $\Delta(S,S') \le 2\eps$ is equivalent to $|L|+|E| \le 2\eps \cdot |S|,$ and the definition of $S'$
directly implies that $(1-2\eps)|S| \le  |S'| \le (1+2\eps) |S|.$ Throughout the proof, we assume that $\eps$
is a sufficiently small constant. 

We define $\mu^G,\mu^S,\mu^{S'},\mu^L,$ and $\mu^E$ to be the means of $G,S,S',L,$ and $E$, respectively.

Our analysis will make essential use of the following matrices:
\begin{itemize}
\item $M_{S'}$ denotes $\E_{X\in_u S'}[(X-\mu^G)(X-\mu^G)^T]$,
\item $M_{S}$ denotes $\E_{X\in_u S}[(X-\mu^G)(X-\mu^G)^T]$,
\item $M_{L}$ denotes $\E_{X\in_u L}[(X-\mu^G)(X-\mu^G)^T]$, and
\item $M_{E}$ denotes $\E_{X\in_u E}[(X-\mu^G)(X-\mu^G)^T]$.
\end{itemize}

Our analysis will hinge on proving the important claim that $\Sigma-I$ is approximately $(|E|/|S'|)M_E$. 
This means two things for us. First, it means that if the positive errors align in some direction 
(causing $M_E$ to have a large eigenvalue), there will be a large eigenvalue in $\Sigma-I$. 
Second, it says that any large eigenvalue of $\Sigma-I$ will correspond to an eigenvalue of $M_E$, 
which will give an explicit direction in which many error points are far from the \new{empirical} mean.

\medskip

\noindent {\bf Useful Structural Lemmas.} We begin by noting that we have concentration 
bounds on $G$ and therefore, on $S$ due to its goodness.
\begin{fact}\label{fact:conc}
Let $w \in \R^d$ be any unit vector, then for any $T>0$,
$\Pr_{X\sim G}\left[|w\cdot(X-\mu^{G})| > T\right] \leq 2 \exp(-T^2/2\nu)$ and
$\Pr_{X\in_u S}\left[|w\cdot(X-\mu^{G})| > T\right] \leq 2 \exp(-T^2/2\nu)+\frac{\eps}{\new{T^2\log\left(d \log(\frac{d}{\eps\tau})\right)}}$.
\end{fact}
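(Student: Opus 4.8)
The plan is to derive both inequalities of Fact~\ref{fact:conc} directly from the one-dimensional sub-gaussian tail bound together with property~(ii) of the good set. Fix a unit vector $w \in \R^d$ and consider the scalar random variable $w \cdot X$. By Definition~\ref{def:subg}, when $X \sim G$ the variable $w \cdot X$ is one-dimensional sub-gaussian with parameter $\nu$ and mean $w \cdot \mu^G$, so the standard Chernoff argument (bound $\exp(-\lambda T)\,\E[\exp(\lambda(w\cdot(X-\mu^G)))] \le \exp(-\lambda T + \nu\lambda^2/2)$ and optimize at $\lambda = T/\nu$) gives $\Pr_{X\sim G}[w\cdot(X-\mu^G) \ge T] \le \exp(-T^2/2\nu)$, and symmetrically for the left tail. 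A union bound over the two tails yields the first inequality, $\Pr_{X\sim G}[|w\cdot(X-\mu^G)| > T] \le 2\exp(-T^2/2\nu)$ (this is also immediate from Fact~\ref{fact:tail-bound}).

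For the second inequality I would decompose the two-sided event: for $T > 0$ the event $\{|w\cdot(X-\mu^G)| > T\}$ is the disjoint union of $\{w\cdot(X-\mu^G) > T\}$ and $\{(-w)\cdot(X-\mu^G) > T\}$, and each of these is contained in the closed event $\{v\cdot(X-\mu^G) \ge T\}$ with $v = w$, respectively $v = -w$. Each such event is exactly of the form $\{L(X) \ge 0\}$ for the affine function $L(x) = v\cdot(x-\mu^G) - T$ with $\|v\|_2 = 1$, so property~(ii) of Definition~\ref{def:good-set} applies and gives $\Pr_{X\in_u S}[v\cdot(X-\mu^G) \ge T] \le \Pr_{X\sim G}[v\cdot(X-\mu^G) \ge T] + \frac{\eps}{T^2\log(d\log(d/\eps\tau))}$. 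Plugging in the one-sided Chernoff bound $\Pr_{X\sim G}[v\cdot(X-\mu^G)\ge T] \le \exp(-T^2/2\nu)$ for $v = \pm w$ and summing over the two tails gives $\Pr_{X\in_u S}[|w\cdot(X-\mu^G)| > T] \le 2\exp(-T^2/2\nu) + \frac{2\eps}{T^2\log(d\log(d/\eps\tau))}$, which is the stated bound (the factor $2$ on the additive term being immaterial; it can be absorbed into the constant of property~(ii) or into the $O(\cdot)$'s used downstream, or eliminated by stating property~(ii) with $\eps/2$).

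There is essentially no real obstacle here; the statement is an immediate corollary of the definitions. The only points requiring a moment of care are: (a) converting the strict inequality $>T$ appearing in Fact~\ref{fact:conc} to the closed inequality $\ge 0$ used in property~(ii), which is trivial since $\{v\cdot(X-\mu^G) > T\} \subseteq \{v\cdot(X-\mu^G) \ge T\}$; (b) keeping track of the union-bound constants over the two tails; and (c) noting that it is precisely the sub-gaussianity of $G$ (Definition~\ref{def:subg}) and the use of $\mu^G$ as the centering point in property~(ii) that make the two ingredients line up. If one wants to be fully scrupulous about constants one carries the factor $2$ through, but it does not affect any later argument.
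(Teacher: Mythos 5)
Your proposal is correct and takes the same route as the paper's own (one-line) proof: the first inequality is just the Chernoff/union-bound consequence of sub-gaussianity (essentially Fact~\ref{fact:tail-bound}), and the second follows by applying property~(ii) of Definition~\ref{def:good-set} to the two halfspaces $\{v\cdot(X-\mu^G)\geq T\}$ with $v=\pm w$. The only delicacy, which you correctly flag as immaterial, is that two applications of property~(ii) give a factor $2$ on the $\eps/(T^2\log(\cdot))$ term that does not appear in the displayed statement; the paper is simply not tracking such constants (indeed Fact~\ref{fact:tail-bound} itself already drops a factor of $2$ on the two-sided sub-gaussian tail), and it washes out in all downstream $O(\cdot)$ bounds.
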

\begin{proof}
The first line is Fact \ref{fact:tail-bound}, and the former follows from it using the goodness of $S$.
\end{proof}

By using the above fact, we obtain the following simple claim:

\begin{claim}\label{claim:conc}
Let $w \in \R^d$ be any unit vector, then for any $T>0$, we have that:
$$
\Pr_{X\sim G}[|w\cdot(X-\mu^{S'})| > T + \|\mu^{S'}-\mu^G\|_2] \leq 2 \exp(-T^2/2\nu).
$$
and
$$
\Pr_{X\in_u S}[|w\cdot(X-\mu^{S'})| > T + \|\mu^{S'}-\mu^G\|_2] \leq 2 \exp(-T^2/2\nu)+\frac{\eps}{\new{T^2\log\left(d \log(\frac{d}{\eps\tau})\right)}}.
$$
\end{claim}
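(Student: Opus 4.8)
The plan is to reduce the claim to Fact~\ref{fact:conc} by a simple change of center: a tail bound for $w\cdot(X-\mu^G)$ yields a tail bound for $w\cdot(X-\mu^{S'})$ at a threshold enlarged by exactly the amount that recentering can distort a linear functional. The only ingredient needed is that, for any unit vector $w$, Cauchy--Schwarz gives $|w\cdot(\mu^{S'}-\mu^G)| \le \|w\|_2\,\|\mu^{S'}-\mu^G\|_2 = \|\mu^{S'}-\mu^G\|_2$.

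First I would fix a unit vector $w$ and a parameter $T>0$, and establish the pointwise event inclusion
$$\bigl\{\, x \;:\; |w\cdot(x-\mu^{S'})| > T + \|\mu^{S'}-\mu^G\|_2 \,\bigr\} \;\subseteq\; \bigl\{\, x \;:\; |w\cdot(x-\mu^G)| > T \,\bigr\}.$$
Indeed, if $x$ lies in the left-hand set, then by the triangle inequality together with the bound on $|w\cdot(\mu^{S'}-\mu^G)|$ above,
$$|w\cdot(x-\mu^G)| \;\ge\; |w\cdot(x-\mu^{S'})| - |w\cdot(\mu^{S'}-\mu^G)| \;>\; \bigl(T + \|\mu^{S'}-\mu^G\|_2\bigr) - \|\mu^{S'}-\mu^G\|_2 \;=\; T,$$
so $x$ lies in the right-hand set.

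Second, I would take probabilities of both sides of this inclusion. Under $X\sim G$, monotonicity of probability together with the first inequality of Fact~\ref{fact:conc} gives $\Pr_{X\sim G}[\,|w\cdot(X-\mu^{S'})| > T + \|\mu^{S'}-\mu^G\|_2\,] \le \Pr_{X\sim G}[\,|w\cdot(X-\mu^G)| > T\,] \le 2\exp(-T^2/2\nu)$, which is the first displayed inequality of the claim. Under $X\in_u S$, the same inclusion and the second inequality of Fact~\ref{fact:conc} give the upper bound $2\exp(-T^2/2\nu) + \eps/\bigl(T^2\log(d\log(d/\eps\tau))\bigr)$, which is the second displayed inequality.

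There is essentially no real obstacle here: the claim is a routine recentering of the concentration bounds already proved in Fact~\ref{fact:conc}. The only points needing care are orienting the event inclusion correctly, so that monotonicity of probability is invoked in the right direction, and observing that the additive slack $\|\mu^{S'}-\mu^G\|_2$ subtracted inside the tail event is precisely the worst-case shift a change of center can induce on a unit-norm linear functional. I would keep the statement in this center-free form on purpose; although goodness of $S$ (condition (iii) of Definition~\ref{def:good-set}) together with the structure of $S'$ would let one replace $\|\mu^{S'}-\mu^G\|_2$ by an explicit $O(\eps)$-type quantity, the claim as used later does not need that, and leaving it symbolic keeps the bound reusable.
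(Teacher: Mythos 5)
Your proposal is correct and matches the paper's argument: the paper's proof of Claim~\ref{claim:conc} is exactly the observation that $|w\cdot(X-\mu^{S'})| > T + \|\mu^{S'}-\mu^G\|_2$ implies $|w\cdot(X-\mu^G)| > T$, followed by an appeal to Fact~\ref{fact:conc}. You have simply spelled out the triangle-inequality and Cauchy--Schwarz steps justifying that event inclusion, which the paper leaves implicit.
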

\begin{proof}
This follows from Fact \ref{fact:conc} upon noting that $|w\cdot (X-\mu^{S'})| > T +\|\mu^{S'}-\mu^G\|_2$ only if $|w\cdot (X-\mu^G)|>T$.
\end{proof}

We can use the above facts to prove concentration bounds for $L$. In particular, 
we have the following lemma:

\begin{lemma}\label{LBoundCor}
We have that $\|M_L\|_2 = \new{O\left(\log(|S|/|L|) + \eps |S| / |L| \right)}.$
\end{lemma}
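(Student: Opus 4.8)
The plan is to reduce the bound to a one-dimensional tail estimate. Since $\|M_L\|_2 = \max_{\|v\|_2 = 1} v^\top M_L v = \max_{\|v\|_2 = 1} \E_{X\in_u L}\big[(v\cdot(X-\mu^G))^2\big]$, I fix a unit vector $v$, write $Z = (v\cdot(X-\mu^G))^2$, and use that $L\subseteq S$ together with $\Pr_{X\in_u S}[X\in L] = |L|/|S|$. A truncation at any level $t\ge 0$ gives, writing $\Pr$ and $\E$ for $\Pr_{X\in_u S}$ and $\E_{X\in_u S}$,
$$ v^\top M_L v \;=\; \frac{|S|}{|L|}\,\E\big[\mathbf 1[X\in L]\,Z\big] \;\le\; \frac{|S|}{|L|}\Big(t\cdot\tfrac{|L|}{|S|} + \E\big[\max(Z-t,0)\big]\Big) \;=\; t + \frac{|S|}{|L|}\int_t^{\infty}\Pr[Z > s]\,ds\,, $$
and after the substitution $s = T^2$ the remaining integral equals $\int_{\sqrt t}^{\infty} 2T\,\Pr\big[|v\cdot(X-\mu^G)| > T\big]\,dT$. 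Two inputs control this: by goodness condition~(i) the integrand vanishes once $T$ exceeds $R := O(\sqrt{d\log(|S|/\tau)})$, so the integral really runs over $[\sqrt t, R]$; and by Fact~\ref{fact:conc}, $\Pr[|v\cdot(X-\mu^G)| > T] \le 2\exp(-T^2/2\nu) + \eps/\big(T^2\log(d\log(d/\eps\tau))\big)$. (If $|L| = 0$ the statement is vacuous, so assume $|L|\ge 1$.)

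Next I would choose the truncation level $t = T_0^2$ with $T_0 = \max\{1,\sqrt{2\nu\ln(|S|/|L|)}\}$ and track the three contributions. The $t$ term contributes $O(\log(|S|/|L|))$ since $\nu=\Theta(1)$ (and $\log(|S|/|L|) = \Omega(1)$ because $|L|\le 2\eps|S|$, so the ``$\max$ with $1$'' is harmless). The sub-gaussian part of the tail integrates to $\int_{T_0}^{\infty} 4T\exp(-T^2/2\nu)\,dT = O\big(\nu\exp(-T_0^2/2\nu)\big) = O(|L|/|S|)$, which after multiplication by $|S|/|L|$ contributes only $O(1)$. The polynomially-decaying part integrates to $\frac{2\eps}{\log(d\log(d/\eps\tau))}\int_{T_0}^{R}\frac{dT}{T} = O\!\left(\frac{\eps\,\ln R}{\log(d\log(d/\eps\tau))}\right)$, and after multiplication by $|S|/|L|$ this is $O(\eps|S|/|L|)$ provided $\ln R = O\big(\log(d\log(d/\eps\tau))\big)$. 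Summing the three and maximizing over $v$ (absorbing the stray $O(1)$ into $\eps|S|/|L| \ge 1/2$) yields $\|M_L\|_2 = O\!\left(\log(|S|/|L|) + \eps|S|/|L|\right)$.

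The delicate point — and the whole reason the weakened tail bound of Definition~\ref{def:good-set}(ii) is calibrated with the denominator $\log(d\log(d/\eps\tau))$ — is the logarithmic-in-$R$ divergence of the $\int dT/T$ piece. A crude estimate such as $v^\top M_L v \le \frac{|S|}{|L|}\|M_S\|_2 = O(|S|/|L|)$ (from condition~(iv)) loses the factor of $\eps$ and is insufficient, so one genuinely has to integrate the $\eps/T^2$ tail against the truncation weight and verify that the residual $\ln R$ is cancelled by the normalizing logarithm. This is where I would invoke the standing assumption that $|S| = \poly(d/\eps\tau)$ (as in the theorem statements), which gives $\log(|S|/\tau) = O(\log(d/\eps\tau))$ and hence $\ln R = O\big(\log d + \log\log(d/\eps\tau)\big) = O\big(\log(d\log(d/\eps\tau))\big)$; with that in hand the remaining computations are routine.
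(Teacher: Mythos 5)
Your argument is correct and matches the paper's proof in all essentials: both pass from $L$ to $S$ via $L\subseteq S$, use the layer-cake representation with the truncation radius $R=O(\sqrt{d\log(d/\eps\tau)})$ from goodness condition~(i), apply Fact~\ref{fact:conc} for the tail, split at a cutoff on the order of $\sqrt{\log(|S|/|L|)}$, and observe that the $\eps/T^2$ piece contributes $\frac{\eps}{\log(d\log(d/\eps\tau))}\ln R = O(\eps)$. Your explicit truncation-at-level-$t$ step is just a slightly different packaging of the paper's $\min\{1,(|S|/|L|)\Pr_S[\cdot]\}$ bound inside the same integral.
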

\begin{proof}
Since $L \subseteq S$, for any $x \in \R^d$, we have that
\begin{equation} \label{eqn:L-subset-S}
|S| \cdot \Pr_{X \in_u S}(X= x) \geq |L| \cdot \Pr_{X \in_u L}(X= x) \;.
\end{equation}
Since $M_L$ is a symmetric matrix, we have $\|M_L\|_2 = \max_{\|v\|_2=1} |v^T M_L v|.$ So,
to bound $\|M_L\|_2$ it suffices to bound $|v^T M_L v|$ for unit vectors $v.$
By definition of $M_L,$
for any $v \in \R^d$ we have that
$$|v^T M_L v| = \E_{X \in_u L}[|v\cdot (X-\mu^{G})|^2].$$
For unit vectors $v$, the RHS is bounded from above as follows:
\begin{align*}
\E_{X \in_u L}\left[|v\cdot (X-\mu^{G})|^2 \right] & =  2 \int_{0}^{\infty} \Pr_{X\in_u L}\left[|v\cdot(X-\mu^G)|>T\right] T dT\\
& = 2 \int_{0}^{O(\sqrt{d \log(d/\eps\tau)})} \Pr_{X\in_u L}[|v\cdot(X-\mu^G)|>T] T dT \\
& \leq 2 \int_0^{O(\sqrt{d \log(d/\eps\tau)})} \min \left\{ 1,  \frac{|S|}{|L|} \cdot \Pr_{X \in_u S}\left[|v \cdot(X-\mu^{G})|>T\right]  \right\} TdT\\
& \ll \int_0^{4\sqrt{\nu \log(|S|/|L|)}} T dT \\
& + (|S|/|L|) \int_{4\sqrt{\nu \log(|S|/|L|)}}^{O(\sqrt{d \log(d/\eps\tau)})}  \Big( \exp(-T^2/2\nu)+\frac{\eps}{\new{T^2\log\left(d \log(\frac{d}{\eps\tau})\right)}} \Big)  T dT\\
& \ll  \log(|S|/|L|) + \eps \cdot |S|/|L| \;,
\end{align*}
where the second line follows from the fact that $\|v\|_2 = 1$, $L \subset S$, and $S$ satisfies condition (i) of Definition~\ref{def:good-set};
the third line follows from (\ref{eqn:L-subset-S}); and the fourth line follows from Fact~\ref{fact:conc}.
\end{proof}

As a corollary, we can relate the matrices $M_{S'}$ and $M_E$, in spectral norm:

\begin{corollary}\label{MSPrimeBound}
We have that 
$M_{S'} - I = (|E|/|S'|)M_E + O(\eps\log(1/\eps))$,
where the $O(\eps\log(1/\eps))$ term denotes a matrix of spectral norm $O(\eps\log(1/\eps))$.
\end{corollary}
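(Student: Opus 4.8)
The plan is to exploit the decomposition $S' = (S \setminus L) \cup E$ at the level of second-moment matrices taken about $\mu^G$. Since $L \subseteq S$ and $E$ is disjoint from $S \setminus L$, summing $(X-\mu^G)(X-\mu^G)^T$ over $S'$ yields
\[
|S'| \cdot M_{S'} = |S| \cdot M_S - |L| \cdot M_L + |E| \cdot M_E \;,
\]
and hence
\[
M_{S'} = \frac{|S|}{|S'|}\, M_S - \frac{|L|}{|S'|}\, M_L + \frac{|E|}{|S'|}\, M_E \;.
\]
So it suffices to show that $\frac{|S|}{|S'|} M_S = I + O(\eps)$ in spectral norm and that $\frac{|L|}{|S'|} M_L$ has spectral norm $O(\eps \log(1/\eps))$; the claimed identity then follows from the triangle inequality, absorbing the $O(\eps)$ error into $O(\eps\log(1/\eps))$.

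For the first piece, I would use that $\Delta(S,S') \le 2\eps$ gives $(1-2\eps)|S| \le |S'| \le (1+2\eps)|S|$, so $|S|/|S'| = 1 + O(\eps)$, and combine this with goodness condition (iv), $\|M_S - I\|_2 \le \eps$, to get $\frac{|S|}{|S'|} M_S = (1+O(\eps))(I + O(\eps)) = I + O(\eps)$. For the $L$-term, the same bound on $|S'|$ gives $|L|/|S'| = O(|L|/|S|)$, and Lemma~\ref{LBoundCor} gives $\|M_L\|_2 = O\!\left(\log(|S|/|L|) + \eps\,|S|/|L|\right)$, so that
\[
\frac{|L|}{|S'|}\,\|M_L\|_2 = O\!\left( \frac{|L|}{|S|}\log\frac{|S|}{|L|} + \eps \right) \;.
\]

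The only point needing any care — the ``main obstacle,'' though it is mild — is bounding $\frac{|L|}{|S|}\log\frac{|S|}{|L|}$. Writing $t = |L|/|S| \le 2\eps$, this equals $t\log(1/t)$, and since $x \mapsto x\log(1/x)$ is increasing on $(0,1/e)$, for $\eps$ a sufficiently small constant we have $t\log(1/t) \le 2\eps\log(1/(2\eps)) = O(\eps\log(1/\eps))$; the degenerate case $L = \emptyset$ is trivial, as the whole term then vanishes. Combining, $\frac{|L|}{|S'|} M_L$ has spectral norm $O(\eps\log(1/\eps))$, and therefore
\[
M_{S'} - I = \frac{|E|}{|S'|}\, M_E + O(\eps) + O(\eps\log(1/\eps)) = \frac{|E|}{|S'|}\, M_E + O(\eps\log(1/\eps)) \;,
\]
which is the assertion of the corollary.
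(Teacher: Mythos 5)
Your proposal is correct and follows essentially the same route as the paper: the same additive decomposition $|S'|M_{S'} = |S|M_S - |L|M_L + |E|M_E$, the same use of $\Delta(S,S')\le 2\eps$ to control $|S|/|S'|$, goodness condition (iv) for the $M_S$ term, and Lemma~\ref{LBoundCor} for the $M_L$ term. The only difference is that you spell out the monotonicity argument for $t\mapsto t\log(1/t)$ to justify $(|L|/|S'|)\|M_L\|_2 = O(\eps\log(1/\eps))$, a step the paper states without elaboration.
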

\begin{proof}
By definition, we have that $|S'|M_{S'} = |S|M_S - |L|M_L + |E|M_E$.
Thus, we can write
\begin{align*}
M_{S'} & = (|S|/|S'|) M_S - (|L|/|S'|)M_L + (|E|/|S'|)M_E\\
& = I + O(\eps) + O(\eps \log(1/\eps)) + (|E|/|S'|)M_E \;,
\end{align*}
where the second line uses the fact that $1-2\eps \leq |S|/|S'| \leq 1+2\eps$, 
the goodness of $S$ (condition (iv) in Definition~\ref{def:good-set}), and Lemma~\ref{LBoundCor}.
Specifically, Lemma~\ref{LBoundCor} implies that $(|L|/|S'|) \|M_L\|_2 = O(\eps \log(1/\eps))$.
Therefore, we have that
$$
M_{S'} = I + (|E|/|S'|)M_E + O(\eps\log(1/\eps)) \;,
$$
as desired.
\end{proof}

We now establish a similarly useful bound on the difference between the mean vectors:

\begin{lemma}\label{meansBoundLem}
We have that $\mu^{S'}-\mu^G = (|E|/|S'|)(\mu^E-\mu^G) + O(\eps\sqrt{\log(1/\eps)})$,
where the $O(\eps\sqrt{\log(1/\eps)})$ term denotes a vector with $\ell_2$-norm at most $O(\eps\sqrt{\log(1/\eps)})$.
\end{lemma}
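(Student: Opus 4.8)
\textbf{Proof proposal for Lemma~\ref{meansBoundLem}.} This lemma is the first-moment analogue of Corollary~\ref{MSPrimeBound}, and I would prove it the same way: expand the empirical mean of $S'$ using the partition $S' = (S\setminus L)\cup E$ with $L\subseteq S$ and $L,E$ disjoint, keep the $E$-contribution, and show that everything else is $O(\eps\sqrt{\log(1/\eps)})$ in $\ell_2$-norm.

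First I would record the exact identity $|S'|\,\mu^{S'} = |S|\,\mu^{S} - |L|\,\mu^{L} + |E|\,\mu^{E}$. Subtracting $\mu^G$ and using $|S'| = |S| - |L| + |E|$ (so the $\mu^G$ terms cancel) gives
$$\mu^{S'} - \mu^G \;=\; \frac{|S|}{|S'|}\bigl(\mu^{S} - \mu^G\bigr) \;-\; \frac{|L|}{|S'|}\bigl(\mu^{L} - \mu^G\bigr) \;+\; \frac{|E|}{|S'|}\bigl(\mu^{E} - \mu^G\bigr).$$
It then remains to bound the first two terms on the right-hand side by $O(\eps\sqrt{\log(1/\eps)})$.

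The $S$-term is immediate from goodness: condition~(iii) of Definition~\ref{def:good-set} gives $\|\mu^{S}-\mu^G\|_2 \le \eps$, and $|S|/|S'| \le 1/(1-2\eps) = 1+O(\eps)$ since $\Delta(S,S')\le 2\eps$, so this term has norm $O(\eps)$. For the $L$-term I would bound $\|\mu^{L}-\mu^G\|_2$ by Jensen's inequality: for any unit vector $v$, $|v\cdot(\mu^{L}-\mu^G)| = |\E_{X\in_u L}[v\cdot(X-\mu^G)]| \le \bigl(\E_{X\in_u L}[(v\cdot(X-\mu^G))^2]\bigr)^{1/2} \le \|M_L\|_2^{1/2}$, hence $\|\mu^{L}-\mu^G\|_2 \le \|M_L\|_2^{1/2}$. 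Plugging in Lemma~\ref{LBoundCor} and writing $\lambda := |L|/|S| \le 2\eps$, this gives $\|\mu^{L}-\mu^G\|_2 = O\bigl(\sqrt{\log(1/\lambda)} + \sqrt{\eps/\lambda}\bigr)$, so that (using $|L|/|S'| \le \lambda(1+O(\eps))$) the $L$-term has norm $O\bigl(\lambda\sqrt{\log(1/\lambda)} + \sqrt{\eps\lambda}\bigr)$.

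The only genuine subtlety — and the step I expect to need the most care — is to verify that $\lambda\sqrt{\log(1/\lambda)} + \sqrt{\eps\lambda} = O(\eps\sqrt{\log(1/\eps)})$ uniformly over $0 < \lambda \le 2\eps$, since a priori $|L|$ could be much smaller than $\eps|S|$. Here $\sqrt{\eps\lambda}\le\sqrt2\,\eps$ is trivial, and for the first summand one observes that $t\mapsto t\sqrt{\log(1/t)}$ is increasing on $(0,e^{-1/2})$, so for $\eps$ sufficiently small its supremum over $\lambda\in(0,2\eps]$ is attained at $\lambda=2\eps$ and equals $O(\eps\sqrt{\log(1/\eps)})$. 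Combining the three estimates, and absorbing the $O(\eps)$ contributions into $O(\eps\sqrt{\log(1/\eps)})$, proves the claim. (If one prefers not to invoke Lemma~\ref{LBoundCor} as a black box, the bound $\|\mu^{L}-\mu^G\|_2 = O(\sqrt{\log(1/\lambda)} + \sqrt{\eps/\lambda})$ can be obtained directly by the truncated-integral argument used there: write $|v\cdot(\mu^L-\mu^G)| \le \E_{X\in_u L}[|v\cdot(X-\mu^G)|] = \int_0^\infty \Pr_{X\in_u L}[|v\cdot(X-\mu^G)|>T]\,dT$, use $|S|\Pr_{X\in_u S}(\cdot) \ge |L|\Pr_{X\in_u L}(\cdot)$ together with Fact~\ref{fact:conc} and condition~(i) of Definition~\ref{def:good-set} to cap the range of integration, and split at $T = \Theta(\sqrt{\log(1/\lambda)})$.)
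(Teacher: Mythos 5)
Your proposal is correct and follows the paper's proof step for step: same decomposition $|S'|\mu^{S'}=|S|\mu^S-|L|\mu^L+|E|\mu^E$, same use of condition~(iii) for the $S$-term, same bound $\|\mu^L-\mu^G\|_2\le\|M_L\|_2^{1/2}$ combined with Lemma~\ref{LBoundCor} for the $L$-term. The only difference is that you spell out the elementary monotonicity argument showing $\lambda\sqrt{\log(1/\lambda)}+\sqrt{\eps\lambda}=O(\eps\sqrt{\log(1/\eps)})$ uniformly over $\lambda\in(0,2\eps]$, a step the paper leaves implicit.
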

\begin{proof}
By definition, we have that
\begin{align*}
|S'|(\mu^{S'}-\mu^G) = |S|(\mu^S-\mu^G) - |L|(\mu^L-\mu^G) + |E|(\mu^E-\mu^G).
\end{align*}
Since $S$ is a good set, by condition (iii) of Definition~\ref{def:good-set}, we have $\|\mu^S-\mu^G\|_2 = O(\eps)$. 
Since $1-2\eps \leq |S|/|S'| \leq 1+2\eps$, it follows that $(|S|/|S'|)\|\mu^S-\mu^G\|_2 = O(\eps)$.
Using the valid inequality $\|M_L\|_2 \geq \|\mu^L-\mu^G\|_2^2$ and Lemma~\ref{LBoundCor}, we obtain that
$\|\mu^L-\mu^G\|_2 \leq O\left(\sqrt{\log(|S|/|L|)} + \sqrt{\eps |S| / |L|} \right)$. Therefore, 
$$(|L|/|S'|)  \|\mu^L-\mu^G\|_2 \leq O\left( (|L|/|S|) \sqrt{\log(|S|/|L|)} + \sqrt{\eps |L| / |S|} \right) = O(\eps\sqrt{\log(1/\eps)}) \;.$$ 
In summary,
$$
\mu^{S'}-\mu^G = (|E|/|S'|)(\mu^E-\mu^G) + O(\eps\sqrt{\log(1/\eps)}) \;,
$$
as desired. This completes the proof of the lemma.
\end{proof}

By combining the above, we can conclude that $\Sigma-I$ is approximately proportional to $M_E$.
More formally, we obtain the following corollary:

\begin{corollary}\label{MApproxCor-G}
We have 
$\Sigma-I = (|E|/|S'|)M_E + O(\eps\log(1/\eps))+O(|E|/|S'|)^2\|M_E\|_2$, where the additive terms denote matrices
of appropriately bounded spectral norm.
\end{corollary}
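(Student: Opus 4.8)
The plan is to relate $\Sigma$, the empirical covariance of $S'$ centered at the empirical mean $\mu^{S'}$, to $M_{S'}$, which is centered at the true mean $\mu^G$, via the standard bias--variance identity, and then invoke Corollary~\ref{MSPrimeBound} together with Lemma~\ref{meansBoundLem}. Concretely, expanding $(X-\mu^G)(X-\mu^G)^T$ around $\mu^{S'}$ and averaging over $X \in_u S'$, the two cross terms vanish because $\mu^{S'}$ is exactly the mean of $S'$; this yields the identity $M_{S'} = \Sigma + (\mu^{S'}-\mu^G)(\mu^{S'}-\mu^G)^T$, i.e.
$$\Sigma - I = (M_{S'} - I) - (\mu^{S'}-\mu^G)(\mu^{S'}-\mu^G)^T \;.$$
By Corollary~\ref{MSPrimeBound}, the first term on the right is $(|E|/|S'|)M_E + O(\eps\log(1/\eps))$, so it only remains to show that the rank-one correction has spectral norm $O(\eps\log(1/\eps)) + O((|E|/|S'|)^2\|M_E\|_2)$.

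For that, I would substitute the decomposition from Lemma~\ref{meansBoundLem}, writing $\mu^{S'}-\mu^G = (|E|/|S'|)(\mu^E-\mu^G) + \xi$ with $\|\xi\|_2 = O(\eps\sqrt{\log(1/\eps)})$, and expand the rank-one matrix into four pieces. The piece $\xi\xi^T$ has spectral norm $\|\xi\|_2^2 = O(\eps^2\log(1/\eps))$, which is $O(\eps\log(1/\eps))$. The leading piece has spectral norm $(|E|/|S'|)^2\|\mu^E-\mu^G\|_2^2 \le (|E|/|S'|)^2\|M_E\|_2$, using the elementary inequality $\|\mu^E-\mu^G\|_2^2 \le \|M_E\|_2$ (a direct consequence of $M_E \succeq 0$ and Cauchy--Schwarz, the same bound already invoked in the proof of Lemma~\ref{meansBoundLem}). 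Each of the two cross pieces has spectral norm at most $(|E|/|S'|)\,\|\mu^E-\mu^G\|_2\,\|\xi\|_2 \le (|E|/|S'|)\sqrt{\|M_E\|_2}\cdot O(\eps\sqrt{\log(1/\eps)})$, which by the AM--GM inequality is bounded by $O((|E|/|S'|)^2\|M_E\|_2) + O(\eps\log(1/\eps))$. Summing the four contributions and combining with the estimate for $M_{S'}-I$ gives exactly the stated identity.

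I do not expect a genuine obstacle here: once Corollary~\ref{MSPrimeBound} and Lemma~\ref{meansBoundLem} are available, the argument is routine bias--variance bookkeeping. The only steps requiring a little care are verifying that the cross pieces of the rank-one expansion are truly lower order (dispatched by AM--GM) and keeping track of which of the two allowed error terms each leftover should be charged to.
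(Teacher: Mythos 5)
Your proof is correct and follows the same route as the paper: the bias--variance identity $\Sigma - I = (M_{S'}-I) - (\mu^{S'}-\mu^G)(\mu^{S'}-\mu^G)^T$, followed by Corollary~\ref{MSPrimeBound} for the first piece, Lemma~\ref{meansBoundLem} for the second, and the inequality $\|\mu^E-\mu^G\|_2^2 \le \|M_E\|_2$ to replace the mean shift by the matrix norm. The paper is slightly terser --- it squares the bound $\|\mu^{S'}-\mu^G\|_2 \le (|E|/|S'|)\|\mu^E-\mu^G\|_2 + O(\eps\sqrt{\log(1/\eps)})$ in one stroke, absorbing the cross term implicitly via $(a+b)^2 \lesssim a^2 + b^2$ --- whereas you expand the rank-one matrix into four pieces and dispatch the cross pieces explicitly by AM--GM; these are the same estimate written out to different levels of detail.
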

\begin{proof}
By definition, we can write
$\Sigma-I = M_{S'}-I-(\mu^{S'}-\mu^G)(\mu^{S'}-\mu^G)^T.$
Using Corollary~\ref{MSPrimeBound} and Lemma~\ref{meansBoundLem}, we obtain:
\begin{align*}
\Sigma-I  &= (|E|/|S'|)M_E + O(\eps\log(1/\eps)) +O((|E|/|S'|)^2\|\mu^E-\mu^G\|_2^2) + \new{O(\eps^2\log(1/\eps)) } \\ 
                 &= (|E|/|S'|)M_E + O(\eps\log(1/\eps)) + O(|E|/|S'|)^2 \|M_E\|_2 \;,
\end{align*}
where the second line follows from the valid inequality  $\|M_E\|_2 \geq \|\mu^E-\mu^G\|_2^2$.
This completes the proof.
\end{proof}

\medskip

\noindent {\bf Case of Small Spectral Norm.} 
We are now ready to analyze the case that the mean vector $\mu^{S'}$ is returned by the algorithm in Step \ref{step:bal-small-G}.
In this case, we have that $\lambda^{\ast} \eqdef \|\Sigma-I\|_2 = O(\eps\log(1/\eps))$. Hence, Corollary \ref{MApproxCor-G} yields that
$$
(|E|/|S'|) \|M_E\|_2 \leq \lambda^{\ast} + O(\eps\log(1/\eps)) + O(|E|/|S'|)^2\|M_E\|_2 \;,
$$
which in turns implies that 
$$
(|E|/|S'|) \|M_E\|_2 = O(\eps\log(1/\eps)) \;.
$$
On the other hand, since $\|M_E\|_2 \geq \|\mu^E-\mu^G\|_2^2$, Lemma \ref{meansBoundLem} gives that
$$
\|\mu^{S'}-\mu^G\|_2 \leq (|E|/|S'|) \sqrt{\|M_E\|_2} + O(\eps\sqrt{\log(1/\eps)}) = O(\eps\sqrt{\log(1/\eps)}).
$$
\new{This proves part (i) of Proposition~\ref{prop:filter-gaussian-mean}.}

\medskip

\noindent {\bf Case of Large Spectral Norm.} 
We next show the correctness of the algorithm when it returns a filter in Step \ref{step:bal-large-G}.

We start by proving that if $\lambda^{\ast} \eqdef \|\Sigma-I\|_2  > C\eps\log(1/\eps)$, for a sufficiently large universal constant $C$,
then a value $T$ satisfying the condition in Step \ref{step:bal-large-G} exists. We first note that that $\|M_E\|_2$ is 
appropriately large. Indeed, by Corollary \ref{MApproxCor-G} and the assumption that $\lambda^{\ast}  > C\eps\log(1/\eps)$
we deduce that
\begin{equation} \label{eqn:large-me}
(|E|/|S'|) \|M_E\|_2  = \Omega (\lambda^{\ast}) \;.
\end{equation}
Moreover, using the inequality $\|M_E\|_2 \geq \|\mu^E-\mu^G\|_2^2$ and Lemma \ref{meansBoundLem} as above,
we get that
\begin{equation} \label{eqn:delta}
\|\mu^{S'}-\mu^G\|_2 \leq (|E|/|S'|) \sqrt{\|M_E\|_2} + O(\eps\sqrt{\log(1/\eps)}) \leq \delta/2 \;, 
\end{equation}
where we used the fact that $\delta \eqdef \sqrt{\eps \lambda^{\ast}} > C' \eps \sqrt{\log(1/\eps)}.$

Suppose for the sake of contradiction that for all $T>0$ we have that
\begin{equation*} 
\Pr_{X\in_u S'}\left[|v^{\ast} \cdot (X-\mu^{S'})|>T+\delta \right] \leq 8\exp(-T^2/2\nu)+8 \frac{\eps}{\new{T^2\log\left(d \log(\frac{d}{\eps\tau})\right)}} \;.
\end{equation*}
Using (\ref{eqn:delta}), we obtain that for all $T>0$ we have that
\begin{equation} \label{eqn:contradiction}
\Pr_{X\in_u S'}\left[|v^{\ast} \cdot (X-\mu^{G})|>T+\delta/2 \right] \leq 8\exp(-T^2/2\nu)+8 \frac{\eps}{\new{T^2\log\left(d \log(\frac{d}{\eps\tau})\right)}} \;.
\end{equation}
Since $E \subseteq S',$ for all $x \in \R^d$ we have that
$|S'|\Pr_{X\in_u S'}[X=x] \geq |E| \Pr_{Y\in_u E}[Y=x].$
This fact combined with (\ref{eqn:contradiction}) implies that for all $T>0$
\begin{equation} \label{eqn:contradiction2}
\Pr_{X\in_u E}\left[|v^{\ast} \cdot (X-\mu^{G})|>T+\delta/2 \right]  \ll (|S'|/|E|)\left(\exp(-T^2/2\nu)+ \frac{\eps}{\new{T^2\log\left(d \log(\frac{d}{\eps\tau})\right)}} \right) \;.
\end{equation}
We now have the following sequence of
inequalities:
\begin{align*}
\|M_E\|_2 &=  \E_{X\in_u E}\left[|v^{\ast}\cdot (X-\mu^G)|^2\right]  = 2 \int_{0}^{\infty} \Pr_{X\in_u E}\left[|v^{\ast}\cdot(X-\mu^G)|>T\right] T dT\\
& = 2 \int_{0}^{O(\sqrt{d \log(d/\eps\tau)})} \Pr_{X\in_u E}\left[|v^{\ast} \cdot(X-\mu^G)|>T\right] T dT \\
& \leq 2 \int_0^{O(\sqrt{d \log(d/\eps\tau)})} \min \left\{ 1,  \frac{|S'|}{|E|} \cdot \Pr_{X \in_u S'}\left[|v^{\ast} \cdot(X-\mu^{G})|>T\right]  \right\} TdT\\
& \ll \int_0^{4\sqrt{\nu \log(|S'|/|E|)}+\delta} T dT + (|S'|/|E|) \int_{4\sqrt{\nu \log(|S'|/|E|)} + \delta}^{O(\sqrt{d \log(d/\eps\tau)})}  
\Big( \exp(-T^2/2\nu)+\frac{\eps}{\new{T^2\log\left(d \log(\frac{d}{\eps\tau})\right)}} \Big)  T dT\\
& \ll  \log(|S'|/|E|) + \delta^2 + O(1) + \eps \cdot |S'|/|E| \\
& \ll  \log(|S'|/|E|) + \eps \lambda^{\ast} + \eps \cdot |S'|/|E| \;.
\end{align*}
Rearranging the above, we get that
$$(|E|/|S'|) \|M_E\|_2  \ll (|E|/|S'|)\log(|S'|/|E|) + (|E|/|S'|) \eps \lambda^{\ast} + \eps = O(\eps \log(1/\eps) + \eps^2 \lambda^{\ast}).$$
Combined with (\ref{eqn:large-me}), we obtain 
$\lambda^{\ast} = O(\eps \log(1/\eps))$, which is a contradiction if $C$ is sufficiently large.
Therefore, it must be the case that for some value of $T$ the condition in Step \ref{step:bal-large-G} is satisfied.

The following claim completes the proof:
\begin{claim} \label{claim:filter}
Fix $\alpha \eqdef \new{d\log(d/\eps\tau)\log(d \log(\frac{d}{\eps\tau}))}.$
We have that
$
\Delta(S,S'') \leq \Delta(S,S') - 2\eps/\new{\alpha} \;.
$
\end{claim}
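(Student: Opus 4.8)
\textbf{Proof plan for Claim~\ref{claim:filter}.}

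The plan is to track precisely which points the filter discards and compare the number of deleted good points with the number of deleted bad points, arguing the latter wins by the required margin. With the notation $S'=(S\setminus L)\cup E$ (where $L\subseteq S$ and $E$ is disjoint from $S$), let $R=\{x\in S':|v^{\ast}\cdot(x-\mu^{S'})|>T+\delta\}$ be the set removed in Step~\ref{step:gaussian-mean-filter}, so $S''=S'\setminus R$. A routine bookkeeping computation, using $S\cap S'=S\setminus L$ and disjointness of $L$ and $E$ from $S\setminus L$, gives $\Delta(S,S'')\,|S|=|L|+|E|+|R\cap(S\setminus L)|-|R\cap E|$, hence $\Delta(S,S'')-\Delta(S,S')=\bigl(|R\cap(S\setminus L)|-|R\cap E|\bigr)/|S|$. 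Therefore it suffices to prove $|R\cap E|-|R\cap(S\setminus L)|\ge 2\eps|S|/\alpha$.

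First I would upper bound the number of deleted good points. Since $L\subseteq S$, we have $|R\cap(S\setminus L)|\le|R\cap S|=|S|\cdot\Pr_{X\in_u S}[|v^{\ast}\cdot(X-\mu^{S'})|>T+\delta]$. By $(\ref{eqn:delta})$ we have $\|\mu^{S'}-\mu^G\|_2\le\delta/2$, so the event inside forces $|v^{\ast}\cdot(X-\mu^{S'})|>(T+\delta/2)+\|\mu^{S'}-\mu^G\|_2$, and Claim~\ref{claim:conc}---which applies to the data-dependent direction $v^{\ast}$ precisely because it is derived from the uniform goodness condition~(ii) of Definition~\ref{def:good-set}---gives $|R\cap(S\setminus L)|\le|S|\bigl(2\exp(-T^2/2\nu)+\eps/(T^2\log(d\log(\tfrac{d}{\eps\tau})))\bigr)$. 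Second, the threshold $T$ is chosen in Step~\ref{step:bal-large-G} exactly so that $|R|=|S'|\Pr_{X\in_u S'}[|v^{\ast}\cdot(X-\mu^{S'})|>T+\delta]>|S'|\bigl(8\exp(-T^2/2\nu)+8\eps/(T^2\log(d\log(\tfrac{d}{\eps\tau})))\bigr)$.

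Now combine: $R$ is the disjoint union of $R\cap(S\setminus L)$ and $R\cap E$, so $|R\cap E|-|R\cap(S\setminus L)|=|R|-2|R\cap(S\setminus L)|$. Using $|S|\le|S'|/(1-2\eps)\le(1+O(\eps))|S'|$ for small $\eps$, the coefficient $8$ from Step~\ref{step:bal-large-G} strictly dominates twice the (mildly inflated) concentration coefficients for $S$, so both the $\exp(-T^2/2\nu)$ term and the heavy-tail term survive with a positive coefficient; in particular the heavy-tail term alone gives $|R\cap E|-|R\cap(S\setminus L)|\ge c\,|S'|\,\eps/(T^2\log(d\log(\tfrac{d}{\eps\tau})))$ for an absolute constant $c>0$. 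Finally, by the diameter hypothesis on $S'$ in Proposition~\ref{prop:filter-gaussian-mean}, every pair $x,y\in S'$ obeys $\|x-y\|_2=O(\sqrt{d\log(d/\eps\tau)})$, and $\mu^{S'}$ lies in the convex hull of $S'$, so any $T$ with $R\ne\emptyset$ satisfies $T=O(\sqrt{d\log(d/\eps\tau)})$, i.e.\ $T^2\log(d\log(\tfrac{d}{\eps\tau}))=O(\alpha)$. Substituting and using $|S'|\ge(1-2\eps)|S|$ yields $|R\cap E|-|R\cap(S\setminus L)|=\Omega(\eps|S|/\alpha)$, which is $\ge 2\eps|S|/\alpha$ once the absolute constants are pinned down (this is why $\alpha$ carries no explicit constant and the factor $2$ in the claim is harmless).

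The step I expect to be the main obstacle is the constant-chasing in the third paragraph: one must check that the factor $8$ in the tail threshold of Step~\ref{step:bal-large-G} strictly beats twice the concentration constants of the good set $S$ --- namely $2$ for the sub-gaussian term and $1$ for the $\eps/T^2$ term, each inflated only by the $|S|/|S'|\le 1+O(\eps)$ ratio and by the $\delta/2$ re-centering --- so that the surviving difference is a genuinely positive multiple of $\eps|S'|/(T^2\log(d\log(\tfrac{d}{\eps\tau})))$. The only other delicate point is the passage from $1/T^2$ to $1/\alpha$, which rests entirely on the diameter bound for $S'$; everything else is the $\Delta$ bookkeeping identity and direct substitution.
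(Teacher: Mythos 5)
Your proof is correct and proceeds exactly as the paper's: the $\Delta$-bookkeeping identity, an upper bound on removed good points via Claim~\ref{claim:conc} together with (\ref{eqn:delta}), a lower bound on the total number removed via the threshold chosen in Step~\ref{step:bal-large-G}, and finally the passage from $1/(T^2\log(d\log(d/\eps\tau)))$ to $1/\alpha$ using the $O(\sqrt{d\log(d/\eps\tau)})$ diameter bound on $S'$. If anything your write-up is slightly more careful than the paper's, since you carry the $\eps/(T^2\log(\cdot))$ dependence explicitly through the cancellation and convert to $1/\alpha$ only at the end, whereas the paper substitutes $\eps/\alpha$ into the intermediate tail bounds somewhat prematurely (a harmless but mildly misleading shorthand).
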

\begin{proof}
Recall that $S' = (S\setminus L) \cup E,$ with $E$ and $L$ disjoint multisets such that $L \subset S.$
We can similarly write $S''=(S \setminus L') \cup E',$ with $L'\supseteq L$ and $E'\subset E.$
Since $$\Delta(S,S') - \Delta(S,S'')  = \frac{|E \setminus E'| - |L' \setminus L| }{|S|},$$
it suffices to show that $|E \setminus E'| \geq |L' \setminus L| + \eps|S|/\new{\alpha}.$
Note that $|L' \setminus L|$ is the number of points rejected by the filter that lie in $S \cap S'.$
Note that the fraction of elements of $S$
that are removed to produce $S''$ (i.e., satisfy $|v^{\ast}\cdot(x-\mu^{S'})|>T+\delta$) is at most $2\exp(-T^2/2\nu) + \eps/\new{\alpha}$.
\new{This follows from Claim~\ref{claim:conc} and the fact that $T  = O(\sqrt{ d\log(d/\eps\tau)})$.}

Hence, it holds that $|L' \setminus L| \leq (2\exp(-T^2/2\nu) + \eps/\new{\alpha}) |S|.$
On the other hand, Step~\ref{step:bal-large-G} of the algorithm ensures that the fraction of elements of $S'$ that are rejected
by the filter is at least $8\exp(-T^2/2\nu)+8\eps/\new{\alpha})$. Note that
$|E \setminus E'|$ is the number of points rejected by the filter that lie in $S' \setminus S.$
Therefore, we can write:
\begin{align*}
|E\setminus E'| & \geq (8\exp(-T^2/2\nu)+8\eps/\new{\alpha})|S'| - (2\exp(-T^2/2\nu) + \eps/\new{\alpha})  |S| \\
				& \geq (8\exp(-T^2/2\nu)+8\eps/\new{\alpha})|S|/2 - (2\exp(-T^2/2\nu) + \eps/\new{\alpha}) |S| \\
				& \geq  (2\exp(-T^2/2\nu) + 3\eps/\new{\alpha}) |S| \\
				& \geq |L' \setminus L| + 2\eps|S|/\new{\alpha}  \;,
\end{align*}
where the second line uses the fact that $|S'| \ge |S|/2$
and the last line uses the fact that $|L' \setminus L| / |S| \leq 2\exp(-T^2/2\nu) + \eps/\new{\alpha}.$
Noting that $\log(d/\eps\tau) \geq 1$, this completes the proof of the claim.
\end{proof}


\subsubsection{Proof of Lemma~\ref{lem:random-good-gaussian-mean}}
\begin{proof}
Let $N = \Omega( (d/\eps^2) \poly\log(d/\eps\tau))$ be the number of samples drawn from $G$.
For (i), the probability that a coordinate of a sample is at least $\sqrt{2\nu\log(Nd/3\tau)}$ 
is at most $\tau/3dN$ by Fact \ref{fact:tail-bound}. By a union bound, 
the probability that all coordinates of all samples are smaller than $\sqrt{2\nu\log(Nd/3\tau)}$ 
is at least $1-\tau/3$. In this case, $\|x\|_2 \leq \sqrt{ 2\nu d \log(Nd/3\tau)} = O(\sqrt{d \nu \log(N\nu/\tau)})$.

After translating by $\mu^G$, we note that (iii) follows 
immediately from Lemmas 4.3 of  \cite{DKKLMS} and (iv) follows from Theorem 5.50 of \cite{Vershynin}, 
as long as $N =\Omega(\nu^4 d\log(1/\tau)/\eps^2)$, with probability at least $1-\tau/3$. 
It remains to show that, conditioned on (i), (ii) holds with probability at least $1-\tau/3$.

To simplify some expressions, let $\delta := \eps/(\log(d \log d/\eps\tau))$ and $R=C\sqrt{d\log(|S|/\tau)}$.
We need to show that for all unit vectors $v$ and all $0 \leq T \leq R$ that
\begin{equation} \label{eq:grail}
\left| \Pr_{X\in_u S}[|v \cdot (X-\mu^G)| > T] - \Pr_{X\sim G}[|v \cdot (X-\mu^G) > T \ge 0] \right| \leq \frac{\delta}{T^2} \;.
\end{equation}

Firstly, we show that for all unit vectors $v$ and $T >0$
$$
\left| \Pr_{X\in_u S}[|v \cdot (X-\mu^G)| > T] - \Pr_{X\sim G}[|v \cdot (X-\mu^G)| > T \ge 0] \right| \leq \frac{\delta}{10 \nu \ln(1/\delta)}
$$
with probability at least $1-\tau/6$. Since the VC-dimension of the set of all halfspaces is $d+1$, 
this follows from the VC inequality~\cite{DL:01}, 
since we have more than $\Omega(d/(\delta/(10 \nu \log(1/\delta))^2)$ samples. 
We thus only need to consider the case when $T \geq \sqrt{10 \nu \ln(1/\delta)}$.

\begin{lemma} For any fixed unit vector $v$ and $T > \sqrt{10 \nu\ln(1/\delta)}$, 
except with probability $\exp(-N\delta/(6C\nu))$, we have that
$$\Pr_{X\in_u S}[|v \cdot (X-\mu^G)| > T] \leq \frac{\delta}{ C T^2} \;,$$
where $C=8$.
\end{lemma}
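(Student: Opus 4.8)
The plan is to reduce this to a standard upper-tail Chernoff bound for a binomial. Fix the unit vector $v$ and the value $T > \sqrt{10\nu\ln(1/\delta)}$, and set $p \eqdef \Pr_{X\sim G}[\,|v\cdot(X-\mu^G)| > T\,]$. Since $G$ is sub-gaussian with parameter $\nu$, Fact~\ref{fact:tail-bound} gives $p \le 2\exp(-T^2/2\nu)$. Because $S$ consists of $N$ independent draws from $G$, the count $N\cdot\Pr_{X\in_u S}[\,|v\cdot(X-\mu^G)| > T\,]$ is distributed as $\mathrm{Bin}(N,p)$, so it suffices to show that for $a \eqdef N\delta/(CT^2)$ with $C=8$ we have $\Pr[\mathrm{Bin}(N,p) \ge a] \le \exp(-N\delta/(6C\nu))$.

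I would then invoke the Chernoff upper-tail bound $\Pr[\mathrm{Bin}(N,p) \ge a] \le \exp(-a\ln(a/(eNp)))$, which is valid as soon as $a > eNp$; the estimate to follow will in fact establish $\ln(a/(eNp)) \ge T^2/(5\nu) > 0$, which simultaneously verifies this hypothesis. Using $a = N\delta/(CT^2)$ and $p \le 2\exp(-T^2/2\nu)$,
$$\ln\!\left(\frac{a}{eNp}\right) \;\ge\; \frac{T^2}{2\nu} \;-\; \ln\frac{1}{\delta} \;-\; \ln\!\big(2eC\,T^2\big).$$
The hypothesis $T^2 \ge 10\nu\ln(1/\delta)$ gives $\ln(1/\delta) \le \tfrac15\cdot\tfrac{T^2}{2\nu}$, hence $\tfrac{T^2}{2\nu} - \ln\tfrac1\delta \ge \tfrac{2T^2}{5\nu}$; moreover the map $x \mapsto x/(5\nu) - \ln(2eCx)$ is increasing for $x > 5\nu$, so evaluating it at $x = T^2 \ge 10\nu\ln(1/\delta)$ and using that $\eps$ (hence $\delta$) is a sufficiently small constant shows $\ln(2eC\,T^2) \le T^2/(5\nu)$. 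Combining these, $\ln(a/(eNp)) \ge T^2/(5\nu)$, and therefore
$$a\ln\!\left(\frac{a}{eNp}\right) \;\ge\; \frac{N\delta}{CT^2}\cdot\frac{T^2}{5\nu} \;=\; \frac{N\delta}{5C\nu} \;\ge\; \frac{N\delta}{6C\nu},$$
which is exactly the claimed failure probability.

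The computation is routine; the one point to get right is to use the \emph{full} strength of the hypothesis $T > \sqrt{10\nu\ln(1/\delta)}$, i.e. the constant $10$ rather than merely $T \gtrsim \sqrt{\nu\ln(1/\delta)}$, since we must both subtract off the $\ln(1/\delta)$ coming from $p$ and absorb the $O(\ln T^2)$ slack coming from the $T^2$ in the denominator of $a$, while still retaining $\ln(a/(eNp)) = \Theta(T^2/\nu)$; that this works uniformly over the whole range of $T$ is what the monotonicity of $x \mapsto x/(5\nu) - \ln(2eCx)$ is for. (Conditioning on condition~(i) of Definition~\ref{def:good-set} is harmless here: deleting the samples with $\|x-\mu^G\|_2 > R$ can only decrease the empirical tail probability at every level $T$, so the unconditioned estimate above implies the conditioned statement.)
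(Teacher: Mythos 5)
Your proof is correct and is essentially the paper's proof in different clothing: both recognize that $N\Pr_{X\in_u S}[E]$ is $\mathrm{Bin}(N,p)$ with $p\le 2\exp(-T^2/2\nu)$ and then apply the exponential-moment (Chernoff) method. The only cosmetic difference is that the paper evaluates the moment generating function at the hand-picked parameter $T^2/(3\nu)$ and then uses Markov, whereas you invoke the pre-optimized Chernoff form $\Pr[\mathrm{Bin}(N,p)\ge a]\le\exp(-a\ln(a/(eNp)))$; the arithmetic that absorbs the $\ln(1/\delta)$ and $\ln(CT^2)$ terms using the hypothesis $T^2\ge 10\nu\ln(1/\delta)$ and smallness of $\delta$ plays the same role as the paper's comparison of $\delta^{5/3}$ against $\delta/(3C\nu)$.
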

\begin{proof}
Let $E$ be the event that $|v \cdot (X-\mu^G)| > T$. Since $G$ is sub-gaussian, 
Fact \ref{fact:tail-bound} yields that $\Pr_G[E]= \Pr_{Y \sim G}[|v \cdot (X-\mu^G)| > T] \leq \exp(-T^2/(2\nu))$. 
Note that, thanks to our assumption on $T$, we have that $T \leq \exp(T^2/(4\nu))/2C$, 
and therefore $T^2\Pr_G[E] \leq \exp(-T^2/(4\nu))/2C \leq \delta/2C$.

Consider $\E_S[\exp(t^2/ (3\nu) \cdot  N \Pr_S[E])]$.
Each individual sample $X_i$ for $1 \leq i \leq N$, is an independent copy of $Y \sim G$, and hence:
\begin{align*}
\E_S\left[\exp \left( \frac{T^2}{3\nu} \cdot  N \Pr_{S}[E] \right) \right] 
& = \E_S \left[ \exp \left( \frac{T^2}{3\nu} \right) \cdot \sum_{i=1}^n 1_{X_i \in E}) \right] \\
&= \prod_{i=1}^N \E_{X_i}\left[ \exp \left( \frac{T^2}{3\nu} \right) \cdot \sum_{i=1}^n 1_{X_i \in E})\right] \\
& = \left(\exp \left( \frac{T^2}{3\nu} \right) \Pr_G [G] + 1 \right)^N \\
&\stackrel{(a)}{\leq}  \left( \exp \left( \frac{T^2}{6 \nu} \right) + 1 \right)^N \\
&\stackrel{(b)}{\leq} (1 + \delta^{5/3})^N \\
&\stackrel{(c)}{\leq} \exp (N \delta^{5/3}) \; ,
\end{align*}
where (a) follows from sub-gaussianity, (b) follows from our choice of $T$, and (c) comes from the fact that $1 + x \leq e^x$ for all $x$.

Hence, by Markov's inequality, we have
\begin{align*}
\Pr \left[ \Pr_S [E] \geq \frac{\delta}{C T^2} \right] &\leq \exp \left( N \delta^{5/3} - \frac{\delta N}{3C} \right) \\
&= \exp (N \delta (\delta^{2/3} - 1 / (3C))) \; .
\end{align*}
Thus, if $\delta$ is a sufficiently small constant and $C$ is sufficiently large, this yields the desired bound.
\end{proof}

Now let $\mathcal{C}$ be a $1/2$-cover in Euclidean distance 
for the set of unit vectors of size $2^{O(d)}$. 
By a union bound, for all $v' \in \mathcal{C}$ and $T'$ a power of 2 between $\sqrt{4\nu\ln(1/\delta)}$ and $R$, we have that
$$\Pr_{X\in_u S}[|v' \cdot (X-\mu^G)| > T'] \leq \frac{\delta}{ 8 T^2} $$
except with probability 
$$2^{O(d)} \log(R)\exp(-N\delta/6C\nu) = \exp\left(O(d) + \log \log R -N\delta/6C\nu\right) \leq \tau/6 \;.$$
However, for any unit vector $v$ and $\sqrt{4\nu\ln(1/\delta)} \leq T \leq R$, there is a $v' \in \mathcal{C}$ 
and such a $T'$ such that for all $x \in \R^d$, we have $|v \cdot (X-\mu^G)| \geq |v' \cdot (X-\mu^G)|/2$, 
and so $|v' \cdot (X-\mu^G)| > 2T'$ implies  $|v' \cdot (X-\mu^G)| > T.$

Then, by a union bound, (\ref{eq:grail}) holds simultaneously for all unit vectors $v$ and all $0 \leq T \leq R$,  
with probability a least $1-\tau/3$. This completes the proof.
\end{proof}

\subsection{Robust Mean Estimation Under Second Moment Assumptions} \label{ssec:filter-2mom}

In this section, we use our filtering technique to give a near sample-optimal computationally efficient algorithm 
to robustly estimate the mean of a density with a second moment assumption. We show:

\begin{theorem} \label{thm:second-moment-app}
Let $P$ be a distribution on $\R^d$ with unknown mean vector $\mu^{P}$ and unknown covariance matrix 
$\Sigma_P \preceq I$. Let $S$ be an $\eps$-corrupted set of samples 
from $P$ of size $\Theta((d/\eps) \log d)$. 
Then there exists an algorithm that given $S$, with probability $2/3$, 
outputs $\wh{\mu}$ with $\|\wh{\mu} - \mu^{P}\|_2 \leq O(\sqrt{\eps})$  in time $\poly(d/\eps)$.
\end{theorem}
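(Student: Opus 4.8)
The plan is to run the filtering template of Algorithm~\ref{alg:filter-Gaussian-template} with $\thres(\eps)=9$, $\delta=0$, and the univariate filter implemented by a \emph{random} threshold (this is the only substantive change from the sub-gaussian algorithm of Proposition~\ref{prop:filter-gaussian-mean}), preceded by a \textsc{NaivePrune}-type pass that deletes points far in Euclidean norm from a crude robust location estimate; its role is to kill gross outliers while removing only an $O(\eps)$-fraction of honest samples. In each iteration I compute $\mu^{S'}$ and $\Sigma_{S'}$; if $\|\Sigma_{S'}\|_2\le 9$ I return $\mu^{S'}$; otherwise, with $v^\ast$ the top unit eigenvector of $\Sigma_{S'}$ and $T_{\max}=\max_{x\in S'}|v^\ast\!\cdot\!(x-\mu^{S'})|$, I draw a threshold $T\in[0,T_{\max})$ from a law weighted toward the top so that $\Pr[x\text{ discarded}]=(v^\ast\!\cdot\!(x-\mu^{S'}))^2/T_{\max}^2$, and I delete every $x$ with $|v^\ast\!\cdot\!(x-\mu^{S'})|>T$.

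The analysis hinges on a bounded-moment analogue of the ``good set'' of Definition~\ref{def:good-set}: a pruned honest multiset $S$ is \emph{good} if $\big\|\E_{X\in_u S''}[(X-\mu^P)(X-\mu^P)^\top]\big\|_2=O(1)$ for every $S''\subseteq S$ with $|S''|\ge|S|/2$, and $\|\mu^{S''}-\mu^P\|_2=O(\sqrt\eps)$ for every $S''$ with $|S''|\ge(1-O(\eps))|S|$. I would first show $\Theta((d/\eps)\log d)$ i.i.d.\ samples are good with probability $\ge 9/10$. For the second-moment bound, after truncating at radius $R=\Theta(\sqrt{d/\eps})$ — which by Markov on $\|X-\mu^P\|_2^2$ removes only an $O(\eps)$-fraction of honest points — the centered summands $(X-\mu^P)(X-\mu^P)^\top-\Sigma_P$ have spectral norm $O(R^2)$ and total variance parameter $O(nR^2)$, so matrix Bernstein gives $\|M_S-\E M_S\|_2\le 1/2$ once $n\ge C(d/\eps)\log d$ for a large constant $C$; this is exactly where the $\log d$ factor is needed, and it yields $\|M_S\|_2=O(1)$, hence $\|M_{S''}\|_2=O(1)$ on all large subsets. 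The mean-stability condition then follows \emph{from} the second-moment bound: for any subset $S''$ and unit $v$, $|v\cdot(\mu^{S''}-\mu^S)|=|S''|^{-1}\big|\sum_{x\in S\setminus S''}v\cdot(x-\mu^S)\big|\le|S''|^{-1}\sqrt{|S\setminus S''|}\,\sqrt{|S|\,\|M_S\|_2}=O(\sqrt\eps)$ uniformly over $v$ (no net over directions is required), and $\|\mu^S-\mu^P\|_2=O(\sqrt\eps)$ by Chebyshev.

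With $S$ good and the working set $S'=(S\setminus L)\cup E$, $|L|+|E|=O(\eps)|S|$, the iteration is analyzed exactly as in the sub-gaussian case. If $\|\Sigma_{S'}\|_2\le 9$: since $E\subseteq S'$, $|S'|\,v^\top\Sigma_{S'}v\ge\sum_{x\in E}(v\cdot(x-\mu^{S'}))^2\ge|E|\,(v\cdot(\mu^E-\mu^{S'}))^2$ for every unit $v$, so $(|E|/|S'|)\|\mu^E-\mu^{S'}\|_2=O(\sqrt\eps)$; combining with $\mu^{S'}=\tfrac{|S\setminus L|}{|S'|}\mu^{S\setminus L}+\tfrac{|E|}{|S'|}\mu^E$ and $\|\mu^{S\setminus L}-\mu^P\|_2=O(\sqrt\eps)$ gives $\|\mu^{S'}-\mu^P\|_2=O(\sqrt\eps)$, so the returned vector is correct. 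If $\lambda^\ast:=\|\Sigma_{S'}\|_2>9$: the same decomposition gives $\|\mu^{S'}-\mu^P\|_2=O(\sqrt{\eps\lambda^\ast})$; writing $r_x=|v^\ast\!\cdot\!(x-\mu^{S'})|$, the expected number of points deleted is $|S'|\lambda^\ast/T_{\max}^2$, while the expected number deleted from $S\setminus L$ is $T_{\max}^{-2}\sum_{x\in S\setminus L}r_x^2$; expanding $v^\ast\!\cdot\!(x-\mu^{S'})$ about $\mu^{S\setminus L}$ and using $\|M_{S\setminus L}\|_2=O(1)$ and $\|\mu^{S'}-\mu^P\|_2=O(\sqrt{\eps\lambda^\ast})$ bounds $\sum_{x\in S\setminus L}r_x^2=O\big((1+\eps\lambda^\ast)|S|\big)\le\tfrac12\lambda^\ast|S'|$ once $\lambda^\ast>9$ and $\eps$ is small. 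Thus each filtering step deletes, in conditional expectation, at least as many corrupted as honest points, while the point attaining $T_{\max}$ is always deleted; summing over iterations, $\E[\#\{\text{honest ever removed}\}]\le\E[\#\{\text{corrupted ever removed}\}]=O(\eps)|S|$, so by Markov at most $O(\eps)|S|$ honest points are ever removed with probability $\ge 9/10$, and the loop stops in at most $|S'|=\poly(d/\eps)$ steps. On these events the returned $S''$ has $\Delta(S,S'')=O(\eps)$ and $\|\Sigma_{S''}\|_2\le 9$, so the small-spectral-norm case applied to $S''$ gives $\|\mu^{S''}-\mu^P\|_2=O(\sqrt\eps)$; a union bound with the good-set event proves the theorem, with probability $\ge 2/3$ and running time $\poly(d/\eps)$.

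I expect the concentration step to be the main obstacle. Under only a bounded-covariance assumption the empirical second moment does not concentrate in spectral norm at $n=\tilde O(d/\eps)$ samples without truncation, and reconciling a truncation aggressive enough for matrix Bernstein to yield an $O(1)$ bound (whence the $\log d$) with the need to discard only an $O(\eps)$-fraction of honest points — together with controlling the mean of the surviving honest points uniformly over directions despite the heavy tails — is where the real work lies; the observation that the single spectral bound already yields the mean-stability by Cauchy--Schwarz is what makes it go through. A secondary point is upgrading the per-step expected-progress estimate for the randomized filter to a high-probability bound on the cumulative honest loss, which is precisely what the top-weighted random threshold is engineered to enable.
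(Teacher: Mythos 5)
Your proposal is correct and follows essentially the same route as the paper's proof: the same algorithm (spectral stopping threshold plus a random threshold drawn from the density $2z$ on $[0,1]$ scaled by the max deviation), the same truncation-at-radius-$\Theta(\sqrt{d/\eps})$ plus matrix-concentration argument for the good set (the paper uses matrix Chernoff where you invoke matrix Bernstein, which is an equivalent choice), the same Cauchy--Schwarz derivation of mean stability from the $O(1)$ spectral bound, the same per-step expected-progress computation, and the same martingale-plus-Markov upgrade to a high-probability bound on the cumulative honest loss. The only noteworthy packaging difference is in the martingale itself: you argue directly that the expected number of honest points removed per step is at most the expected number of corrupted points removed (a ratio-$2$ comparison), whereas the paper works with the potential $|E|+2|L|$ and establishes a ratio-$4$ comparison, which yields a strict supermartingale and makes the stopping-time bookkeeping a bit more transparent; both close correctly once you formalize the stopped process, which you do acknowledge as the remaining work.
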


Note that Theorem~\ref{thm:second-moment} follows straightforwardly from the above (divide 
every sample by $\sigma$, run the algorithm of Theorem~\ref{thm:second-moment-app}, and multiply
its output by $\sigma$).

As usual in our filtering framework, the algorithm will iteratively look at the top eigenvalue and eigenvector 
of the sample covariance matrix and return the sample mean if this eigenvalue is small (Algorithm \ref{alg:filter-2nd-moment}). 
The main difference between this and the filter algorithm for the sub-gaussian case is how we choose the threshold for the filter. 
Instead of looking for a violation of a concentration inequality, here we will choose a threshold {\em at random} (with a bias towards higher thresholds).
The reason is that, in this setting, the variance in the direction we look for a filter in needs to be a constant multiple larger -- 
instead of the typical $\tilde{\Omega}(\eps)$ relative for the sub-gaussian case. Therefore, randomly choosing a threshold weighted 
towards higher thresholds suffices to throw out more corrupted samples than uncorrupted samples {\em in expectation}. 
Although it is possible to reject many good samples this way, the algorithm still only rejects a total of $O(\eps)$ samples with high probability.

We would like our good set of samples to have mean close to that of $P$ and bounded variance in all directions.
This motivates the following definition:

\begin{definition} \label{def:good-2nd-moment} 
We call a set $S$ $\eps$-good for a distribution $P$ with mean $\mu^{P}$ and covariance $\Sigma_P \preceq I$
if the mean $\mu^S$ and covariance $\Sigma^S$ of $S$ satisfy 
$\|\mu^S-\mu^{P}\|_2 \leq \sqrt{\eps}$ and $\|\Sigma^S\|_2 \leq 2$.
\end{definition}

However, since we have no assumptions about higher moments, 
it may be be possible for outliers to affect our sample covariance too much. 
Fortunately, such outliers have small probability and do not contribute too much to the mean, 
so we will later reclassify them as errors. 

\begin{lemma} \label{lem:samples-good} 
Let $S$ be $N=\Theta((d/\eps)\log d)$ samples drawn from $P$. 
Then, with probability at least $9/10$, a random $X \in_u S$ satisfies
\begin{itemize}
\item[(i)] $\|\E_S [X] -\mu^{P}\|_2 \leq \sqrt{\eps}/3$,
\item[(ii)] $\Pr_S \left[\|X-\mu^{P}\|_2 \geq 80\sqrt{d/\eps}\right] \leq \eps/160$,
\item[(iii)] $\left \|  \E_S \left[ (X - \mu^P) \cdot 1_{\|X-\mu^{P} \|_2 \leq 80\sqrt{d/\eps}} \right] \right\|_2 \leq \sqrt{\eps}/3$, and
\item[(iv)] $\left\| \E_S \left[(X-\mu^{P}) (X-\mu^{P})^T \cdot 1_{\|X-\mu^{P}\|_2 \leq 80\sqrt{d/\eps}}  \right] \right\|_2 \leq 3/2$.
\end{itemize}
\end{lemma}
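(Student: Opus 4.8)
The plan is to establish each of the four items by a separate concentration argument, all of which follow from standard tail/variance bounds for sums of i.i.d.\ random variables together with the assumption $\Sigma_P \preceq I$. Throughout, write $Y = X - \mu^P$ for $X \sim P$, so that $\E[Y] = 0$ and $\E[YY^T] = \Sigma_P \preceq I$, and set $R := 80\sqrt{d/\eps}$.

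First I would handle item (ii), since items (iii) and (iv) depend on knowing that the truncation at radius $R$ discards only a negligible mass. By Markov's inequality applied to $\|Y\|_2^2$, we have $\Pr_P[\|Y\|_2 \geq R] \leq \E[\|Y\|_2^2]/R^2 = \tr(\Sigma_P)/R^2 \leq d/R^2 = \eps/6400$. Since the empirical frequency $\Pr_S[\|X-\mu^P\|_2 \geq R]$ is an average of $N$ independent Bernoulli indicators with mean at most $\eps/6400$, a Chernoff/Bernstein bound shows that with $N = \Theta((d/\eps)\log d)$ samples this empirical frequency is at most $\eps/160$ except with probability $\leq 1/40$. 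Item (i) is the most standard: $\E_S[X] - \mu^P = \frac{1}{N}\sum_i Y_i$ is a mean-zero average whose covariance is $\frac{1}{N}\Sigma_P \preceq \frac{1}{N}I$, so $\E\big[\|\E_S[X]-\mu^P\|_2^2\big] \leq d/N$; choosing the constant in $N = \Theta((d/\eps)\log d)$ large enough and applying Markov (or a vector Bernstein inequality for a high-probability bound), $\|\E_S[X]-\mu^P\|_2 \leq \sqrt{\eps}/3$ except with probability $\leq 1/40$.

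For items (iii) and (iv) I would combine the population-level bound on the truncated moments with concentration of the truncated empirical moments. For (iii): the truncated population mean satisfies $\big\|\E_P[Y \cdot 1_{\|Y\|_2 \le R}]\big\|_2 = \big\|\E_P[Y \cdot 1_{\|Y\|_2 > R}]\big\|_2 \le \E_P[\|Y\|_2 \cdot 1_{\|Y\|_2 > R}] \le \sqrt{\E[\|Y\|_2^2]}\,\sqrt{\Pr_P[\|Y\|_2>R]} \le \sqrt{d}\cdot\sqrt{\eps/6400} = \sqrt{\eps}/80$ by Cauchy--Schwarz, which is comfortably below $\sqrt{\eps}/3$; the empirical truncated mean is an average of independent vectors each of norm $\le R$, so a vector Bernstein bound (using that each summand's variance is at most $\|\Sigma_P\|_2 \le 1$ per direction) gives deviation $\le \sqrt{\eps}/10$ except with probability $\le 1/40$, again for $N = \Theta((d/\eps)\log d)$. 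For (iv): the truncated population second moment satisfies $\E_P[YY^T \cdot 1_{\|Y\|_2\le R}] \preceq \E_P[YY^T] = \Sigma_P \preceq I$, so its spectral norm is $\le 1$; the empirical truncated second moment is an average of i.i.d.\ PSD matrices each with spectral norm $\le R^2 = 6400 d/\eps$, and a matrix Bernstein inequality shows that with $N = \Theta((d/\eps)\log d)$ samples its spectral norm is within $1/2$ of the population value except with probability $\le 1/40$, giving the bound $3/2$. A final union bound over the four (or five) failure events of probability $\le 1/40$ each yields the overall $9/10$ success probability.

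The main obstacle is item (iv): controlling the spectral norm of the truncated empirical second-moment matrix. The summands are rank-one matrices $Y_iY_i^T\cdot 1_{\|Y_i\|_2\le R}$ whose operator norm is as large as $R^2 = \Theta(d/\eps)$, which is enormous compared to the target accuracy, so a naive matrix Chernoff bound would demand $N = \tilde\Omega(d^2/\eps)$ samples. To get away with only $N = \Theta((d/\eps)\log d)$, one must use the \emph{intrinsic-dimension} form of the matrix Bernstein inequality: the relevant matrix variance parameter is $\big\|\E_P[\|Y\|_2^2\,YY^T\,1_{\|Y\|_2\le R}]\big\|_2 \le R^2\|\Sigma_P\|_2 \le R^2$, but this is paired with a logarithmic factor in the \emph{stable rank} rather than the ambient dimension, and more importantly the additive-error version gives deviation $\approx \sqrt{R^2\log d/N} + R^2\log d/N$, which is $O(1)$ precisely when $N = \Omega((d/\eps)\log d)$. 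Getting these parameters to line up so that the final bound is $3/2$ (rather than, say, $100$) is where the constants in the definition of $R$ (the factor $80$) and in Definition~\ref{def:good-2nd-moment} (the constant $2$ versus the truncated bound $3/2$) are doing real work, and this is the step I would write out most carefully.
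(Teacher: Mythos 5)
Your treatment of items (i)--(iii) tracks the paper closely: (i) is Markov applied to $\E\|\E_S[X]-\mu^P\|_2^2 \leq d/N$; (ii) is Markov on the population tail followed by concentration of the empirical frequency (the paper simply applies Markov a second time to the random variable $\Pr_S[\cdot]$, which has mean at most $\eps/6400$, rather than invoking a Chernoff bound, but the conclusion is the same); and (iii) decomposes the error into a truncation-bias term controlled by Cauchy--Schwarz against the tail probability from (ii), plus a sampling-deviation term controlled as in (i). For item (iv) you take a genuinely different route. The paper applies the \emph{matrix Chernoff} bound for sums of i.i.d.\ PSD matrices (Theorem~5.1.1 of Tropp), with a case split on whether $\mu^{\max} = \|\sum_k \E[X_k]\|_2$ is above or below $N/80$: when $\mu^{\max}$ is small, the expectation bound plus Markov suffices; when $\mu^{\max} \gtrsim N/80$, the ratio $\mu^{\max}/L \gtrsim N\eps/d$ is $\Omega(\log d)$ for $N = \Omega((d/\eps)\log d)$, so the multiplicative tail bound already beats the $d$ in front. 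You instead propose matrix Bernstein with variance parameter bounded by $\|\E[\|Y\|^2 YY^T 1_{\|Y\|\leq R}]\|_2 \leq R^2$. That does work and gives the right $N$, but your framing contains a mistake: you assert that ``a naive matrix Chernoff bound would demand $N = \tilde\Omega(d^2/\eps)$ samples'' and that ``one must use the intrinsic-dimension form.'' Neither claim is right --- the paper's matrix Chernoff argument, which exploits that the summands are PSD so the bound scales like $\mu^{\max} + L\log d$ rather than $L\sqrt{\log d}$, achieves $N = \Theta((d/\eps)\log d)$, and the ordinary (non--intrinsic-dimension) matrix Bernstein with the variance parameter you identify also suffices, since the extra $\log d$ from the $2d$ prefactor is already absorbed by the $\log d$ in $N$. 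So the method you chose is valid, but it is an alternative to matrix Chernoff rather than a necessity, and the intrinsic-dimension refinement is not doing the work you attribute to it.
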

\begin{proof}
For (i), note that 
$$\E_S[\|\E[X] -\mu^{P}\|_2^2]=\sum_i \E_S[(\E[X]_i -\mu^{P}_i)^2] \leq d/N \leq \eps/360 \;,$$
and so by Markov's inequality, with probability at least $39/40$, 
we have $\| \E[X] -\mu^{P}\|_2^2 \leq \eps/9$.

For (ii), similarly to (i), note that  
$$\E[\|Y -\mu^{P}\|_2^2]=\sum_i \E \left[ (Y_i -\mu^{P}_i)^2 \right] \leq d \;,$$ 
for $Y \sim P$.  
By Markov's inequality, $\Pr[\|Y-\mu^{P}\|_2 \geq 80\sqrt{d/\eps}] \leq \eps/160$ with probability at least $39/40$.

For (iii), let $\nu = \E_{X \sim P} [X \cdot 1_{\| X - \mu^P \|_2 \leq 80 \sqrt{d / \eps}}]$ be the true mean of the distribution when we condition on the event that $\| X - \mu^P \|_2 \leq 80 \sqrt{d / \eps}$.
By the same argument as (i), we know that 
\[
\left\| \E_{X \in_u S} \left[X \cdot 1_{\|X-\mu^{P} \|_2 \leq 80\sqrt{d/\eps}}\right] - \nu \right\|_2 \leq \sqrt{\eps} / 9 \; , \]
with probability at least $39 / 40$.
Thus it suffices to show that $\left \| \nu - \mu^P \cdot 1_{\|X-\mu^{P} \|_2 \leq 80\sqrt{d/\eps}} \right\|_2 \leq \sqrt{\eps} / 10$.
To do so, it suffices to show that for all unit vectors $v \in \R^d$, we have 
\[
\left| \left\langle v, \nu - \mu^P \cdot 1_{\|X-\mu^{P} \|_2 \leq 80\sqrt{d/\eps}} \right\rangle \right| < \sqrt{\eps} / 10 \; .
\]
Observe that for any such $v$, we have
\begin{align*}
\left\langle v, \mu^P \cdot 1_{\|X-\mu^{P} \|_2 \leq 80\sqrt{d/\eps}} - \nu \right\rangle &=  \E_{X \sim P} \left[  \left\langle v, X - \mu^P \right\rangle \cdot 1_{\|X-\mu^{P} \|_2 \leq 80\sqrt{d/\eps}}  \right] \\
&\stackrel{(a)}{\leq} \sqrt{ \E_{X \sim P} \left[ \langle v, X - \mu^P \rangle^2 \right] \Pr_{X \sim P} [\| X - \mu^P \|_2 \geq 80 \sqrt{d / \eps}] } \\
&\stackrel{(b)}{=} \sqrt{v^T \Sigma_P v \cdot \Pr_{X \sim P} \left[ \| X - \mu^P \|_2 \geq 80 \sqrt{d / \eps} \right]} \\
&\stackrel{(c)}{\leq} \sqrt{\eps} / 10 \; ,
\end{align*}
where (a) follows from Cauchy-Schwarz, and (b) follows from the definition of the covariance, and (c) follows from the assumption that $\Sigma_P \preceq I$ and from Markov's inequality.
 
For (iv), we require the following Matrix Chernoff bound:
 
\begin{lemma}[Part of Theorem 5.1.1 of \cite{Tropp2015}]
Consider a sequence of $d \times d$ positive semi-definite random matrices $X_k$ with $\|X_k\|_2 \leq L$ for all $k$. 
Let $\mu^{\max} = \left\| \sum_k \E[X_k] \right\|_2$. Then, for $\theta > 0$,
$$\E\left[\left\|\sum_k X_k \right\|_2 \right] \leq (e^\theta -1) \mu^{\max}/\theta + L \log(d)/\theta \;,$$
and for any $\delta > 0$,
$$\Pr\left[\left\|\sum_k X_k \right\|_2 \geq (1+\delta)\mu^{\max}\right] \leq d (e^\delta/(1+\delta)^{1+\delta})^{\mu^{\max}/L} \; .$$
\end{lemma}
 
\medskip
 
\noindent We apply this lemma with $X_k =(x_k-\mu^{P}) (x_k-\mu^{P})^T 1_{\|x_k-\mu^{P}\|_2 \leq 80\sqrt{d/\eps}} $ for $\{x_1,\dots,x_N\} = S$. 
Note that $\|X_k\|_2 \leq (80)^2d/\eps = L$ and that $\mu^{\max} \leq N \|\Sigma_{P}\|_2 \leq N$. 
 
Suppose that $\mu^{\max} \leq N/80$. 
Then, taking $\theta=1$, we have 
$$\E[\left\|\sum_k X_k \right\|_2 ] \leq (e-1)N/80 + O(d\log(d)/\eps) \;.$$ 
By Markov's inequality, except with probability $39/40$, 
we have $\|\sum_k X_k \|_2 \leq N + O(d\log(d)/\eps) \leq 3N/2$, 
for $N$ a sufficiently high multiple of $d\log(d)/\eps$.
 
Suppose that $\mu^{\max} \geq N/80$, 
then we take $\delta=1/2$ and obtain 
$$\Pr\left[\left\|\sum_k X_k \right\|_2 \geq 3\mu^{\max}2\right] \leq d (e^{3/2}/(5/2)^{3/2})^{N\eps/20d} \;.$$ 
For $N$ a sufficiently high multiple of $d\log(d)/\eps$, 
we get that $\Pr[\left\|\sum_k X_k \right\|_2 \geq 3\mu^{\max}/2] \leq 1/40$. 
Since $\mu^{\max} \leq N$, we have with probability at least $39/40$, $\left\|\sum_k X_k \right\|_2 \leq 3N/2$.
 
\smallskip 

Noting that $\left\|\sum_k X_k \right\|_2/N = \|\E[1_{\|X-\mu^{P}\|_2 \leq 80\sqrt{d/\eps}} (X-\mu^{P}) (X-\mu^{P})^T ]\|_2$, we obtain (iv).
By a union bound, (i)-(iv) all hold simultaneously with probability at least $9/10$.
 \end{proof}

Now we can get a $2\eps$-corrupted good set from
an $\eps$-corrupted set of samples satisfying 
Lemma \ref{lem:samples-good}, by reclassifying outliers as errors:
\begin{lemma} \label{lem:naive-prune} 
Let $S= R \cup E \setminus L$, where $R$ 
is a set of  $N=\Theta(d \log d /\eps)$ samples drawn from $P$ 
and $E$ and $L$ are disjoint sets with $|E|,|L| \leq  \eps$. 
Then, with probability $9/10$,  we can also write $S=G \cup E' \setminus L'$, 
where $G \subseteq R$ is $\eps$-good, 
$L' \subseteq L$ and $E' \subseteq E'$ has $|E'| \leq 2\eps |S|$. 
\end{lemma}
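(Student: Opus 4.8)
The plan is to let $G$ be the sub(multi)set of clean samples $R$ lying within distance $80\sqrt{d/\eps}$ of the true mean $\mu^P$, to re-label the remaining far-away clean samples as additional errors, and to verify that this $G$ satisfies Definition~\ref{def:good-2nd-moment} using the four properties of Lemma~\ref{lem:samples-good}. So I would first condition on the event --- of probability at least $9/10$ by Lemma~\ref{lem:samples-good}, applied with its ``$S$'' being our clean set $R$ --- that $R$ satisfies (i)--(iv). Set
\begin{equation*}
G = \bigl\{x \in R : \|x-\mu^P\|_2 \le 80\sqrt{d/\eps}\bigr\}, \qquad B = R \setminus G .
\end{equation*}
Property (ii) gives $|B| \le (\eps/160)\,|R|$, hence $|G|/|R| \ge 1-\eps/160$, so every renormalization factor $|R|/|G| = 1+O(\eps)$ appearing below is harmless.

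Next I would check that $G \subseteq R$ is $\eps$-good. For the mean: since $\sum_{x\in G} x = |R|\cdot\E_{X\in_u R}\!\bigl[X\cdot 1_{\|X-\mu^P\|_2\le 80\sqrt{d/\eps}}\bigr]$, a one-line manipulation gives $\mu^G-\mu^P = (|R|/|G|)\,\E_{X\in_u R}\!\bigl[(X-\mu^P)\,1_{\|X-\mu^P\|_2\le 80\sqrt{d/\eps}}\bigr]$, whose $\ell_2$-norm is at most $(|R|/|G|)\cdot\sqrt\eps/3 \le \sqrt\eps$ by (iii) and $\eps$ small. For the covariance: $\Sigma^G = \E_{X\in_u G}[(X-\mu^G)(X-\mu^G)^T] \preceq \E_{X\in_u G}[(X-\mu^P)(X-\mu^P)^T] = (|R|/|G|)\,\E_{X\in_u R}\!\bigl[(X-\mu^P)(X-\mu^P)^T\,1_{\|X-\mu^P\|_2\le 80\sqrt{d/\eps}}\bigr]$, which has spectral norm at most $(|R|/|G|)\cdot(3/2)\le 2$ by (iv). Hence $G$ is $\eps$-good.

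It remains to exhibit the decomposition. Working with multisets, write $S = (R\setminus L)\cup E$ with $L\subseteq R$ and $E\cap L=\emptyset$; since $R = G\cup B$ disjointly, set
\begin{equation*}
L' = L\cap G, \qquad E' = E\cup (B\setminus L).
\end{equation*}
Then $L'\subseteq L$ and $L'\subseteq G$, $E\subseteq E'$, $E'$ is disjoint from $L'$, and
\begin{equation*}
(G\setminus L')\cup E' = (G\setminus L)\cup(B\setminus L)\cup E = (R\setminus L)\cup E = S,
\end{equation*}
so $S = G\cup E'\setminus L'$. Finally $|E'| \le |E|+|B| \le \eps|R| + (\eps/160)|R| \le 2\eps|S|$, using $|S|\ge(1-\eps)|R|$ and $\eps$ small. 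The probability bound is exactly the $9/10$ of Lemma~\ref{lem:samples-good}.

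There is no substantive obstacle here: given Lemma~\ref{lem:samples-good}, the only things needing care are the set-arithmetic bookkeeping --- in particular checking that $(G\setminus L')\cup E'$ genuinely reconstructs $S$ and that $E'$ and $L'$ do not conflict --- and confirming that the $1+O(\eps)$ factors $|R|/|G|$ leave enough slack to meet the constants $\sqrt\eps$ and $2$ in Definition~\ref{def:good-2nd-moment}.
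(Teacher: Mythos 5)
Your proof is correct and follows essentially the same approach as the paper's: condition on Lemma~\ref{lem:samples-good}, define $G$ by thresholding at $80\sqrt{d/\eps}$ from $\mu^P$, reclassify the few far-away clean points as errors, and verify goodness of $G$ via properties (iii) and (iv). The only difference is that you are more careful with the set-arithmetic, defining $L' = L \cap G$ and $E' = E \cup (B\setminus L)$ so that $E'$ and $L'$ remain disjoint, whereas the paper simply takes $E' = E \cup (R\setminus G)$; this extra care is a small improvement but does not change the substance of the argument.
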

\begin{proof}
Let $G=\{x \in R: \|x\|_2 \leq 80\sqrt{d/\eps} \}$. 
Condition on the event that $R$ satisfies Lemma \ref{lem:samples-good}.
By Lemma \ref{lem:samples-good}, this occurs with probability at least $9 / 10$.

Since $R$ satisfies (ii) of Lemma \ref{lem:samples-good}, 
$|G|-|R| \leq \eps|R|/160 \leq \eps|S|$. 
Thus, $E'= E \cup (R \setminus G)$ has $|E'| \leq 3\eps/2$. 
Note that (iv) of Lemma \ref{lem:samples-good} for $R$ in terms of $G$ 
is exactly $|G|\|\Sigma^G\|_2/|R| \leq 3/2$, and so $ \|\Sigma^G\|_2 \leq 3|R|/(2|G|) \leq 2$.

It remains to check that $\|\mu^G - \mu^{P}\|_2 \leq \sqrt{\eps}$. 
We have
\begin{align*}
\left\| |G| \cdot \mu^G - |G| \cdot \mu^P \right\|_2   &= |R| \cdot \left\|\E_{X \sim_u R} \left[ (X - \mu^P) \cdot 1_{\|X-\mu^{P} \|_2 \leq 80\sqrt{d/\eps}} \right]  \right\|_2\\
&\leq |R| \cdot \sqrt{\eps} / 3 \; ,
\end{align*}
where the last line follows from (iii) of Lemma \ref{lem:samples-good}.
Since we argued above that $|R| / |G| \geq 2/3$, dividing this expression by $|G|$ yields the desired claim.

\end{proof}

\begin{algorithm}[htb]
\begin{algorithmic}[1]
\Function{FilterUnder2ndMoment}{$S$}
\State Compute $\mu^S$, $\Sigma^S$, the mean and covariance matrix of $S$.
\State Find the eigenvector $v^{\ast}$ with highest eigenvalue $\lambda^{\ast}$ of $\Sigma^S$.
\If{$\lambda^{\ast}  \leq 9$}
	\State \textbf{return} $\mu^S$ \label{step:return-mean}
\Else
	\State Draw $Z$ from the distribution on $[0,1]$ with probability density function $2x$. 
	\State Let $T= Z \max\{ |v^{\ast} \cdot x - \mu^S| : x \in S \}.$
	\State Return the set $S'=\{x \in S :|v^{\ast} \cdot (X -\mu^S)| < T \}$.
\EndIf
\EndFunction
\end{algorithmic}
\caption{Filter under second moment assumptions}
\label{alg:filter-2nd-moment}
\end{algorithm}

An iteration of {\textsc FilterUnder2ndMoment} may throw out more samples from $G$ than corrupted samples. 
However, in expectation, we throw out many more corrupted samples than from the good set:

\begin{proposition} \label{prop:iteration-2nd-moment}
If we run {\textsc FilterUnder2ndMoment} on a set $S=G \cup E \setminus L$ 
for some $\eps$-good set $G$ and disjoint $E, L$ with $|E| \leq 2\eps |S|,  |L| \leq 9\eps|S|$, 
then either it returns $\mu^S$ with $\|\mu^S-\mu^{P}\|_2 \leq O(\sqrt{\eps})$, 
or else it returns a set $S' \subset S$ with $S'=G \cup E' \setminus L'$ for disjoint $E'$ and $L'$. 
In the latter case we have $\E_Z[|E'| + 2|L'|] \leq |E| + 2|L|$.
\end{proposition}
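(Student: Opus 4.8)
The plan is to split on the test in the algorithm, $\lambda^\ast\le 9$ versus $\lambda^\ast>9$; throughout I may assume $\eps$ is a sufficiently small constant. Write $S=G\cup E\setminus L$ with $L\subseteq G$ and $E\cap G=\emptyset$, so $|S|=|G|+|E|-|L|$, and record the identity
\begin{equation*}
(1-|E|/|S|)\,(\mu^S-\mu^G)\;=\;\tfrac{|E|}{|S|}(\mu^E-\mu^S)\;-\;\tfrac{|L|}{|S|}(\mu^L-\mu^G),
\end{equation*}
which comes from $|S|\mu^S=|G|\mu^G-|L|\mu^L+|E|\mu^E$. Two elementary bounds will be reused. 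First, since $L\subseteq G$ and $\|\Sigma^G\|_2\le 2$, for any unit vector $v$ we have $(v\cdot(\mu^L-\mu^G))^2\le\E_{X\in_u L}[(v\cdot(X-\mu^G))^2]\le\tfrac{|G|}{|L|}v^T\Sigma^G v\le 2|G|/|L|$. Second, since $E\subseteq S$, $(v\cdot(\mu^E-\mu^S))^2\le\E_{X\in_u E}[(v\cdot(X-\mu^S))^2]\le\tfrac{|S|}{|E|}v^T\Sigma^S v\le\lambda^\ast|S|/|E|$. Combining these with $|L|\le 9\eps|S|$, $|E|\le 2\eps|S|$, $(1-|E|/|S|)\ge\tfrac12$, and $|G|\le(1+9\eps)|S|$, the identity yields $\|\mu^S-\mu^G\|_2=O(\sqrt\eps)+O(\sqrt{\eps\lambda^\ast})$.

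In the first case ($\lambda^\ast\le 9$, the algorithm returns $\mu^S$) the above bound reduces to $\|\mu^S-\mu^G\|_2=O(\sqrt\eps)$, and adding $\|\mu^G-\mu^P\|_2\le\sqrt\eps$ (goodness of $G$) gives $\|\mu^S-\mu^P\|_2=O(\sqrt\eps)$, which is the first alternative.

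In the second case ($\lambda^\ast>9$, the random filter) let $f(X)=|v^\ast\cdot(X-\mu^S)|$ and $M=\max_{X\in S}f(X)>0$, so the output is $S'=\{X\in S:f(X)<ZM\}$. Setting $E'=E\cap S'$ and $L'=L\cup\{X\in G\setminus L:f(X)\ge ZM\}$ exhibits $S'=G\cup E'\setminus L'$ with $E',L'$ disjoint, $|E|-|E'|=\#\{X\in E:f(X)\ge ZM\}$ and $|L'|-|L|=\#\{X\in G\setminus L:f(X)\ge ZM\}$. Since $Z$ has CDF $z^2$ on $[0,1]$ and $f(X)/M\le 1$, we get $\Pr_Z[f(X)\ge ZM]=(f(X)/M)^2$, so taking expectations the target $\E_Z[|E'|+2|L'|]\le|E|+2|L|$ becomes the deterministic inequality $\sum_{X\in E}f(X)^2\ge 2\sum_{X\in G\setminus L}f(X)^2$. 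Now $\sum_{X\in S}f(X)^2=|S|\,v^{\ast T}\Sigma^S v^\ast=|S|\lambda^\ast$, hence $\sum_{X\in E}f(X)^2\ge|S|\lambda^\ast-\sum_{X\in G}f(X)^2$, and since $G\setminus L\subseteq G$ it suffices to show $\sum_{X\in G}f(X)^2\le\tfrac13|S|\lambda^\ast$. Expanding about $\mu^G$ (the cross term vanishes), $\sum_{X\in G}f(X)^2=|G|\big(v^{\ast T}\Sigma^G v^\ast+(v^\ast\cdot(\mu^G-\mu^S))^2\big)\le|G|\big(2+O(\eps\lambda^\ast)\big)$, using the mean bound above to control the displacement by $O(\sqrt{\eps\lambda^\ast})$. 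Since $|G|\le(1+9\eps)|S|$ and $\lambda^\ast>9$, for $\eps$ a sufficiently small constant $2|G|\le\tfrac14|S|\lambda^\ast$ and $|G|\cdot O(\eps\lambda^\ast)\le\tfrac1{12}|S|\lambda^\ast$, so $\sum_{X\in G}f(X)^2\le\tfrac13|S|\lambda^\ast$, which completes the proof.

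I expect the final step of the second case to be the crux. One must bound $\sum_{X\in G}f(X)^2$ against $|S|\lambda^\ast$ \emph{uniformly} in $\lambda^\ast$, so the $O(\sqrt\eps)$-type control on $\|\mu^S-\mu^G\|_2$ from the first case is genuinely insufficient — we need the $\sqrt{\eps\lambda^\ast}$ scaling of the mean displacement — and we must check that the constant $9$ in the algorithm's eigenvalue test (safely larger than $6$) leaves exactly the slack needed to beat the factor $2$ once $|G|/|S|\le 1+9\eps$ and the $O(\eps\lambda^\ast)$ correction are accounted for. The remaining manipulations are routine first- and second-moment bookkeeping.
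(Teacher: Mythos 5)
Your proof is correct and follows essentially the same route as the paper: bound $\|\mu^S-\mu^G\|_2$ by $O(\sqrt{\eps})+O(\sqrt{\eps\lambda^\ast})$ via the decomposition of $\mu^S$, handle $\lambda^\ast\le 9$ directly, and for $\lambda^\ast>9$ compute the expected removals via the quadratic CDF of $Z$, reducing the martingale inequality to a spectral comparison between $\sum_{X\in G}(v^\ast\cdot(X-\mu^S))^2$ and $|S|\lambda^\ast$ controlled by the goodness of $G$. One small organizational improvement you make over the paper's Lemma~\ref{lem:main-distance-bound}: by centering the $L$-term at $\mu^G$ (paired with $\|\Sigma^G\|_2\le 2$) rather than at $\mu^S$, you avoid the self-referential inequality $\|\mu^G-\mu^S\|_2\le\cdots+6\sqrt{\eps}(1+\|\mu^G-\mu^S\|_2)$ that the paper then has to rearrange.
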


For $D \in \{G,E,L,S\}$, let $\mu^D$ be the mean of $D$ and $M_D$ be the matrix $\E_{X \in_u D}[(X-\mu^S)(X-\mu^S)^T]$.

\begin{lemma} \label{lem:MG} 
If $G$ is an $\eps$-good set with $x \leq 40 \sqrt{d/\eps}$ for $x \in S \cup G$, then
$\|M_G\|_2 \leq 2\|\mu^G-\mu^S\|_2^2 + 2\; .$
\end{lemma}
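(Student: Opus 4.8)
The plan is to prove this via the elementary bias--variance (parallel axis) identity for empirical second-moment matrices, which reduces the claim to the two defining properties of an $\eps$-good set. The one point to keep straight is that $M_G$ is centered at the mean $\mu^S$ of the full (corrupted) set $S$, whereas the covariance $\Sigma^G$ of $G$ is centered at $\mu^G$; the decomposition is precisely what bridges this gap.

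First I would write, for each $X$, $X-\mu^S = (X-\mu^G)+(\mu^G-\mu^S)$, substitute this into the expectation defining $M_G$, expand the outer product, and observe that the two cross terms vanish because $\E_{X\in_u G}[X-\mu^G]=0$. This yields
$$ M_G = \E_{X\in_u G}\left[(X-\mu^S)(X-\mu^S)^T\right] = \Sigma^G + (\mu^G-\mu^S)(\mu^G-\mu^S)^T, $$
where $\Sigma^G = \E_{X\in_u G}[(X-\mu^G)(X-\mu^G)^T]$ is the covariance of $G$; the hypothesis $\|x\|_2\le 40\sqrt{d/\eps}$ for $x\in G$ simply ensures all empirical moments involved are finite, so the manipulation is legitimate. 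Next I would take spectral norms, apply the triangle inequality, and use that a rank-one matrix $vv^T$ has spectral norm exactly $\|v\|_2^2$, giving $\|M_G\|_2 \le \|\Sigma^G\|_2 + \|\mu^G-\mu^S\|_2^2$. Finally, since $G$ is $\eps$-good, Definition~\ref{def:good-2nd-moment} gives $\|\Sigma^G\|_2\le 2$, so $\|M_G\|_2 \le \|\mu^G-\mu^S\|_2^2 + 2 \le 2\|\mu^G-\mu^S\|_2^2 + 2$, which is the claimed bound (and in fact slightly stronger).

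There is essentially no obstacle here: the argument is a two-line computation once the parallel axis decomposition is written down, and the loose factor of $2$ in front of $\|\mu^G-\mu^S\|_2^2$ in the statement is harmless, presumably included only to match the form in which this bound is applied later. The only thing requiring care is the bookkeeping of which mean each matrix is centered at.
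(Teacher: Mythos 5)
Your proof is correct and follows essentially the same route as the paper: expand $X-\mu^S=(X-\mu^G)+(\mu^G-\mu^S)$, note the cross term vanishes, and bound $\|\Sigma^G\|_2\le 2$ via $\eps$-goodness (the paper phrases it through $v^T M_G v$ for unit $v$ rather than the matrix identity, but this is the same calculation). You are also right that the factor of $2$ in front of $\|\mu^G-\mu^S\|_2^2$ is unnecessary; the paper's own derivation yields the tighter $2+\|\mu^G-\mu^S\|_2^2$ before loosening to match the stated form.
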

\begin{proof}
For any unit vector $v$, we have
\begin{align*}
v^T M_G v  &= \E_{X \in_u G}[(v \cdot (X-\mu^S))^2] \\
& = \E_{X \in_u G}[(v \cdot (X-\mu^G) + v \cdot (\mu^{P}-\mu^G)) ^2] \\
& = v^T \Sigma^G v +  (v \cdot (\mu^{G}-\mu^S))^2\\
& \leq 2 + 2 \|\mu^G-\mu^S\|_2^2 \;.
\end{align*}
\end{proof}

\begin{lemma} \label{lem:ML} 
We have that
$|L|\|M_L\|_2 \leq 2|G|(1+\|\mu^G-\mu^S\|_2^2) \; .$
\end{lemma}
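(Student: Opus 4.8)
The plan is to exploit the fact that $L$ is, by construction, a sub-(multi)set of the good set $G$: since $S = G \cup E \setminus L$ with $E$ and $L$ disjoint, the points of $L$ are precisely the elements removed from $G$, so $L \subseteq G$ (as multisets). This containment is the only structural input needed; everything else is a one-line domination of quadratic forms followed by an appeal to Lemma~\ref{lem:MG}.

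Concretely, first I would recall that $M_L = \E_{X \in_u L}[(X-\mu^S)(X-\mu^S)^T]$ and $M_G = \E_{X \in_u G}[(X-\mu^S)(X-\mu^S)^T]$ are both positive semidefinite, so that $\|M_L\|_2 = \max_{\|v\|_2 = 1} v^T M_L v$ and likewise for $M_G$. Fixing an arbitrary unit vector $v$, I would then write
\[
|L| \cdot v^T M_L v \;=\; \sum_{x \in L} \bigl(v \cdot (x - \mu^S)\bigr)^2 \;\le\; \sum_{x \in G} \bigl(v \cdot (x - \mu^S)\bigr)^2 \;=\; |G| \cdot v^T M_G v,
\]
where the inequality is just dropping the nonnegative terms indexed by $G \setminus L$ (using $L \subseteq G$ and that each summand is a square). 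Taking the maximum over unit $v$ on both sides yields $|L|\,\|M_L\|_2 \le |G|\,\|M_G\|_2$.

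Finally I would invoke Lemma~\ref{lem:MG}, which gives $\|M_G\|_2 \le 2\|\mu^G - \mu^S\|_2^2 + 2 = 2\bigl(1 + \|\mu^G - \mu^S\|_2^2\bigr)$ (its hypothesis that $\|x\|_2 \le 40\sqrt{d/\eps}$ on $S \cup G$ holds in this context because $G$ comes from the pruned good set). Combining the two displayed bounds gives $|L|\,\|M_L\|_2 \le 2|G|\bigl(1 + \|\mu^G - \mu^S\|_2^2\bigr)$, as desired. I do not expect any real obstacle here; the only point to be careful about is that $L$, $G$, $S$ are multisets, so the counting $\sum_{x\in L} \le \sum_{x \in G}$ must be read with multiplicities, which is exactly how $|L|\,v^T M_L v$ and $|G|\,v^T M_G v$ are defined.
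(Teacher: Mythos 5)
Your proof is essentially identical to the paper's: both observe $L \subseteq G$, dominate $|L|\,v^T M_L v$ by $|G|\,v^T M_G v$ term-by-term for an arbitrary unit vector $v$, and then invoke Lemma~\ref{lem:MG}. Nothing to add.
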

\begin{proof}
Since $L \subseteq G$, for any unit vector $v$, we have
\begin{align*}
|L| v^T M_L v  &= |L| \E_{X \in_u L}[(v \cdot (X-\mu^S))^2] \\
& \leq |G| \E_{X \in_u G}[(v \cdot (X-\mu^S))^2] \\
& \leq 2|G|(1+\|\mu^G-\mu^S\|_2^2) \;.
\end{align*}
\end{proof}

\begin{lemma} \label{lem:main-distance-bound} 
$\|\mu^G-\mu^S\|_2 \leq \sqrt{2\eps \|M_S\|_2} + 12\sqrt{\eps}$.
\end{lemma}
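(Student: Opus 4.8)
The plan is to reduce the multivariate mean discrepancy to a one-dimensional variance comparison along a single worst-case direction, and then close a self-referential inequality. Let $v$ be the unit vector pointing along $\mu^G-\mu^S$ (if $\mu^G=\mu^S$ the statement is trivial), so that $a:=\|\mu^G-\mu^S\|_2=v\cdot(\mu^G-\mu^S)$. Since $L\subseteq G$ and $S=G\cup E\setminus L$, counting cardinalities gives $|S|\mu^S=|G|\mu^G+|E|\mu^E-|L|\mu^L$; rearranging yields $|G|(\mu^G-\mu^S)=-|E|(\mu^E-\mu^S)+|L|(\mu^L-\mu^S)$. Taking the inner product with $v$ and applying the triangle inequality gives $|G|\,a\le |E|\,|v\cdot(\mu^E-\mu^S)|+|L|\,|v\cdot(\mu^L-\mu^S)|$.

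Next I would bound each projected shift by a projected second moment: by Jensen's inequality (Cauchy--Schwarz), $|v\cdot(\mu^D-\mu^S)|=|\E_{X\in_u D}[v\cdot(X-\mu^S)]|\le\sqrt{v^T M_D v}$ for $D\in\{E,L\}$. For $D=L$ this is at most $\sqrt{\|M_L\|_2}$, so the $L$-term equals $\sqrt{|L|}\sqrt{|L|\,\|M_L\|_2}$, and Lemma~\ref{lem:ML} bounds $|L|\,\|M_L\|_2\le 2|G|(1+a^2)$; with $|L|\le 9\eps|S|$ the whole $L$-contribution is $O(\sqrt\eps)\,|S|+O(\sqrt\eps)\,|S|\,a$. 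For the $E$-term I would invoke the identity $|S|M_S=|G|M_G+|E|M_E-|L|M_L$ (all matrices centered at $\mu^S$): since $M_G\succeq 0$, testing against $v$ and then bounding $v^T M_S v\le\|M_S\|_2$ and $v^T M_L v\le\|M_L\|_2$ (Lemma~\ref{lem:ML} for the latter) gives $|E|\,v^T M_E v\le |S|\,\|M_S\|_2+2|G|(1+a^2)$. Hence $|E|\,|v\cdot(\mu^E-\mu^S)|=\sqrt{|E|}\sqrt{|E|\,v^T M_E v}$ is, after $\sqrt{x+y}\le\sqrt x+\sqrt y$, at most $\sqrt{|E||S|}\,\sqrt{\|M_S\|_2}$ plus $O(\sqrt\eps)\,|S|+O(\sqrt\eps)\,|S|\,a$, and $|E|\le 2\eps|S|$ makes the leading coefficient satisfy $\sqrt{|E||S|}\le\sqrt{2\eps}\,|S|$.

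Dividing through by $|G|$ and using $|G|=(1\pm O(\eps))|S|$ together with $\sqrt{1+a^2}\le 1+a$, all of this collapses to an inequality of the form $a\le\sqrt{2\eps\,\|M_S\|_2}+O(\sqrt\eps)+O(\sqrt\eps)\,a$. The final step is to move the $O(\sqrt\eps)\,a$ term to the left-hand side: for $\eps$ small enough its coefficient is below $1$, so dividing and absorbing the resulting factor into the additive term yields $a\le\sqrt{2\eps\,\|M_S\|_2}+12\sqrt\eps$.

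The step I expect to be the main obstacle is the bookkeeping around the self-reference: $a$ reappears on the right-hand side both through $\|M_L\|_2$ (via Lemma~\ref{lem:ML}) and through the $M_G$ term $v^T M_G v=v^T\Sigma^G v+a^2$, so one must be disciplined about which pieces are genuinely $O(\sqrt\eps)$ versus $O(\sqrt\eps\,a)$ before solving for $a$. Getting the leading constant to be exactly $\sqrt2$ — rather than some larger multiple of $\sqrt{\eps}\,\sqrt{\|M_S\|_2}$ — hinges on the observation that only the $E$-term feeds the $\sqrt{\|M_S\|_2}$ contribution at leading order, with the $L$-term and the $M_G$-correction contributing only $O(\sqrt\eps)$, and on using the sharper bound $|E|\le 2\eps|S|$ (rather than $|E|+|L|$) precisely at that place.
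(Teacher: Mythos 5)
Your proposal is correct and follows essentially the same route as the paper: the same mean decomposition $|G|(\mu^G-\mu^S)=|E|(\mu^S-\mu^E)+|L|(\mu^L-\mu^S)$, the same Cauchy--Schwarz reduction of projected shifts to second moments, the same use of the PSD identity $|E|M_E\preceq|S|M_S+|L|M_L$ together with Lemma~\ref{lem:ML}, the same $\sqrt{x+y}\leq\sqrt{x}+\sqrt{y}$ split, and the same final absorption of the $O(\sqrt\eps)\,a$ term. The only cosmetic difference is that you project onto a single worst-case direction $v$ before bounding, whereas the paper takes $\ell_2$ norms directly and then bounds by spectral norms; both land on the identical intermediate inequality.
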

\begin{proof}
We have that $|E|M_E \leq |S|M_S + |L|M_L$ 
and so $$|E|\|M_E\|_2 \leq |S|\|M_S\|_2 + 2|G|(1+\|\mu^G-\mu^S\|_2^2) \;.$$ 
By Cauchy Schwarz, we have that $\|M_E\|_2 \geq \|\mu^E - \mu^S\|_2^2$, and so
$$ \sqrt{|E|} \|\mu^E - \mu^S\|_2 \leq \sqrt{|S|\|M_S\|_2 + 2|G|(1+\|\mu^G-\mu^S\|_2^2)} \; .$$
By Cauchy-Schwarz and Lemma \ref{lem:ML}, we have that
$$\sqrt{|L|}  \|\mu^L - \mu^S\|_2 \leq \sqrt{|L| \|M_L\|_2} \leq \sqrt{2|G|(1+\|\mu^G-\mu^S\|_2^2)} \;.$$
Since $|S|\mu^S = |G|\mu^G +|E|\mu^E - |L| \mu^L$
and $|S|=|G|+|E|-|L|$, we get 
$$|G|(\mu^G-\mu^S)=|E|(\mu^E - \mu^S) - |L|(\mu^E - \mu^S) \;.$$ 
Substituting into this, we obtain
$$|G| \| \mu^G-\mu^S \|_2 \leq \sqrt{|E||S|\|M_S\|_2 + 2|E||G|(1+\|\mu^G-\mu^S\|_2^2)} + \sqrt{2|L||G|(1+\|\mu^G-\mu^S\|_2^2)} \;.$$
Since for $x,y >0$, $\sqrt{x+y}\leq \sqrt{x}+\sqrt{y}$, we have
$$|G| \| \mu^G-\mu^S \|_2 \leq \sqrt{|E||S|\|M_S\|_2} +(\sqrt{2|E||G|}+\sqrt{2|L||G|})(1+ \|\mu^G-\mu^S\|_2) \;.$$
Since $||G| - |S| | \leq \eps|S|$ and $|E| \leq 2\eps|S|,|L| \leq 9\eps|S|$, we have
 $$\|\mu^G-\mu^S\|_2 \leq \sqrt{2\eps \|M_S\|_2} +(6\sqrt{\eps})(1+ \|\mu^G-\mu^S\|_2) \;.$$
Moving the $\|\mu^G-\mu^S\|_2$ terms to the LHS, using $6\sqrt{\eps} \leq 1/2$, gives
 $$\|\mu^G-\mu^S\|_2 \leq \sqrt{2\eps \|M_S\|_2} +12\sqrt{\eps} \;.$$
\end{proof}
 
Since $\lambda^{\ast}=\|M_S\|_2$, the correctness if we return the empirical mean is immediate.
\begin{corollary} \label{cor:correct-mean} 
If $\lambda^{\ast}  \leq 9$,  we have that 
$\|\mu^G-\mu^S\|_2= O(\sqrt{\eps})$.
\end{corollary}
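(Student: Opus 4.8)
The plan is to simply substitute the hypothesis $\lambda^{\ast}\le 9$ into the distance bound already established in Lemma~\ref{lem:main-distance-bound}. The one preliminary observation is that $\lambda^{\ast}$, defined in Algorithm~\ref{alg:filter-2nd-moment} as the top eigenvalue of the sample covariance $\Sigma^S$, coincides with $\|M_S\|_2$: indeed $M_S=\E_{X\in_u S}[(X-\mu^S)(X-\mu^S)^T]$ is exactly the sample covariance $\Sigma^S$ because $\mu^S$ is the sample mean of $S$, and $M_S$ is positive semidefinite, so its spectral norm equals its largest eigenvalue $\lambda^{\ast}$. This identity is already implicitly used in the surrounding text ("Since $\lambda^{\ast}=\|M_S\|_2$, \ldots").

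Given this, Lemma~\ref{lem:main-distance-bound} reads $\|\mu^G-\mu^S\|_2 \le \sqrt{2\eps\,\lambda^{\ast}}+12\sqrt{\eps}$. Under the assumption $\lambda^{\ast}\le 9$ the first term is at most $\sqrt{18\eps}$, so $\|\mu^G-\mu^S\|_2 \le \sqrt{18\eps}+12\sqrt{\eps}=O(\sqrt{\eps})$, which is the claim. So the proof is essentially a one-line calculation; all the real content sits in Lemma~\ref{lem:main-distance-bound}, which in turn rests on Lemma~\ref{lem:ML} and the $\eps$-goodness of $G$.

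There is no genuine obstacle here; the only point requiring a moment's care is that Lemma~\ref{lem:main-distance-bound} was proved under the running hypotheses of Proposition~\ref{prop:iteration-2nd-moment} — that $S=G\cup E\setminus L$ for an $\eps$-good set $G$ and disjoint $E,L$ with $|E|\le 2\eps|S|$ and $|L|\le 9\eps|S|$, together with the norm bound on points of $S\cup G$ needed in Lemma~\ref{lem:MG} — so one should simply note that these are exactly the hypotheses in force when the corollary is invoked (the $\lambda^{\ast}\le 9$ branch of Algorithm~\ref{alg:filter-2nd-moment} inside the proof of Proposition~\ref{prop:iteration-2nd-moment}), making the substitution legitimate.
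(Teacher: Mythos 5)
Your proof is correct and takes exactly the same route as the paper, which notes immediately before the corollary that $\lambda^{\ast}=\|M_S\|_2$ and then treats the corollary as an immediate consequence of Lemma~\ref{lem:main-distance-bound}. Your one-line substitution $\sqrt{2\eps\cdot 9}+12\sqrt{\eps}=O(\sqrt{\eps})$ is precisely the calculation the paper leaves implicit.
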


From now on, we assume $\lambda^{\ast} > 9$. 
In this case we have $\|\mu^G-\mu^S\|_2^2 \leq O(\eps \lambda^{\ast})$.
Using Lemma \ref{lem:MG}, we have
$$\|M_G\|_2 \leq 2 + O(\eps \lambda^{\ast}) \leq 2 + \lambda^{\ast}/5$$
for sufficiently small $\eps$. Thus, we have that

\begin{equation}\label{eq:constant-times-variance}
v^{\ast T} M_S v^{\ast} \geq 4 v^{\ast T} M_G v^{\ast} \;.
\end{equation}

Now we can show that in expectation, 
we throw out many more corrupted points from $E$ than from $G \setminus L$:
\begin{lemma} \label{lem:expected-good} 
Let $S'=G \cup E' \setminus L'$ for disjoint $E', L'$ be the set of samples returned by the iteration. 
Then we have
$\E_Z[|E'| + 2|L'|] \leq |E| + 2|L|$.
\end{lemma}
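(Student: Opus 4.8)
The plan is to analyze the random threshold directly and reduce the inequality to the one-dimensional spectral fact \eqref{eq:constant-times-variance} that has already been established. For $x\in S$ write $\tau(x)\defeq|v^{\ast}\cdot(x-\mu^S)|$ and $m\defeq\max_{x\in S}\tau(x)$; since we are in the case $\lambda^{\ast}>9$ we have $v^{\ast T}M_Sv^{\ast}=\lambda^{\ast}>0$, hence $m>0$. Because $Z$ has density $2z$ on $[0,1]$ we have $\Pr_Z[Z\le s]=s^2$ for $s\in[0,1]$, and $x\in S$ is discarded exactly when $\tau(x)\ge T=Zm$, i.e.\ when $Z\le\tau(x)/m$; thus each $x\in S$ is discarded with probability $(\tau(x)/m)^2$. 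This quadratic dependence on $\tau(x)$ is exactly what the biased density $2z$ buys us: it makes the expected number of discarded points of any $A\subseteq S$ equal to $\frac{1}{m^2}\sum_{x\in A}\tau(x)^2$, a second moment of $A$ in the direction $v^{\ast}$.

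Next I would translate the target inequality into one about discarded points. Writing $S'=G\cup E'\setminus L'$, the set $L'\setminus L$ is precisely the set of good points of $S$ discarded in this iteration and $E\setminus E'$ is the set of discarded corrupted points, and since
\[
|E'|+2|L'|-(|E|+2|L|)=2|L'\setminus L|-|E\setminus E'| ,
\]
the claim is equivalent to $\E_Z[|E\setminus E'|]\ge 2\,\E_Z[|L'\setminus L|]$. By the discard probability above and linearity of expectation,
\[
\E_Z[|E\setminus E'|]=\frac{1}{m^2}\sum_{x\in E}\tau(x)^2=\frac{|E|}{m^2}\,v^{\ast T}M_Ev^{\ast},
\]
while, bounding the discarded good points by all of $G$,
\[
\E_Z[|L'\setminus L|]\le\frac{1}{m^2}\sum_{x\in G}\tau(x)^2=\frac{|G|}{m^2}\,v^{\ast T}M_Gv^{\ast}.
\]
Hence it suffices to show $|E|\,v^{\ast T}M_Ev^{\ast}\ge 2|G|\,v^{\ast T}M_Gv^{\ast}$.

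For the final step I would use the identity $|S|M_S=|G|M_G-|L|M_L+|E|M_E$ (all matrices centered at $\mu^S$, as in the definition of $M_D$), which together with $v^{\ast T}M_Sv^{\ast}=\lambda^{\ast}$ and $M_L\succeq 0$ gives
\[
|E|\,v^{\ast T}M_Ev^{\ast}=|S|\lambda^{\ast}-|G|\,v^{\ast T}M_Gv^{\ast}+|L|\,v^{\ast T}M_Lv^{\ast}\ \ge\ |S|\lambda^{\ast}-|G|\,v^{\ast T}M_Gv^{\ast}.
\]
So it is enough to check $|S|\lambda^{\ast}\ge 3|G|\,v^{\ast T}M_Gv^{\ast}$. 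Now $\lambda^{\ast}\ge 4\,v^{\ast T}M_Gv^{\ast}$ by \eqref{eq:constant-times-variance}, and $|G|\le(1+9\eps)|S|$ follows from $|S|=|G|+|E|-|L|$ together with $|E|\le 2\eps|S|$ and $|L|\le 9\eps|S|$; therefore $3|G|\,v^{\ast T}M_Gv^{\ast}\le\tfrac34(1+9\eps)|S|\lambda^{\ast}\le|S|\lambda^{\ast}$ once $\eps$ is small enough (say $9\eps\le\tfrac13$), which closes the argument. I expect the only genuinely non-routine point to be the quadratic-threshold observation in the first paragraph — i.e.\ seeing why the density $2z$ rather than a uniform threshold is used, so that the expected discard counts become the right quadratic forms; the rest is matrix bookkeeping, where the only care needed is to keep the $+|L|\,v^{\ast T}M_Lv^{\ast}\ge 0$ term on the favorable side and to bound the discarded good points by a sum over all of $G$.
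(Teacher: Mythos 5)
Your proof is correct and follows essentially the same route as the paper: the decisive observations — that the biased density $2z$ makes the per-point discard probability exactly $(\tau(x)/m)^2$ so that expected discard counts become the quadratic forms $v^{\ast T}M_D v^{\ast}$, and that the one-dimensional bound \eqref{eq:constant-times-variance} supplies the needed gap — are identical to the paper's. The bookkeeping is organized slightly differently: the paper computes the expected \emph{total} number removed from $S$ via $v^{\ast T}M_S v^{\ast}$ and subtracts the expected number of good points removed using the cardinality identity $|S|-|S'| = |E\setminus E'| + |L'\setminus L|$, whereas you compute $\E_Z[|E\setminus E'|]$ directly via $v^{\ast T}M_E v^{\ast}$ and then expand $|E|M_E = |S|M_S - |G|M_G + |L|M_L$; these two bookkeepings are arithmetic duals of one another. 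One point in your favor: when passing from $v^{\ast T}M_S v^{\ast}\ge 4\,v^{\ast T}M_G v^{\ast}$ to a bound with $|S|$ and $|G|$ in front, you explicitly invoke $|G|\le(1+9\eps)|S|$ and only cash in a factor of $3$, whereas the paper's line ``$|S|v^{\ast T}M_S v^{\ast}\ge 4|G|v^{\ast T}M_G v^{\ast}$'' tacitly assumes $|S|\ge|G|$, which need not hold; your version is the one that is literally justified by the stated hypotheses.
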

\begin{proof}
Let $a= \max_{x \in S} |v^{\ast} \cdot x - \mu^S|$. 
Firstly, we look at the expected number of samples we reject:
\begin{align*}
\E_Z[|S'|]-|S| & = \E_Z\left[|S|\Pr_{X \in_u S}[|X-\mu^S| \geq a Z]\right] \\
& = |S| \int_{0}^{1} \Pr_{X \in_u S}\left[|v^{\ast} \cdot(X-\mu^S)| \geq a x \right] 2x dx \\
& = |S| \int_{0}^{a} \Pr_{X \in_u S}\left[|v^{\ast} \cdot(X-\mu^S)| \geq T \right] (2T /a)  dT \\
& = |S| \E_{X \in_u S}\left[(v^{\ast} \cdot(X-\mu^S))^2\right]/a \\
&=  (|S|/a) \cdot v^{\ast T} M_S v^{\ast} \;.
\end{align*}
Next, we look at the expected number of false positive samples we reject, 
i.e., those in $L' \setminus L$.
\begin{align*}
\E_Z[|L'|]-|L| & = \E_Z\left[(|G|-|L|)\Pr_{X \in_u G \setminus L}\left[|X-\mu^S| \geq T\right] \right] \\
& \leq \E_Z\left[|G|\Pr_{X \in_u G}[|v^{\ast} \cdot(X-\mu^S)| \geq a Z] \right] \\
& = |G| \int_{0}^{1} \Pr_{X \in_u G}[|v^{\ast} \cdot(X-\mu^S)| \geq a x] 2x \; dx \\
& = |G| \int_{0}^{a} \Pr_{X \in_u G}[|v^{\ast} \cdot(X-\mu^S)| \geq T ] (2T /a) \; dT \\
& \leq |G| \int_{0}^{\infty} \Pr_{X \in_u G}[|v^{\ast} \cdot(X-\mu^S)|\geq T ] (2T /a) \; dT  \\
& = |G| \E_{X \in_u G} \left[(v^{\ast} \cdot(X-\mu^S))^2 \right]/a \\ 
&= (|G|/a) \cdot v^{\ast T} M_G v^{\ast} \;.
\end{align*}
Using (\ref{eq:constant-times-variance}), we have $|S| v^{\ast T} M_S v^{\ast} \geq 4 |G| v^{\ast T} M_G v^{\ast}$ 
and so $\E_Z[S']-S \geq 3(\E_Z[L']-L)$. Now consider that 
$|S'|=|G|+|E'|-|L'|=|S| - |E|+|E'|+|L|-|L'|$, and thus 
$|S'|-|S|=|E|-|E'| + |L'|-|L|$. 
This yields that $|E|- \E_Z[|E'|] \geq 2(\E_Z[L']-L)$, 
which can be rearranged to $\E_Z[|E'| + 2|L'|] \leq |E| + 2|L|$.
\end{proof}

\begin{proof}[Proof of Proposition \ref{prop:iteration-2nd-moment}] 
If $\lambda^{\ast} \leq 9$, then we return the mean in Step \ref{step:return-mean}, 
and by Corollary \ref{cor:correct-mean}, $\|\mu^S-\mu^{P}\|_2 \leq O(\sqrt{\eps})$. 

If $\lambda^{\ast} > 9$, then we return $S'$. Since at least one element of $S$ has 
$|v^{\ast} \cdot X| =\max_{x \in S} |v^{\ast} \cdot X|$, 
whatever value of $Z$ is drawn, 
we still remove at least one element, 
and so have $S' \subset S$. 
By Lemma \ref{lem:expected-good}, 
we have $\E_Z[|E'| + 2|L'|] \leq |E| + 2|L|$.
\end{proof}

\begin{proof}[Proof of Theorem \ref{thm:second-moment-app}] 
Our input is a set $S$ of $N=\Theta((d/\eps) \log d)$
$\eps$-corrupted samples so that with probability $9/10$, 
$S$ is a $2\eps$-corrupted set of $\eps$-good samples for $P$ 
by Lemmas \ref{lem:samples-good} and \ref{lem:naive-prune}. 
We have a set $S=G \cup E' \setminus L$, where $G'$ is an $\eps$-good set, 
$|E| \leq 2\eps$, and $|L| \leq \eps$.
Then, we iteratively apply \textsc{FilterUnder2ndMoment} 
until it outputs an approximation to the mean. 
Since each iteration removes a sample, this must happen within $N$ iterations. 
The algorithm takes at most $\poly(N,d)=\poly(d,1/\eps)$ time.

As long as we can show that the conditions of Proposition \ref{prop:iteration-2nd-moment} hold in each iteration, 
it ensures that  $\|\mu^S-\mu^{P}\|_2 \leq O(\sqrt{\eps})$.
However, the condition that $|L| \leq 9\eps|S|$ 
need not hold in general. Although in expectation we reject many more samples in $E$ than $G$, 
it is possible that we are unlucky and reject many samples in $G$, 
which could make $L$ large in the next iteration. 
Thus, we need a bound on the probability that we ever have $|L| > 9 \eps$.

We analyze the following procedure: We iteratively run \textsc{FilterUnder2ndMoment} 
starting with a set $S_i \cup E_i \setminus L_i$ of samples with $S_0=S$ 
and producing a set $S_{i+1}=G \cup E_{i+1} \setminus L_{i+1}$. 
We stop if we output an approximation to the mean or if $|L_{i+1}| \geq 13 \eps |S|$. 
Since we do now always satisfy the conditions of Proposition \ref{prop:iteration-2nd-moment}, this 
gives that $\E_Z[|E_{i+1}| + |L_{i+1}|]=|E_i|+2|L_i|$.
This expectation is conditioned on the state of the algorithm after previous iterations, which is determined by $S_i$.
Thus, if we consider the random variables $X_i=|E_i|+2|L_i|$, then we have $\E[X_{i+1} | S_i] \leq X_i$, 
i.e., the sequence $X_i$ is a sub-martingale with respect to $X_i$. 
Using the convention that $S_{i+1}=S_i$, if we stop in less than $i$ iterations, 
and recalling that we always stop in $N$ iterations, the algorithm fails if and only if $|L_N| > 9 \eps |S|$. 
By a simple induction or standard results on sub-martingales, we have 
$\E[X_N] \leq X_0$. Now $X_0 = |E_0|+2|L_0| \leq 3 \eps |S|$. 
Thus, $\E[X_N] \leq 3\eps|S|$. By Markov's inequality, except with probability $1/6$, 
we have $X_N \leq 18 \eps |S|$. In this case, 
$|L_N| \leq X_N/2 \leq 9 \eps|S|$. 
Therefore, the probability that we ever have $|L_i| > 9 \eps$ is at most $1/6$.

By a union bound, the probability that the uncorrupted samples satisfy Lemma \ref{lem:samples-good} 
and Proposition \ref{prop:iteration-2nd-moment} applies to every iteration
is at least $9/10-1/6 \geq 2/3$. 
Thus, with at least $2/3$ probability, the algorithm outputs a vector $\wh{\mu}$ with $\|\wh{\mu}-\mu^{P}\|_2 \leq O(\sqrt{\eps})$.
\end{proof} 

\subsection{Robust Covariance Estimation} \label{ssec:cov}
In this subsection, we give a near sample-optimal efficient robust estimator for the covariance
of a zero-mean Gaussian density, thus proving Theorem~\ref{unknownCovarianceTheorem}.
Our algorithm is essentially identical to the filtering algorithm given in Section~8.2 of~\cite{DKKLMS}.
As in Section~\ref{sec:filter-subgaussian} the only difference is a weaker definition of the ``good set of samples'' (Definition~\ref{def:good-sample-cov}) 
and a concentration argument (Lemma~ \ref{lem:random-good-gaussian-cov}) showing that a random set of uncorrupted 
samples of the appropriate size is good with high probability. Given these, the analysis of this subsection follows straightforwardly
from the analysis in Section~8.2 of~\cite{DKKLMS} by plugging in the modified parameters.

\medskip

The algorithm \textsc{Filter-Gaussian-Unknown-Covariance} to robustly estimate the covariance of a mean $0$ Gaussian in \cite{DKKLMS}
is as follows:

\begin{algorithm}
\begin{algorithmic}[1]
\Procedure{Filter-Gaussian-Unknown-Covariance}{$S',\eps,\tau$}
\INPUT A multiset $S'$ such that there exists an $(\eps,\tau)$-good set $S$ with $\Delta(S, S') \le 2\eps$
\OUTPUT Either a set $S''$ with $\Delta(S,S'') < \Delta(S,S')$ or the parameters of a Gaussian $G'$ with $d_{TV}(G,G') = O(\epsilon\log(1/\epsilon)).$

Let $C>0$ be a sufficiently large universal constant.

\State Let $\Sigma'$ be the matrix $\E_{X\in_u S'}[XX^T]$ and let $G'$ be the mean $0$ Gaussian with covariance matrix $\Sigma'$.
\If {there is any $x\in S'$ so that $x^T(\Sigma')^{-1} x \geq Cd\log(|S'|/\tau)$}
\State \textbf{return} $S''=S'-\{x:x^T(\Sigma')^{-1} x \geq Cd\log(|S'|/\tau)\}$.\label{removeOutlierStep}
\EndIf

\State Compute an approximate eigendecomposition of $\Sigma'$ and use it to compute $\Sigma'^{-1/2}$
\State Let $x_{(1)}, \ldots, x_{(|S'|)}$ be the elements of $S'$.
\State For $i = 1, \ldots, |S'|$, let $y_{(i)} = \Sigma'^{-1/2} x_{(i)}$ and $z_{(i)} = y_{(i)}^{\otimes 2}$.
\State Let $T_{S'}= - I^\flat I^{\flat T} + (1/|S'|)\sum_{i=1}^{|S'|} z_{(i)} z_{(i)}^T.$ 
\State Approximate the top eigenvalue $\lambda^{\ast}$ and corresponding unit eigenvector $v^{\ast}$ of $T_{S'}.$.
\State Let $p^{\ast}(x) = \frac{1}{\sqrt{2}} ((\Sigma'^{-1/2} x)^T v^{\ast \sharp} (\Sigma'^{-1/2} x) - \tr(v^{\ast \sharp}))$

\If{$\lambda^{\ast} \leq (1+C\epsilon\log^2(1/\epsilon))Q_{G'}(p^{\ast})$}
\State \textbf{return} $G'$ \label{returnGStep}
\EndIf
\State  Let $\mu$ be the median value of $p^{\ast}(X)$ over $X\in S'$.
\State Find a $T \geq C'$ so that
$$
\Pr_{X\in_u S'} (|p^{\ast}(X)-\mu| \geq T + 4/3) \geq \tail(T, d, \eps, \tau)
$$
\label{thresholdStep}
\State \textbf{return} $S'' = \{X\in S': |p^{\ast}(X)-\mu| < T\}.$ \label{filterStep}

\EndProcedure
\end{algorithmic}
\caption{Filter algorithm for a Gaussian with unknown covariance matrix.}
\label{alg:filter-Gaussian-covariance}
\end{algorithm}

In \cite{DKKLMS}, we take $\tail(T, d, \eps, \tau)= 12 \exp(-T) + 3\epsilon/(d \log(N/\tau))^2$, 
where $N=\Theta((d\log(d/\eps\tau))^6/\eps^2)$ is the number of samples we took there. 

To get a near sample-optimal algorithms, we will need a weaker definition of a good set. 
To use this,  we will need to weaken the tail bound in the algorithm to $\tail(T, d, \eps, \tau)= \eps/(T^2 \log^2(T))$, 
when $ T \geq 10\log(1/\eps)$. For $T \leq 10\log(1/\eps)$, we take $\tail(T, d, \eps, \tau)=1$ 
so that we always choose $ T \geq 10\log(1/\eps)$. It is easy to show that the integrals of this tail bound 
used in the proofs of Lemma 8.19 and Claim 8.22 of \cite{DKKLMS} have similar bounds. 
Thus, our analysis here will sketch that these tail bounds hold for a set of $\Omega(d^2 \log^5(d/\eps\tau)/\eps^2)$ samples from the Guassian.

Firstly, we state the new, weaker, definition of a good set:

\begin{definition} \label{def:good-sample-cov}
Let $G$ be a Gaussian in $\R^d$ with mean $0$ and covariance $\Sigma$. 
Let $\epsilon>0$ be sufficiently small. We say that a multiset $S$ of points in $\R^d$ is $\eps$-good with respect to $G$ if the following hold:
\begin{enumerate}
\item \label{farPoints} For all $x\in S$, $x^T \Sigma^{-1} x < d +O(\sqrt{d}\log(d/\eps))$.
\item \label{covariance} We have that $\|\Sigma^{-1/2} \Cov(S) \Sigma^{-1/2} - I\|_F = O(\eps)$.
\item \label{cocovariance} For all even degree-$2$ polynomials $p$, we have that $\Var(p(S)) = \Var(p(G))(1+O(\eps))$.
\item \label{tails} For $p$ an even degree-$2$ polynomial with $\E[p(G)]=0$ and $\Var(p(G))=1$, and for any $T> 10\log(1/\eps)$ we have that
$$
\Pr_{x\in_u S}(|p(x)| > T) \leq \eps/(T^2 \log^2(T)).
$$
\end{enumerate}
\end{definition}

It is easy to see that the algorithm and analysis of ~\cite{DKKLMS} can be pushed through using the above weaker definition.
That is, if $S$ is a good set, then $G$ can be recovered to $\tilde O(\eps)$ error from an $\eps$-corrupted version of $S$. 
Our main task will be to show that random sets of the appropriate size are good with high probability.

\begin{proposition} \label{lem:random-good-gaussian-cov}
Let $N$ be a sufficiently large constant multiple of $d^2 \log^5(d/\eps)/\eps^2$. 
Then a set $S$ of $N$ independent samples from $G$ is $\eps$-good with respect to $G$ with high probability.
\end{proposition}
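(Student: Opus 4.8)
The plan is to reduce to the isotropic case and then verify the four properties of Definition~\ref{def:good-sample-cov} one at a time, mirroring the structure of the analogous argument for the sub-gaussian mean case (Lemma~\ref{lem:random-good-gaussian-mean}). Replacing every sample $x$ by $\Sigma^{-1/2}x$, we may assume $\Sigma = I$ and $G = \normal(0, I)$, since all four properties are invariant under this linear change of coordinates. Property~\ref{farPoints} then reads $\|x\|_2^2 < d + O(\sqrt d\log(d/\eps))$ for every $x \in S$, and follows from the standard $\chi^2$ concentration bound $\Pr_{X \sim G}[\, | \, \|X\|_2^2 - d \, | > t \,] \le 2\exp(-c\min(t^2/d, t))$ together with a union bound over the $N = \poly(d/\eps)$ samples. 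Property~\ref{covariance} is a routine empirical-covariance estimate: writing $\Cov(S) = \frac1N\sum_i x_ix_i^T - \mu^S(\mu^S)^T$ with $\|\mu^S\|_2^2 = O(d/N)$, one has $\E\bigl[\|\frac1N\sum_i x_ix_i^T - I\|_F^2\bigr] = (d^2+d)/N = O(\eps^2)$, and a concentration bound (Gaussian concentration of the Frobenius norm, or Theorem~5.50 of~\cite{Vershynin} applied to the spectral norm) upgrades this to $\|\Sigma^{-1/2}\Cov(S)\Sigma^{-1/2} - I\|_F = O(\eps)$ with high probability once $N = \Omega(d^2\log(1/\tau)/\eps^2)$.

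Property~\ref{cocovariance} asks that the empirical variance of every even degree-$2$ polynomial match the true one up to a $(1 + O(\eps))$ factor. Flattening, write $z_i = x_i^{\otimes 2}$; since $\Var_G(x^TAx) = 2\|A\|_F^2$ for symmetric $A$ and the empirical variance is the quadratic form of $A^\flat$ against $T_S := \frac1N\sum_i z_iz_i^T - \bigl(\frac1N\sum_i z_i\bigr)\bigl(\frac1N\sum_i z_i\bigr)^T$, the claim amounts to $\|T_S - T_G\|_2 = O(\eps)$ on the subspace of symmetric tensors, where $T_G = 2I$ there. The direction $I^\flat$ is distinguished: along it $\langle A^\flat, z\rangle \propto \|x\|_2^2$, the relevant second moment is of order $d^3$, and a naive matrix concentration bound would cost an extra factor of $d$. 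We therefore peel $I^\flat$ off and control $\frac1N\sum_i\|x_i\|_2^4$ and the couplings between $I^\flat$ and $(I^\flat)^{\perp}$ by scalar and vector Bernstein bounds — here the $\Theta(d)$-sized fluctuations of $\|x_i\|_2^2$ about $d$ cancel between $\frac1N\sum z_iz_i^T$ and $(\frac1N\sum z_i)^{\otimes 2}$ up to lower order — while on the orthogonal complement, where the variance proxy is only $O(d^2)$, a matrix Bernstein inequality (whose only dependence on the ambient dimension is a harmless $\log(\dim)$ factor), applied after conditioning on Property~\ref{farPoints} to bound $\|z_iz_i^T\|_2 = \|x_i\|_2^4 = O(d^2\log^2(d/\eps))$, gives the bound for $N = \Omega(d^2\,\poly\log(d/\eps)/\eps^2)$.

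Property~\ref{tails} is the crux and the main obstacle. Fix a symmetric $A$ with $\|A\|_F = 1$, so $p_A(x) = \frac1{\sqrt2}(x^TAx - \tr A) = \frac1{\sqrt2}\sum_j\lambda_j(g_j^2 - 1)$ with $\sum_j\lambda_j^2 = 1$; the Bernstein bound for quadratic forms gives $q_{A,T} := \Pr_{X \sim G}[|p_A(X)| > T] \le 2\exp(-c\min(T^2, T/\|A\|_2))$, at most $2\exp(-cT)$ for $T \ge 1$. We must show $\Pr_{X \in_u S}[|p_A(X)| > T] \le \beta_T := \eps/(T^2\log^2 T)$ uniformly over all such $A$ and all $T > 10\log(1/\eps)$, by combining the moment-generating-function estimate from the sub-gaussian mean analysis (Section~\ref{sec:filter-subgaussian}) with a covering argument over the continuum of $(A,T)$. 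For a fixed net point $A$ and a fixed dyadic $T$, the identity $\E_S\bigl[\exp(\lambda N\Pr_S[|p_A| > T])\bigr] \le (q_{A,T}e^\lambda + 1)^N$ together with Markov's inequality at $\lambda = \log\bigl(\beta_T/(2 q_{A,T})\bigr)$ yields a failure probability of $\exp\bigl(-\Omega(N\beta_T\log(\beta_T/q_{A,T}))\bigr)$, which is small since $\beta_T \gg q_{A,T}$ throughout the range. For $T$ up to $\poly\log(d/\eps)$ one uses a $1/\poly(d/\eps)$-net of the unit Frobenius sphere of symmetric matrices (of size $\exp(O(d^2\log(d/\eps)))$, fine enough that passing to a net point changes $p_A$ by at most $1$ on every point of $S$, using Property~\ref{farPoints}); there $\beta_T\log(\beta_T/q_{A,T}) = \Omega(\eps/\poly\log(d/\eps))$, so $N = \Omega(d^2\poly\log(d/\eps)/\eps)$ suffices. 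For larger $T$, up to $O(d)$ (beyond which Property~\ref{farPoints} forces $\Pr_S[|p_A| > T] = 0$ for every unit-Frobenius $A$), the crude net is too expensive against the now-smaller $\beta_T$, so one stratifies $A$ by its largest absolute eigenvalue $\|A\|_2$: matrices with spread-out spectrum have a super-exponentially small $q_{A,T}$ that the crude net can afford, while matrices with a large top eigenvalue are, after discarding their small-eigenvalue part (which by the previous case contributes a negligible tail), within small Frobenius distance of a bounded-rank family admitting a much smaller net; the interplay of these cases is what the $\poly\log$ factors account for.

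Finally, a union bound over the (at most four) failure events completes the proof, with the overall sample requirement $N = \Theta\bigl(d^2\log^5(d/\eps)/\eps^2\bigr)$ driven by the matrix-concentration step for Property~\ref{cocovariance} (the $\eps^{-2}$) and the covering arguments for Property~\ref{tails} (the logarithmic powers). The delicate point is entirely in Property~\ref{tails}: one must push the target tail $\eps/(T^2\log^2 T)$ against the true Gaussian tail over the whole range $10\log(1/\eps) < T = O(d)$ while keeping the covering number — or, at large $T$, the bounded-rank covering number — small enough that $\widetilde O(d^2/\eps^2)$ samples still union-bound through, and it is precisely the $\|A\|_2$-stratification at large $T$, trading covering number against tail decay, that prevents the sample size from blowing up to a higher power of $d$ (as it did in the original analysis of~\cite{DKKLMS}).
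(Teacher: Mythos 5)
Your overall skeleton is right: reduce to $\Sigma = I$, verify the four properties separately, and flag Property~\ref{tails} as the hard one where a covering argument alone (cost $\exp(\Theta(d^2\log(d/\eps)))$) against a bare $\exp(-T)$ tail would force $N \sim d^3$. Properties~\ref{farPoints} and~\ref{covariance} are handled as in the paper. For Property~\ref{cocovariance} you propose a direct matrix-Bernstein argument after peeling off the $I^\flat$ direction, whereas the paper takes the orthonormal basis $\{p_i\}$ of even mean-zero degree-$2$ polynomials, forms $M_{ij}=\E_S[p_ip_j]-\delta_{ij}$, and bounds $\|M\|_2$ by a union bound over a $2^{O(d^2)}$-net, showing each $\Var(p_v(S))=1+O(\eps)$ with failure probability $2^{-\Omega(d^2)}$. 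Your route is plausible and more explicit, but it is genuinely different, and the paper's version is what is actually needed downstream.

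The real gap is in your treatment of Property~\ref{tails}. You propose a single two-case split: ``spread-out spectrum'' versus ``large top eigenvalue, discard the small-eigenvalue part.'' As stated this does not close. If you fix one threshold $\lambda_0$ and write $A=A_{\mathrm{top}}+A_{\mathrm{rest}}$, then $A_{\mathrm{rest}}$ still has $\|A_{\mathrm{rest}}\|_F$ potentially close to $1$, so it lives in a family of covering number $\exp(\Theta(d^2\log(d/\eps)))$ \emph{regardless} of $\lambda_0$; and $A_{\mathrm{top}}$ has rank $\le 1/\lambda_0^2$, so its covering number scales like $\exp(O(d\lambda_0^{-2}\log(d/\eps)))$. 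The Chernoff exponent you can afford for the $A_{\mathrm{top}}$ piece scales like $N\beta_T \cdot T \sim d^2/(\eps T)$ up to logs (since $\|A_{\mathrm{top}}\|_2\le 1$ only gives $q\sim e^{-cT}$), and one checks that no single choice of $\lambda_0$ makes both $d^2\log(d/\eps)\lesssim N\beta_T\cdot T/\lambda_0$ (for the rest) and $d\lambda_0^{-2}\log(d/\eps)\lesssim N\beta_T\cdot T$ (for the top) hold uniformly for $T$ out to $\Theta(d)$. The ``interplay of cases'' you allude to is exactly the missing content.

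What the paper does instead is a \emph{multi-scale} decomposition applied to \emph{every} $A$: writing $A_k$ for the projection onto the top $k$ eigenvalues, it sets $p_1 \propto A_1$, $p_2 \propto A_2-A_1$, $p_4\propto A_4-A_2$, \ldots, so that $p=2(p_1+p_2+\cdots+p_{2^t}+p_d)$ with each $p_k$ in the class $\mathcal{P}_k$ of polynomials whose associated matrix has rank $\le k$ and operator norm $\le 1/\sqrt{k}$ (the operator-norm bound comes for free from $\|A\|_F=1$). Then a pigeonhole argument gives: if $|p(x)|>T$ then some $|q_k(x)|>(T-1)/(2\log d)$. The class $\mathcal{P}_k$ admits a cover $\mathcal{C}_k$ of size $2^{O(dk\log(d/\eps))}$, and crucially the Hanson--Wright tail for $q\in\mathcal{P}_k$ is $\exp(-\Omega(\min(T^2,T\sqrt{k})))$ — the net size scales \emph{linearly} in $k$ while the exponent in the tail improves like $\sqrt{k}$ and $T$ is bounded by $d/\sqrt{k\eps}$ on $\mathcal{P}_k$. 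Plugging in $N=\Theta(d^2\log^5(d/\eps)/\eps^2)$, the Chernoff failure probability $\exp(-\Omega(N\eps/\log^4\cdot\min(1,\sqrt{k}/T)))\le\exp(-\Omega(dk\log(d/\eps)))$ beats the cover $\mathcal{C}_k$ for \emph{every} $k$ simultaneously. This balancing across all dyadic $k$ — not just two regimes — is what keeps the sample complexity at $\widetilde{O}(d^2/\eps^2)$ rather than a higher power of $d$, and it is the piece your proposal leaves out.
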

\begin{proof}
First, note that it suffices to prove this when $G=N(0,I)$.

Condition \ref{farPoints} follows by standard concentration bounds on $\|x\|_2^2$.

Condition \ref{covariance} follows by estimating the entry-wise error between $\Cov(S)$ and $I$.

Condition \ref{cocovariance} is slightly more involved. 
Let $\{p_i\}$ be an orthonormal basis for the set of even, degree-$2$, mean-$0$ polynomials with respect to $G$. 
Define the matrix $M_{i,j} = \E_{x\in_u S}[p_i(x)p_j(x)]-\delta_{i,j}$. This condition is equivalent to $\|M\|_2 = O(\eps)$. 
Thus, it suffices to show that for every $v$ with $\|v\|_2 = 1$ that $v^TMv = O(\eps)$. 
It actually suffices to consider a cover of such $v$'s. Note that this cover will be of size $2^{O(d^2)}$. 
For each $v$, let $p_v = \sum_i v_i p_i$. We need to show that $\Var(p_v(S)) = 1 +O(\eps)$. 
We can show this happens with probability $1-2^{-\Omega(d^2)}$, and thus it holds for all $v$ in our cover by a union bound.

Condition \ref{tails} is substantially the most difficult of these conditions to prove. 
Naively, we would want to find a cover of all possible $p$ and all possible $T$, 
and bound the probability that the desired condition fails. Unfortunately, the best a priori bound on $\Pr(|p(G)| > T)$ 
are on the order of $\exp(-T)$. As our cover would need to be of size $2^{d^2}$ or so, 
to make this work with $T=d$, we would require on the order of $d^3$ samples in order to make this argument work.

However, we will note that this argument is sufficient to cover the case of $T<10\log(1/\eps)\log^2(d/\eps)$.

Fortunately, most such polynomials $p$ satisfy much better tail bounds. 
Note that any even, mean zero polynomial $p$ can be written in the form $p(x) = x^T A x - \tr(A)$ for some matrix $A$. 
We call $A$ the associated matrix to $p$. We note by the Hanson-Wright inequality that $\Pr(|p(G)| > T) = \exp(-\Omega(\min((T/\|A\|_F)^2,T/\|A\|_2))).$ 
Therefore, the tail bounds above are only as bad as described when $A$ has a single large eigenvalue. 
To take advantage of this, we will need to break $p$ into parts based on the size of its eigenvalues. We begin with a definition:

\begin{definition}
Let $\mathcal{P}_{k}$ be the set of even, mean-$0$, degree-$2$ polynomials, so that the associated matrix $A$ satisfies:
\begin{enumerate}
\item $\rank(A)\leq k$
\item $\|A\|_2 \leq 1/\sqrt{k}$.
\end{enumerate}
\end{definition}
Note that for $p\in \mathcal{P}_k$ that $|p(x)| \leq |x|^2/\sqrt{k} + \sqrt{k}$.

Importantly, any polynomial can be written in terms of these sets.
\begin{lemma}
Let $p$ be an even, degree-$2$ polynomial with $\E[p(G)] = 0, \Var(p(G))=1$. 
Then if $t=\lfloor \log_2(d) \rfloor$, it is possible to write $p=2(p_1+p_2+\ldots+p_{2^t}+p_d)$ where $p_k\in \mathcal{P}_k$.
\end{lemma}
\begin{proof}
Let $A$ be the associated matrix to $p$. 
Note that $\|A\|_F = \Var{p} = 1$. Let $A_{k}$ be the matrix corresponding to the top $k$ eigenvalues of $A$. 
We now let $p_1$ be the polynomial associated to $A_1/2$, $p_2$ be associated to $(A_2-A_1)/2$, 
$p_4$ be associated to $(A_4-A_2)/2$, and so on. It is clear that $p=2(p_1+p_2+\ldots+p_{2^t}+p_d)$. 
It is also clear that the matrix associated to $p_k$ has rank at most $k$. 
If the matrix associated to $p_k$ had an eigenvalue more than $1/\sqrt{k}$, 
it would need to be the case that the $k/2^{nd}$ largest eigenvalue of $A$ had size at least $2/\sqrt{k}$. 
This is impossible since the sum of the squares of the eigenvalues of $A$ is at most $1$.

This completes our proof.
\end{proof}

We will also need covers of each of these sets $\mathcal{P}_k$.
\begin{lemma}
For each $k$, there exists a set $\mathcal{C}_k\subset \mathcal{P}_k$ so that
\begin{enumerate}
\item For each $p\in \mathcal{P}_k$ there exists a $q\in \mathcal{C}_k$ so that $\|p(G)-q(G)\|_2 \leq (\eps/d)^2$.
\item $|\mathcal{C}_k| = 2^{O(dk\log(d/\eps))}.$
\end{enumerate}
\end{lemma}
\begin{proof}
We note that any such $p$ is associated to a matrix $A$ of the form $A = \sum_{i=1}^k \lambda_i v_i v_i^T$, 
for $\lambda_i \in [0,1/\sqrt{k}]$ and $v_i$ orthonormal. It suffices to let $q$ correspond to the matrix 
$A' = \sum_{i=1}^k \mu_i w_i w_i^T$ for with $|\lambda_i -\mu_i| < (\eps/d)^3$ and $|v_i-w_i| < (\eps/d)^3$ for all $i$. 
It is easy to let $\mu_i$ and $w_i$ range over covers of the interval and the sphere with appropriate errors. 
This gives a set of possible $q$'s of size $2^{O(dk\log(d/\eps))}$ as desired. 
Unfortunately, some of these $q$ will not be in $\mathcal{P}_k$ as they will have eigenvalues that are too large. 
However, this is easily fixed by replacing each such $q$ by the closest element of $\mathcal{P}_k$. 
This completes our proof.
\end{proof}

We next will show that these covers are sufficient to express any polynomial.
\begin{lemma}
Let $p$ be an even degree-$2$ polynomial with $\E[p(G)]=0$ and $\Var(p(G))=1$. 
It is possible to write $p$ as a sum of $O(\log(d))$ elements of some $\mathcal{C}_k$ plus another polynomial of $L^2$ norm at most $\eps/d$.
\end{lemma}
\begin{proof}
Combining the above two lemmas we have that any such $p$ can be written as
$$
p = (q_1 + p_1) + (q_2 + p_2) + \ldots (q_{2^t}+p_{2^t}) + (q_d+p_d) = q_1+q_2+\ldots+q^{2^t}+q^d + p' \;,
$$
where $q_k$ above is in $\mathcal{C}_k$ and $\|p_k(G)\|_2 < (\eps/d)^2$. 
Thus, $p'=p_1+p_2+\ldots+p_{2^t}+p_d$ has $\|p'(G)\|_2 \leq (\eps/d)$. 
This completes the proof.
\end{proof}
The key observation now is that if $|p(x)| \geq T$ for $\|x\|_2 \leq \sqrt{d/\eps}$, 
then writing $p=q_1+q_2+q_4+\ldots+q_d+p'$ as above, it must be the case that 
$|q_k(x)| > (T-1)/(2\log(d))$ for some $k$. Therefore, to prove our main result, 
it suffices to show that, with high probability over the choice of $S$, 
for any $T\geq 10\log(1/\eps)\log^2(d/\eps)$ and any $q\in \mathcal{C}_k$ 
for some $k$, 
that $\Pr_{x\in_u S}(|q(x)| > T/(2\log(d))) < \eps / (2 T^2 \log^2(T) \log(d))$. 
Equivalently, it suffices to show that for $T\geq 10 \log(1/\eps)\log(d/\eps)$ it holds 
$\Pr_{x\in_u S}(|q(x)| > T/(2\log(d))) < \eps / (2 T^2 \log^2(T) \log^2(d))$. 
Note that this holds automatically for $T>(d/\eps)$, as $p(x)$ cannot possibly be that large 
for $\|x\|_2 \leq \sqrt{d/\eps}$. Furthermore, note that losing a constant factor in the probability, 
it suffices to show this only for $T$ a power of $2$.

Therefore, it suffices to show for every $k\leq d$, every $q\in \mathcal{C}_k$ 
and every $d/\sqrt{k\eps} \gg T \gg \log(1/\eps)\log(d/\eps)$ that 
with probability at least $1-2^{-\Omega(dk\log(d/\eps))}$ over the choice of $S$ we have that 
$\Pr_{x\in_u S}(|q(x)|> T) \ll \eps/(T^2 \log^4(d/\eps))$. However, by the Hanson-Wright inequality, 
we have that 
$$\Pr(|q(G)| > T) = \exp(-\Omega(\min(T^2,T\sqrt{k}))) < (\eps/(T^2 \log^4(d/\eps)))^2 \;.$$ 
Therefore, by Chernoff bounds, the probability that more than a $\eps/(T^2 \log^4(d/\eps))$-fraction of the elements of $S$ satisfy this property is at most
\begin{align*}
\exp(-\Omega(\min(T^2,T\sqrt{k}))|S|\eps/(T^2 \log^4(d/\eps))) & = \exp(-\Omega(|S|\eps/(\log^4(d/\eps))\min(1,\sqrt{k}/T)))\\
& \leq \exp(-\Omega(|S|\eps^2/(\log^4(d/\eps))k/d))\\
& \leq \exp(-\Omega(dk\log(d/\eps))) \;,
\end{align*}
as desired.

This completes our proof.
\end{proof}

\section{Omitted Details from Section 5}
\subsection{Full description of the distributions for experiments}
\label{sec:exp-setup}
Here we formally describe the distributions we used in our experiments.
In all settings, our goal was to find noise distributions so that noise points were not ``obvious'' outliers, in the sense that there is no obvious pointwise pruning process which could throw away the noise points, which still gave the algorithms we tested the most difficulty.
We again remark that while other algorithms had varying performances depending on the noise distribution, it seemed that the performance of ours was more or less unaffected by it.

\paragraph{Distribution for the synthetic mean experiment}
Our uncorrupted points were generated by $\normal (\mu, I)$, where $\mu$ is the all-ones vector.
Our noise distribution is given as
\[
N = \frac{1}{2} \Pi_1 + \frac{1}{2} \Pi_2 \; ,
\]
where $\Pi_1$ is the product distribution over the hypercube where every coordinate is $0$ or $1$ with probability $1/2$, and $\Pi_2$ is a product distribution where the first coordinate is ether $0$ or $12$ with equal probability, the second coordinate is $-2$ or $0$ with equal probability, and all remaining coordinates are zero.

\paragraph{Distribution for the synthetic covariance experiment}
For the isotropic synthetic covariance experiment, our uncorrupted points were generated by $\normal (0, I)$, and the noise points were all zeros. 
For the skewed synthetic covariance experiment, our uncorrupted points were generated by $\normal (0, I + 100 e_1 e_1^T)$, where $e_1$ is the first unit vector, and our noise points were generated as follows: we took a fixed random rotation of points of the form $Y_i \sim \Pi$, where $\Pi$ is a product distribution whose first $d/2$ coordinates are each uniformly selected from $\{-0.5, 0, 0.5\}$, and whose next $d / 2 - 1$  coordinates are each $0.8 \times A_i$, where for each coordinate $i$, $A_i$ is an independent random integer between $-2$ and $2$, and whose last coordinate is a uniformly random integer between $[-100, 100]$.

\paragraph{Setup for the semi-synthetic geographic experiment}
We took the 20 dimensional data from \cite{novembre2008genes}, which was diagonalized, and randomly rotated it.
This was to simulate the higher dimensional case, since the singular vectors that \cite{novembre2008genes} obtained did not seem to be sparse or analytically sparse.
Our noise was distributed as $\Pi$, where $\Pi$ is a product distribution whose first $d/2$ coordinates are each uniformly random integers between $0$ and $2$ and whose last $d / 2$ coordinates are each uniformly randomly either $2$ or $3$, all scaled by a factor of $1/24$.

\subsection{Comparison with other robust PCA methods on semi-synthetic data}
\label{sec:exp-others}
In addition to comparing our results with simple pruning techniques, as we did in Figure 3 in the main text, we also compared our algorithm with implementations of other robust PCA techniques from the literature with accessible implementations.
In particular, we compared our technique with RANSAC-based techniques, \texttt{LRVCov}, two SDPs (\cite{CLMW11, xu2010robust}) for variants of robust PCA, and an algorithm proposed by \cite{CLMW11} to speed up their SDP based on alternating descent.
For the SDPs, since black box methods were too slow to run on the full data set (as \cite{CLMW11} mentions, black-box solvers for the SDPs are impractical above perhaps 100 data points), we subsample the data, and run the SDP on the subsampled data.
For each of these methods, we ran the algorithm on the true data points plus noise, where the noise was generated as described above.
We then take the estimate of the covariance it outputs, and project the data points onto the top two singular values of this matrix, and plot the results in Figure \ref{fig:europe2}.

Similar results occurred for most noise patterns we tried.
We found that only our algorithm and \texttt{LRVCov} were able to reasonably reconstruct Europe, in the presence of this noise.
It is hard to judge qualitatively which of the two maps generated is preferable, but it seems that ours stretches the picture somewhat less than \texttt{LRVCov}.

\begin{figure*}
\begin{tikzpicture}
\node[inner sep=0pt] (pruning) at (.2\textwidth, 0\textheight)
    {\includegraphics[width=.44\textwidth]{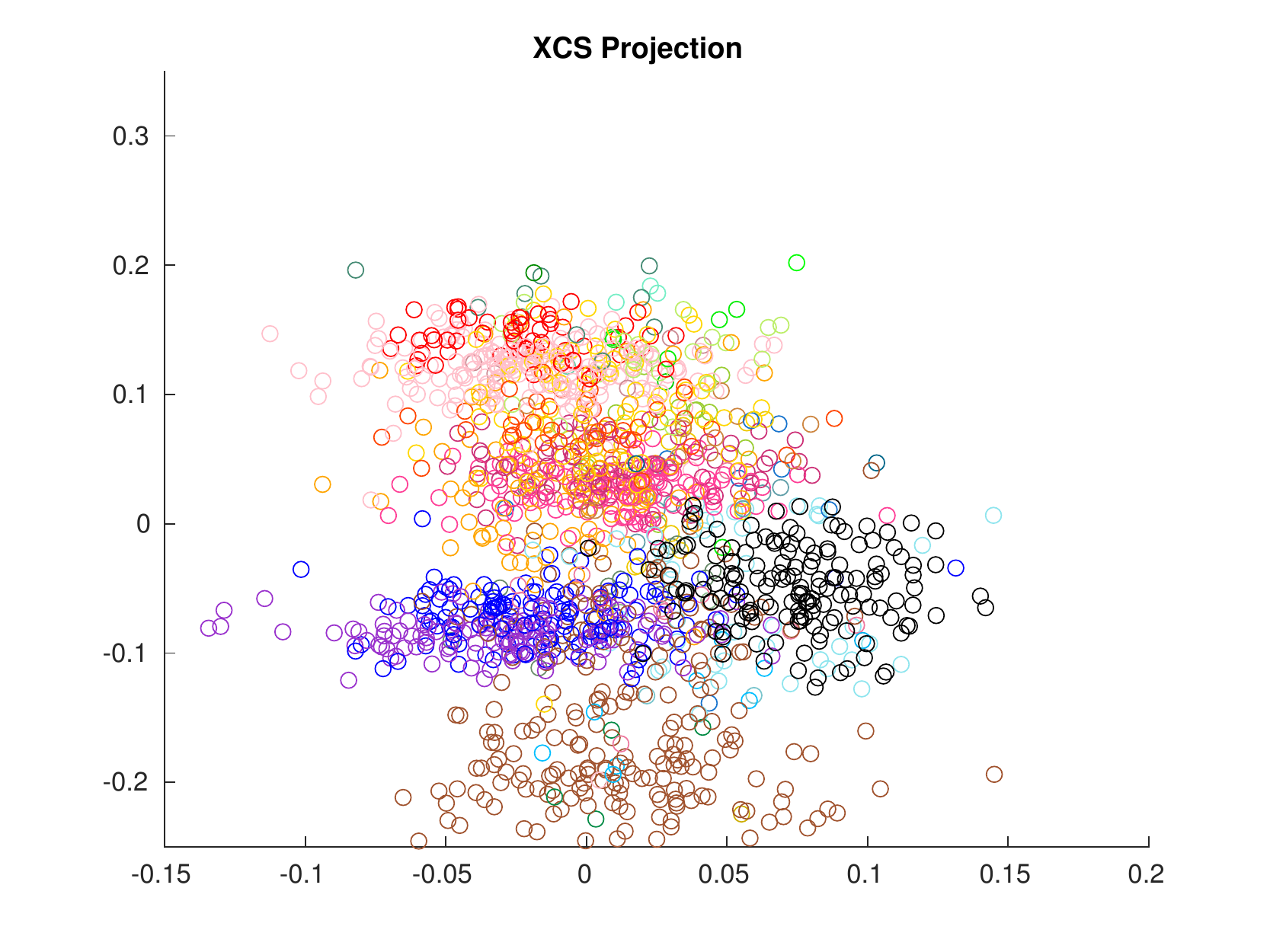}};
\node[inner sep=0pt] (pruning) at (.75\textwidth, .23 \textheight)
    {\includegraphics[width=.44\textwidth]{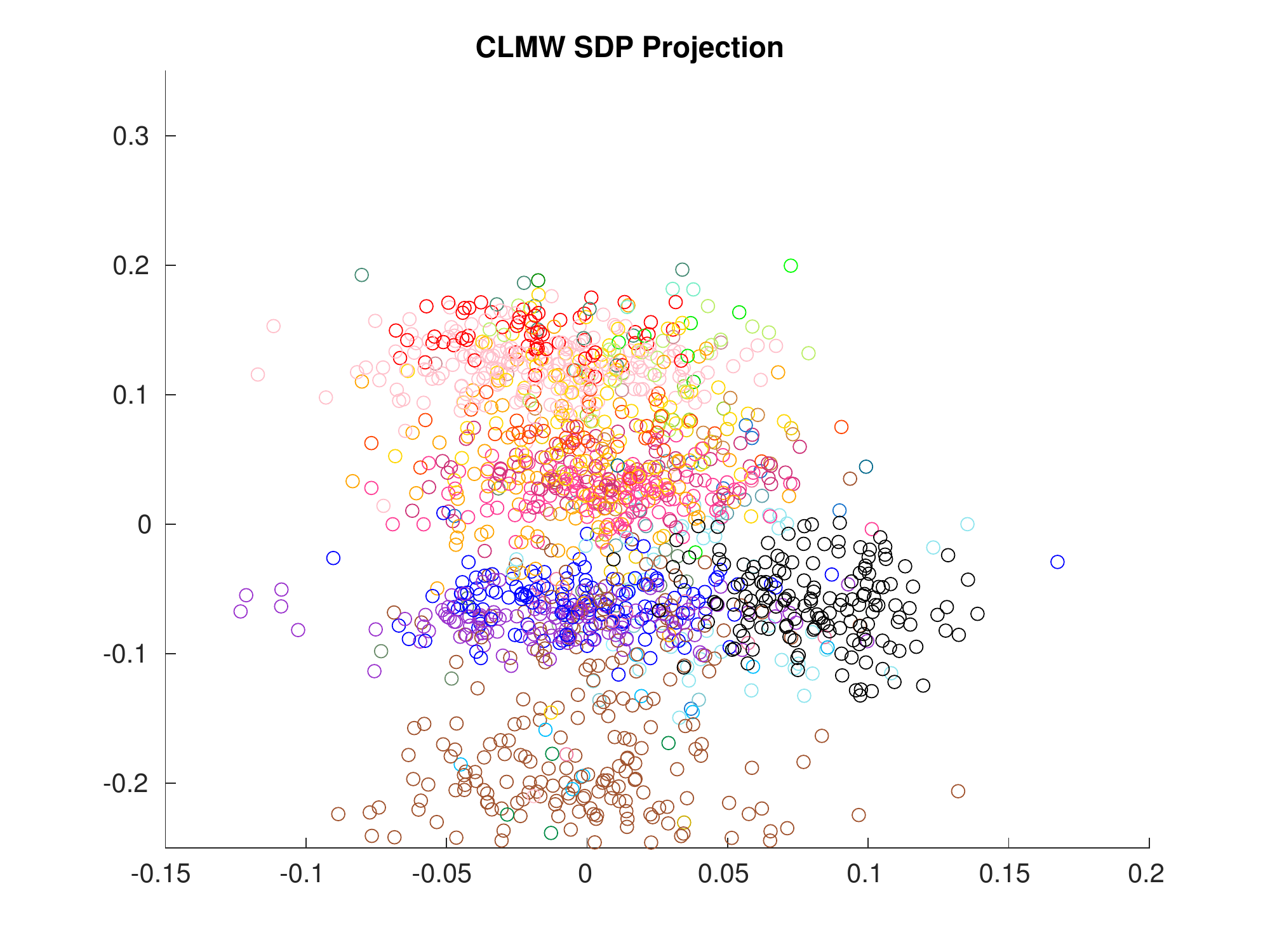}};
\node[inner sep=0pt] (original) at (.2\textwidth, .23\textheight)
    {\includegraphics[width=.44\textwidth]{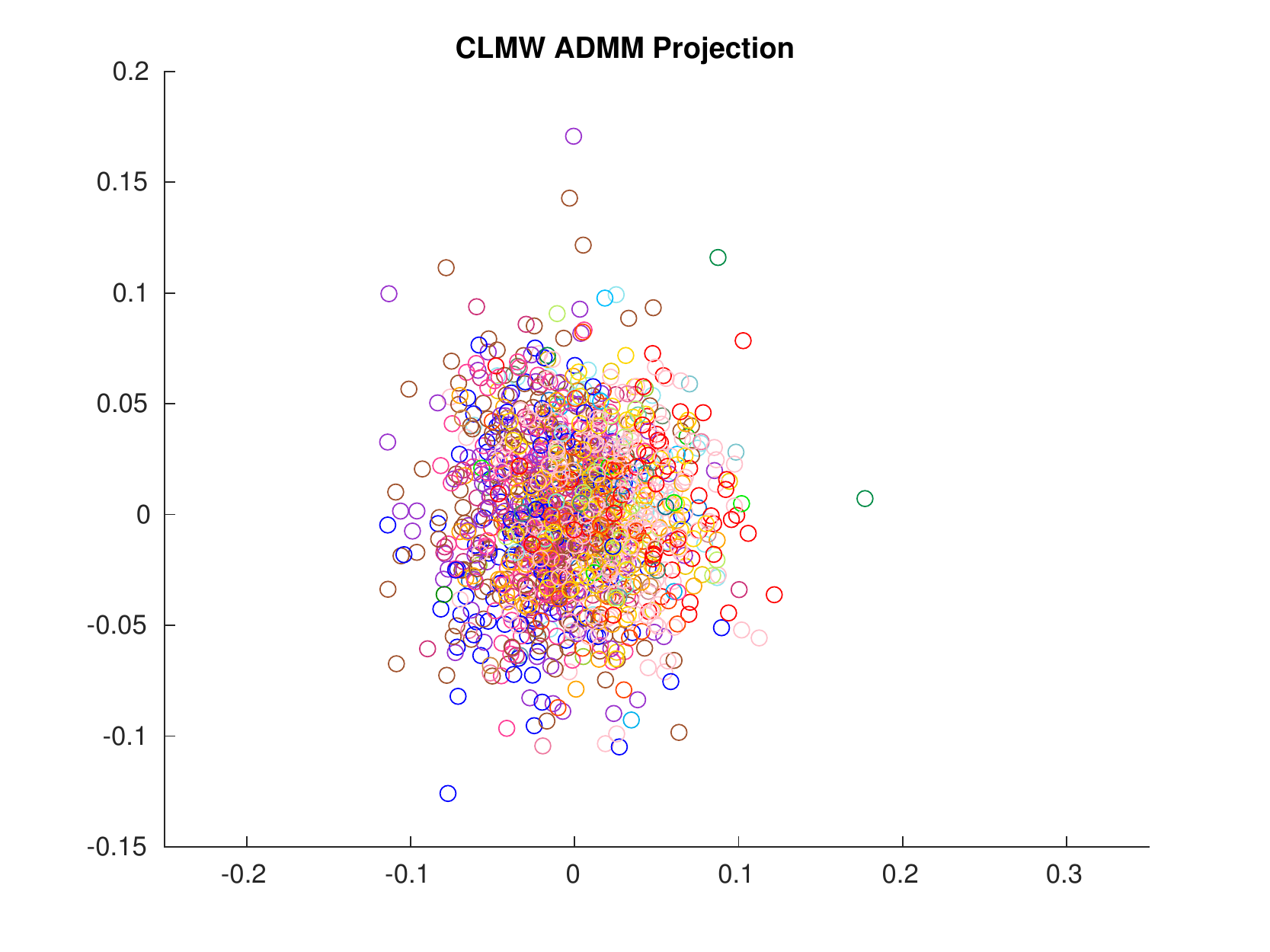}};
\node[inner sep=0pt] (pruning) at (.2\textwidth, .46\textheight)
    {\includegraphics[width=.44\textwidth]{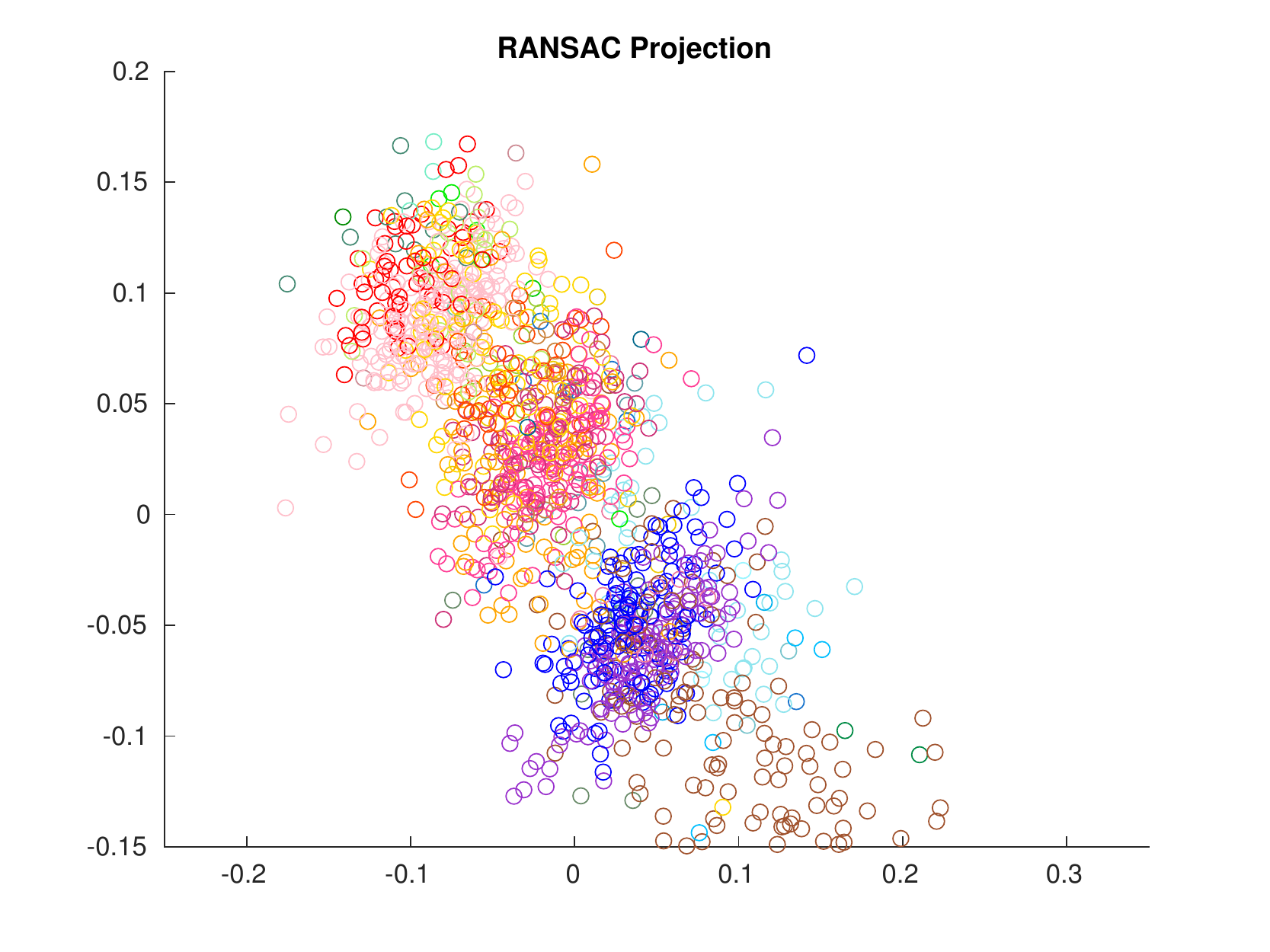}};
\node[inner sep=0pt] (pruning) at (.75 \textwidth, .46\textheight)
    {\includegraphics[width=.44\textwidth]{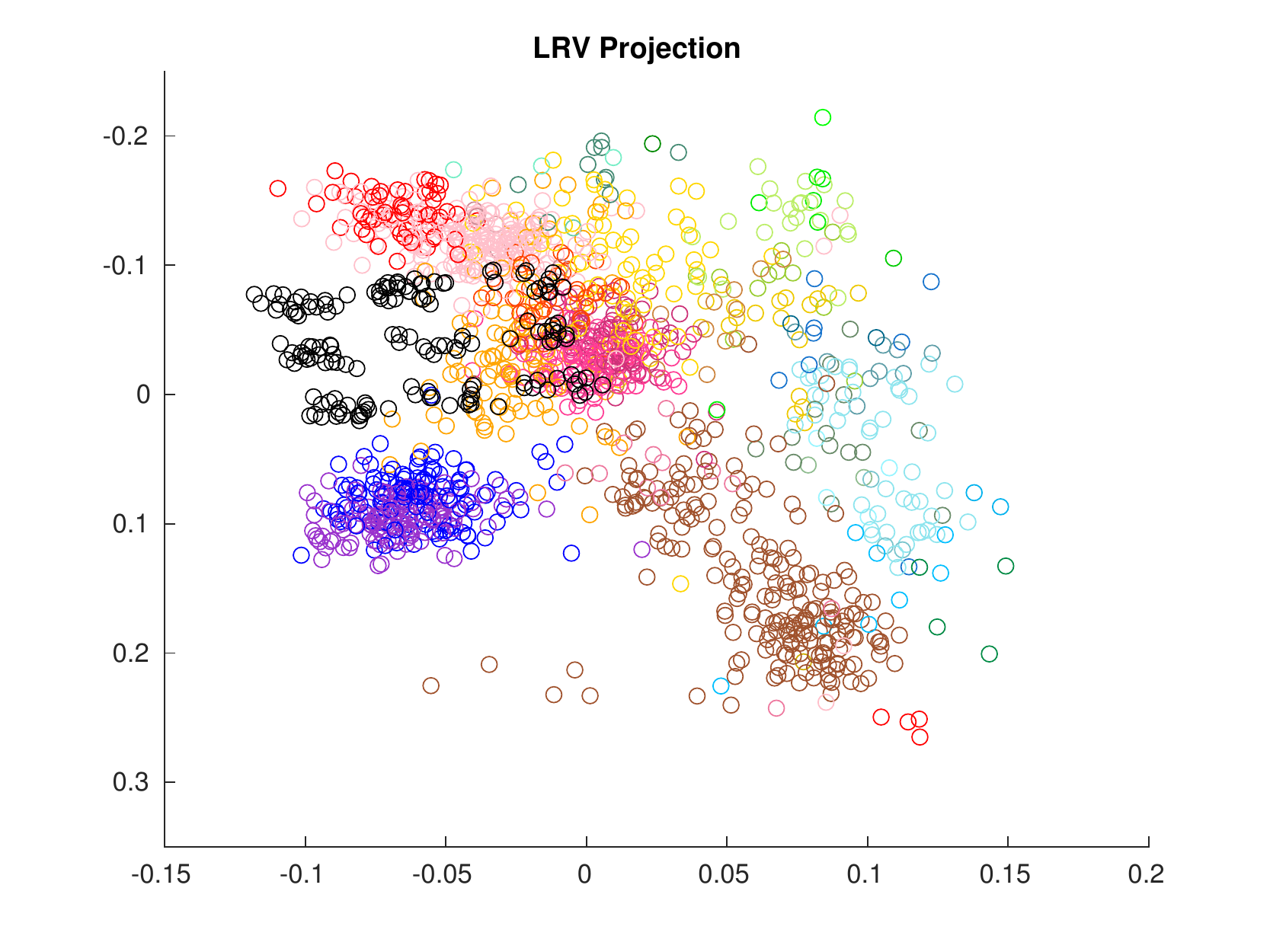}};
\node[inner sep=0pt] (pruning) at (.2 \textwidth, .69\textheight)
    {\includegraphics[width=.44\textwidth]{plot_data/original-eps-converted-to.pdf}};
\node[inner sep=0pt] (pruning) at (.75\textwidth, .69\textheight)
    {\includegraphics[width=.44\textwidth]{plot_data/filterproj-eps-converted-to.pdf}};
\end{tikzpicture}
\caption{Comparison with other robust methods on the Europe semi-synthetic data. From left to right, top to bottom: the original projection without noise, what our algorithm recovers, RANSAC, \texttt{LRVCov}, the ADMM method proposed by \cite{CLMW11}, the SDP proposed by \cite{xu2010robust} with subsampling, and the SDP proposed by \cite{CLMW11} with subsampling.}
\label{fig:europe2}
\end{figure*}

\end{document}